\numberwithin{equation}{section}
\theoremstyle{plain}
\newtheorem{thm}{Theorem}[section]
\newtheorem{prop}[thm]{Proposition}
\newtheorem{lem}[thm]{Lemma}
\newtheorem{ass}[thm]{Assumption}
\newtheorem{condition}[thm]{Condition}
\newtheorem{defn}[thm]{Definition}
\newtheorem{rem}[thm]{Remark}
\newcommand{\Proba}{\mathds{P}}
\newcommand{\R}{\mathds{R}}
\newcommand{\E}{\mathds{E}}
\DeclareMathOperator*{\argmin}{argmin}
\DeclareMathOperator*{\supp}{supp}
\renewcommand*{\ast}{\star}
\DeclarePairedDelimiter{\nor}{\lVert}{\rVert}
\newcommand{\norm}[1]{\left\Vert #1\right\Vert}
\newcommand{\abs}[1]{\left\vert #1\right\vert}
\begin{document}

\begin{frontmatter}
\title{High-dimensional robust regression and outliers detection with SLOPE}
\runtitle{SLOPE for outliers detection}

\begin{aug}
\author{\fnms{Alain} \snm{Virouleau}\thanksref{t1,m1}\ead[label=e1]{alain.virouleau@polytechnique.edu}},
\author{\fnms{Agathe} \snm{Guilloux}\thanksref{m1,m2}\ead[label=e2]{agathe.guilloux@univ-evry.fr}},
\author{\fnms{St\'ephane} \snm{Ga\"iffas}\thanksref{t1,m4,m1}\ead[label=e3]{stephane.gaiffas@polytechnique.edu}}
\and
\author{\fnms{Malgorzata} \snm{Bogdan}\thanksref{m3}
\ead[label=e4]{malgorzata.bogdan@uwr.edu.pl}}

\thankstext{t1}{Data Science Initiative of Ecole polytechnique}

\runauthor{A. Virouleau, A. Guilloux, S. Ga\"iffas and M. Bogdan}

\affiliation{Universit\'e Paris Diderot\thanksmark{m4}, Ecole polytechnique\thanksmark{m1}, Universit\'e d'Evry\thanksmark{m2} 
\and University of Wroclaw\thanksmark{m3}}

\address{
Alain Virouleau \\
CMAP, Ecole Polytechnique \\
Route de Saclay \\
91128 Palaiseau cedex, France \\
\printead{e1}}

\address{
Agathe Guilloux \\
LAMME, Universit\'e d'Evry, CNRS \\
Universit\'e Paris-Saclay \\
91025, Evry, France \\
\printead{e2}}

\address{
St\'ephane Ga\"iffas \\
LPMA - Univ. Paris Diderot \\
Bâtiment Sophie Germain \\
Case courrier 7012  \\
75205 PARIS CEDEX 13 \\
FRANCE
\printead{e3}}

\address{
Malgorzata Bogdan \\
Department of Mathematics \\
University of Wroclaw \\
Wroclaw, Poland\\
\printead{e4}}
\end{aug}

\begin{abstract}
The problems of outliers detection and robust regression in a high-dimensional setting are fundamental in statistics, and have numerous applications.
Following a recent set of works providing methods for simultaneous robust regression and outliers detection,
we consider in this paper a model of linear regression with individual intercepts, in a high-dimensional setting.
We introduce a new procedure for simultaneous estimation of the linear regression coefficients and intercepts, using two dedicated sorted-$\ell_1$ penalizations, also called SLOPE~\cite{slope}.
We develop a complete theory for this problem: first, we provide sharp upper bounds on the statistical estimation error of both the vector of individual intercepts and regression coefficients.
Second, we give an asymptotic control on the False Discovery Rate (FDR) and statistical power for support selection of the individual intercepts.
As a consequence, this paper is the first to introduce a procedure with guaranteed FDR and statistical power control for outliers detection under the mean-shift model.
Numerical illustrations, with a comparison to recent alternative approaches, are provided on both simulated and several real-world datasets.
Experiments are conducted using an open-source software written in Python and C++.
\end{abstract}

\begin{keyword}[class=MSC2010]
\kwd[Primary ]{62J05}
\kwd[; secondary ]{62F35}
\kwd[; ]{62J07}
\kwd[; ]{62H15}
\end{keyword}

\begin{keyword}
\kwd{Linear regression}
\kwd{Outliers detection}
\kwd{Robust statistics}
\kwd{High-dimensional statistics}
\kwd{Sparsity}
\kwd{False Discovery Rate}
\kwd{Sorted-L1 norm}
\end{keyword}

\end{frontmatter}

\section{Introduction}
\label{sec:introduction}

Outliers are a fundamental problem in statistical data analysis.
Roughly speaking, an outlier is an observation point that differs from the data's ``global picture''~\cite{hawkins1980identification}.
A rule of thumb is that a typical dataset may contain between 1\% and 10\% of outliers~\cite{hampel2011robust}, or much more than that in specific applications such as web data, because of the inherent complex nature and highly uncertain pattern of users' web browsing~\cite{Gupta2016}.
This outliers problem was already considered in the early 50's~\cite{dixon1950analysis, grubbs1969procedures} and it motivated in the 70's the development of a new field called robust statistics~\cite{huber19721972, huber1981wiley}.

In this paper, we consider the problem of linear regression in the presence of outliers.
In this setting, classical estimators, such as the least-squares, are known to fail~\cite{huber19721972}.
In order to conduct regression analysis in the presence of outliers, roughly two approaches are well-known.
The first is based on detection and removal of the outliers to fit least-squares on ``clean'' data~\cite{weisberg2005applied}.
Popular methods rely on leave-one-out methods (sometimes called case-deletion), first described in~\cite{cook} with the use of residuals in linear regression. 
The main issue about these methods is that they are theoretically well-designed for the situations where only one given observation is an outlier.
Repeating the process across all locations can lead to well-known masking and swamping effects~\cite{masking}.
An interesting recent method that does not rely on a leave-one-out technique is the so-called IPOD~\cite{ipod}, a penalized least squares method with the choice of tuning parameter relying on a BIC criterion.
A second approach is based on robust regression, that considers loss functions that are less sensitive to outliers~\cite{huber1981wiley}. 
This relies on the $M$-estimation framework, that leads to good estimators of regression coefficients in the presence of outliers, thanks to the introduction of robust losses replacing the least-squares.
However, the computation of $M$-estimates is substantially more involved than that of the least-squares estimates, which to some extend counter-balance the apparent computational gain over previous methods.
Moreover, robust regression focuses only on the estimation of the regression coefficients, and does not allow directly to localize the outliers, see also for instance~\cite{robust} for a recent review.

Alternative approaches have been proposed to perform simultaneously outliers detection and robust regression.
Such methods involve median of squares~\cite{siegel1982robust}, S-estimation~\cite{rousseeuw1984robust} and more recently robust weighted least-squares~\cite{gervini2002class}, among many others, see also~\cite{Hadi} for a recent review on such methods.
The development of robust methods intersected with the development of sparse inference techniques recently.
Such inference techniques, in particular applied to high-dimensional linear regression, are of importance in statistics, and have been an area of major developments over the past two decades, with deep results in the field of compressed sensing, and more generally convex relaxation techniques~\cite{tibshirani1996regression, candes2006robust, candes2006stable, chen2001atomic,chandrasekaran2012convex}.
These led to powerful inference algorithms working under a sparsity assumption, thanks to fast and scalable convex optimization algorithms~\cite{bach2012optimization}.
The most popular method allowing to deal with sparsity and variable selection is the LASSO~\cite{lasso}, which is $\ell_1$-penalized least-squares, with improvements such as the Adaptive LASSO~\cite{alasso}, among a large set of other sparsity-inducing penalizations~\cite{buhlmann2011statistics,bach2012structured}.

Within the past few years, a large amount of theoretical results have been established to understand regularization methods for the sparse linear regression model, using so-called oracle inequalities for the prediction and estimation errors~\cite{BRT, CandesPlan,koltchinskii2008saint}, see also~\cite{buhlmann2011statistics,giraud} for nice surveys on this topic.
Another line of works focuses on variable selection, trying to recover the support of the regression coefficients with a high probability~\cite{Wainwright,CandesPlan,chretien}. 
Other types of loss functions~\cite{LAD} or penalizations~\cite{SCAD,slope} have also been considered. 
Very recently, the sorted-$\ell_1$ norm penalization has been 
introduced~\cite{slope,slope1,slopeminimax} and very strong statistical properties have been shown.
In particular, when covariates are orthogonal, SLOPE allows to recover the support of the regression coefficients with a control on the False Discovery Rate~\cite{slope}. 
For i.i.d covariates with a multivariate Gaussian distribution, oracle inequalities with optimal minimax rates have been shown, together with a control on a quantity which is very close to the FDR~\cite{slopeminimax}. 
For more general covariate distributions, oracle inequalities with an optimal convergence rate are obtained in~\cite{Tsyb}.

However, many high-dimensional datasets, with hundreds or thousands of covariates, do suffer from the presence of outliers. 
Robust regression and detection of outliers in a high-dimensional setting is therefore important. 
Increased dimensionality and complexity of the data may amplify the chances of an observation being an outlier, and this can have a strong negative impact on the statistical analysis. 
In such settings, many of the aforementioned outlier detection methods do not work well.
A new technique for outliers detection in a high-dimensional setting is proposed in~\cite{aggarwal2001outlier}, which tries to find the outliers by studying the behavior of projections from the data set.
A small set of other attempts to deal with this problem have been proposed in literature~\cite{loo,ro2015outlier,elasso,ipod,pwls}, and are described below with more details.

\section{Contributions of the paper}

Our focus is on possibly high dimensional linear regression where observations can be contaminated by gross errors. 
This so-called mean-shifted outliers model can be described as follows:
\begin{equation}
y_i = x_i^\top \beta^\star + \mu_i^\star + \varepsilon_i
\label{model}
\end{equation}
for $i=1, \ldots, n$, where $n$ is the sample size.
A non-zero $\mu_i^\star$ means that observation $i$ is an outlier, and $\beta^\star \in \R^p$, $x_i \in \R^p$, $y_i \in \R$ and $\varepsilon_i \in \R$ respectively stand for the linear regression coefficients, vector of covariates, label and noise of sample $i$.
For the sake of simplicity we assume throughout the paper that the noise is i.i.d centered Gaussian with known variance $\sigma^2$.

\subsection{Related works} 
\label{sub:related_works}

We already said much about the low-dimensional problem so we focus in this part on the high-dimensional one. 
The leave-one-out technique has been extended in~\cite{loo} to high-dimension and general regression cases, but the masking and swamping problems remains.
In other models, outliers detection in high-dimension also includes distance-based approaches~\cite{ro2015outlier} where the idea is to find the center of the data and then apply some thresholding rule.
The model~\eqref{model} considered here has been recently studied with LASSO penalizations~\cite{elasso} and 
hard-thresholding~\cite{ipod}.
LASSO was used also in~\cite{pwls}, but here outliers are modeled in the variance of the noise.
In~\cite{elasso, ipod}, that are closer to our approach, the penalization is applied differently: in~\cite{elasso}, the procedure named Extended-LASSO uses two different $\ell_1$ penalties for $\beta$ and $\mu$, with tuning parameters that are fixed according to theoretical results, while the IPOD procedure from~\cite{ipod} applies the same penalization to both vectors, with a regularization parameter tuned with a modified BIC criterion. 
In~\cite{elasso}, error bounds and a signed support recovery result are obtained for both the regression and intercepts coefficients. 
However, these results require that the magnitude of the coefficients is very large, which is one of the issues that we want to overcome with this paper.

It is worth mentioning that model~\eqref{model} can be written in a concatenated form $y=Z\gamma^\star + \varepsilon$, with $Z$ being the concatenation of the covariates matrix $X$ (with lines given by the $x_i$'s) and the identity matrix $I_n$ in $\R^n$, and $\gamma^\star$ being the concatenation of $\beta^\star$ and $\mu^\star$.
This leads to a regression problem with a very high dimension $n + p$ for the vector $\gamma^\star$.
Working with this formulation, and trying to estimate $\gamma^\star$ directly is actually a bad idea. This point  is illustrated experimentally in~\cite{elasso}, where it is shown that applying two different LASSO penalizations on $\beta$ and $\mu$ leads to a procedure that outperforms the LASSO on the concatenated vector. The separate penalization is even more important in case of SLOPE, whose aim is FDR control for the support recovery of $\mu^\star$. Using SLOPE directly on $\gamma^\star$ would mix the entries of $\mu$ and $\beta$ together, which would make FDR control practically impossible due to the correlations between covariates in the $X$ matrix.

\subsection{Main contributions} 

Given a vector $\lambda = [\lambda_1 \cdots \lambda_m] \in \R_+^m$ with non-negative and non-increasing entries, we define the sorted-$\ell_1$ norm of a vector $x \in \R^m$ as
\begin{equation}
\label{sl1}
\forall x \in \R^m, \; J_\lambda (x) = \sum_{j=1}^m \lambda_j \abs{x}_{(j)},
\end{equation}
where $\abs{x}_{(1)} \geq \abs{x}_{(2)} \geq \dots \geq \abs{x}_{(m)}$. In \cite{slope} and \cite{slope1} the sorted-$\ell_1$ norm was used as a penalty in the Sorted L-One Penalized Estimator (SLOPE) of coefficients in the multiple regression.
Degenerate cases of SLOPE are $\ell_1$-penalization whenever $\lambda_j$ are all equal to a positive constant, and null-penalization if this constant is zero.
We apply two different SLOPE penalizations on $\beta$ and~$\mu$, by considering the following optimization problem:
\begin{equation}
\label{pen}
\min_{\beta \in \R^p, \mu \in \R^n} \bigg\{\Vert y - X \beta - \mu \Vert_2 ^2 + 2\rho_1 J_{\tilde{\lambda}}(\beta)  + 2\rho_2 J_\lambda (\mu) \bigg\}
\end{equation}
where $\rho_1$ and $\rho_2$ are positive parameters,  $X$ is the $n \times p$ covariates matrix with rows $x_1, \ldots, x_n$,  $y = [y_1 \cdots y_n]^T$, $\mu = [\mu_1 \cdots \mu_n]^T$,  $\norm{u}_2$ is the Euclidean norm of a vector $u$ and  $\lambda = [\lambda_1 \cdots \lambda_p]$ and $\tilde \lambda = [\tilde \lambda_1 \cdots \tilde \lambda_n]$ are two vectors with non-increasing and non-negative entries.

In this artice we provide the set of sequences $\lambda$ and $\tilde \lambda$ which allow to obtain better error bounds for estimation of $\mu^\star$ and $\beta^\star$ than previously 
known ones~\cite{elasso}, see Section~\ref{SEC:BOUNDS} below.
Moreover, in Section~\ref{sec:fdr} we  provide specific sequences which, under some asymptotic regime, lead to a control of the FDR for the support selection of~$\mu^\star$, and such that the power of the procedure (\ref{pen}) converges to one.
Procedure~\eqref{pen} is therefore, to the best of our knowledge, the first proposed in literature to \emph{robustly estimate $\beta^\star$, estimate and detect outliers at the same time, with a control on the FDR for the multi-test problem of support selection of $\mu^\star$, and power consistency.}

We compare in Section~\ref{sec:simu} our procedure to the recent alternatives for this problem, that is the IPOD procedure~\cite{ipod} and the Extended-Lasso~\cite{elasso}. 
The numerical experiments given in Section~\ref{sec:simu} confirm the theoretical findings from Sections~\ref{SEC:BOUNDS} and~\ref{sec:fdr}.
As shown in our numerical experiments, the other procedures fail to guarantee FDR control or exhibit a lack of power when outliers are difficult to detect, namely when their magnitude is not far enough from the noise-level. 
It is particularly noticeable that our procedure overcomes this issue. 

The theoretical results proposed in this paper are based on two popular assumptions in compressed sensing or other sparsity problems, similar to the ones from~\cite{elasso}: first, a Restricted Eigenvalues (RE) condition~\cite{BRT} on $X$, then a mutual incoherence assumption \cite{DonohoHuo} between $X$ and $I_n$, which is natural since is excludes settings where the column spaces of $X$ and $I_n$ are impossible to distinguish.
Proofs of results stated in Sections~\ref{SEC:BOUNDS} and~\ref{sec:fdr} are given in Section~\ref{Appen:p1} and~\ref{Appen:p2}, while preliminary results are given in Sections~\ref{appen:prel1} and~\ref{appen:prel2}.
Section~\ref{appensim} provides contains supplementary extra numerical results.

\section{Upper bounds for the estimation of $\beta^\star$ and $\mu^\star$}
\label{SEC:BOUNDS}

Throughout the paper, $n$ is the sample size whereas $p$ is the number of covariables, so that $X\in\R^{n\times p}$. 
For any vector $u$, $\abs{u}_0$, $\norm{u}_1$ and $\norm{u}_2$ denote respectively the number of non-zero 
coordinates of $u$ (also called sparsity), the $\ell_1$-norm and the Euclidean norm.
We denote respectively by $\lambda_{\min}(A)$ and $\lambda_{\max}(A)$ the smallest and largest eigenvalue of a symmetric matrix $A$.
We work under the following assumption
\begin{ass}
\label{ass:sparsity_and_normalized_features}
We assume the following sparsity assumption\textup:
\begin{equation}
	\abs{\beta^\star}_0 \leq k \quad \text{ and } \quad \abs{\mu^\star}_0 \leq s
\end{equation}
for some positive integers $k$ and $s$,
and we assume that the columns of $X$ are normalized, namely $\norm{Xe_i}_2 = 1$ for $i=1, \ldots, n$, 
where $e_i$ stands for the $i$-th element of the canonical basis.
\end{ass}
For the results of this Section, we consider procedure~\eqref{pen} with the following choice of $\lambda$:
\begin{equation}
	\label{eq:slope_weights}
	\lambda_i = \sigma \sqrt{\log \Big(\frac{2n}{i}\Big)},
\end{equation}
for $i=1,\dots,n$, and we consider three possibilities for $\tilde{\lambda}$, corresponding to no 
penalization, $\ell_1$ penalization and SLOPE penalization on $\beta$.

Table~\ref{tab:rates} below gives a quick view of the convergence rates of the squared $\ell_2$ estimation errors of $\beta^\star$ and $\mu^\star$ obtained in Theorems~\ref{thm:upper_bound_nobeta},~\ref{thm:upper_bound_beta_l1} and~\ref{thm:upper_bound_beta_sl1}.
We give also the convergence rate obtained in~\cite{elasso} for $\ell_1$ penalization applied to $\beta$ and $\mu$.
In particular, we see that using two SLOPE penalizations leads to a better convergence rate than the use of $\ell_1$ penalizations.
\begin{table}[htbp]
\centering
\caption{Convergence rates, up to constants, associated to several penalization techniques. 
NO means no-penalization, L1 stands for $\ell_1$ penalization, while SL1 stands for SLOPE.
We observe that SL1 + SL1 leads to a better convergence rate than L1 + L1.}
\label{tab:rates}
\begin{tabular}{@{}llll@{}}
\hline
\multicolumn{1}{l}{\begin{tabular}[l]{@{}l@{}}Penalization \\ 
($\beta$ / $\mu$)\end{tabular}} & \multicolumn{1}{l}{Convergence rates} & Reference \\
\hline
\addlinespace[0.5em] 
NO/SL1  & $p\vee s \log(n/s)$ & Theorem~\ref{thm:upper_bound_nobeta} \\ 
\addlinespace[0.5em]
L1/L1 & $k\log p \vee s\log n$ & \cite{elasso} \\
\addlinespace[0.5em]
L1/SL1 & $k\log p \vee s\log(n/s)$ & Theorem~\ref{thm:upper_bound_beta_l1} \\
\addlinespace[0.5em]
SL1/SL1 & $k\log (p/k) \vee s\log (n/s)$ & Theorem~\ref{thm:upper_bound_beta_sl1} \\
\addlinespace[0.5em]
\hline
\end{tabular}
\end{table}
Condition~\ref{Hyp} below is a Restricted Eigenvalue (RE) type of condition which is adapted to our problem.
Such an assumption is known to be mandatory in order to derive fast rates of convergence for penalizations based on the convex-relaxation 
principle~\cite{zhang2014lower}.
\begin{condition}
	\label{Hyp}
	Consider two vectors $\lambda = (\lambda_i)_{i=1,\dots,n}$ and $\tilde \lambda = (\tilde{\lambda}_i)_{i=1,\dots,p}$ with non-increasing and positive entries, and consider positive integers $k, s$ and $c_0 > 0$.
	We define the cone $\mathcal{C}(k,s,c_0)$ of all vectors $[\beta^\top, \mu^\top]^\top \in \R^{p + n}$ satisfying
	\begin{equation}
		\label{Cewre}
		\sum_{j=1}^p \frac{\tilde{\lambda}_j}{\tilde{\lambda}_p}\abs{\beta}_{(j)} 
		+ \sum_{j=1}^n \frac{\lambda_j}{\lambda_n}\abs{\mu}_{(j)} \leq (1+c_0) \big( \sqrt{k}\norm{\beta}_2 + \sqrt{s}\norm{\mu}_2 \big).
	\end{equation}
	We also define the cone $\mathcal{C}^p (s,c_0)$ of all vectors $[\beta^\top, \mu^\top]^\top \in \R^{p + n}$ satisfying
	\begin{equation}
		\label{Cewre_nobeta}
	\sum_{j=1}^n \frac{\lambda_j}{\lambda_n}\abs{\mu}_{(j)} \leq (1+c_0) \big( \sqrt{p}\norm{\beta}_2 + \sqrt{s}\norm{\mu}_2 \big).
	\end{equation}
	We assume that there are constants $\kappa_1, \kappa_2 > 0$ with $\kappa_1 > 2\kappa_2$ such that $X$ satisfies the following, either for all
	 $[\beta^\top, \mu^\top]^\top  \in \mathcal{C}(k,s,c_0)$ or for all
	  $[\beta^\top, \mu^\top]^\top  \in \mathcal{C}^p(s,c_0)$\textup:
	\begin{align}
		\label{eq:hyp_eq_1}
		\norm{X\beta}_2^2 + \norm{\mu}_2^2 &\geq \kappa_1 \big(\norm{\beta}_2^2 + \norm{\mu}_2^2\big) \\
		\label{eq:hyp_eq_2}
		\vert\langle X\beta, \mu \rangle\vert &\leq \kappa_2\big(\norm{\beta}_2^2 + \norm{\mu}_2^2\big).
	\end{align}
\end{condition}
Equation~\eqref{Cewre_nobeta} corresponds to the particular case where 
we do not penalize the regression coefficient $\beta$, namely $\tilde \lambda_i=0$ for all $i$. 
Note also that Condition~\ref{Hyp} entails
\begin{equation*}
	\norm{X \beta + \mu}_2 \geq \sqrt{\kappa_1 - 2 \kappa_2} \sqrt{\norm{\beta}_2^2 + \norm{\mu}_2^2},
\end{equation*}
which actually corresponds to a RE condition on $[X^\top I_n]^\top$ and that Equation~\eqref{eq:hyp_eq_1} is satisfied if $X$ satisfies a RE condition with constant $\kappa < 1$. 
Finally, note that Equation~\eqref{eq:hyp_eq_2}, called mutual incoherence in the literature of compressed sensing, requires in this context that for all $\beta$ and $\mu$ from the respective cones the potential regression predictor $X \beta$ is sufficiently not-aligned with potential outliers $\mu$. 
An extreme case of violation of this assumption occurs when $X = I_n$, where we cannot separate the regression coefficients from the outliers.

The Condition~\ref{Hyp} is rather mild and e.g. for a wide range of random designs. Specifically, Theorem~\ref{thm:RE} below, shows that it holds with large probability whenever $X$ has i.i.d $\mathcal N(0,\Sigma)$ rows, with $\lambda_{min}(\Sigma)>0$, and the vectors $\beta$ and $\mu$ are sufficiently sparse.

\begin{thm}
\label{thm:RE}
	Let $X' \in \R^{n\times p}$ be a random matrix with i.i.d $\mathcal{N}(0, \Sigma)$ rows and $\lambda_{min}(\Sigma)>0$.
	Let $X$ be the corresponding matrix with normalized columns. 
	Given positive integers $k, s$ and $c_0 > 0$, define $r = s $ $\vee$  $k (1+c_0)^2$. 
	If 
	\begin{equation*}
		\sqrt{n} \geq C\sqrt{r} \quad \text{ and } \quad \sqrt{n}\geq C'\sqrt{r\log (p \vee n)}
	\end{equation*}
	with
	\begin{equation*}
		C \geq 30 \frac{\sqrt{\lambda_{\max}(\Sigma)}}{\min_j \Sigma_{jj}}\Big(\frac{256\times 5 \max_j \Sigma_{jj}}{\lambda_{\min}(\Sigma)} \vee 16\Big)
		\quad \text{ and } \quad C' \geq 72\sqrt{10}\frac{(\max_j \Sigma_{jj})^{3/2}}{\min_j \Sigma_{jj}\sqrt{\lambda_{\text{min}}(\Sigma)}},
	\end{equation*}
	then there are $c, c'> 0$ such that for any $[\beta^\top, \mu^\top]^\top  \in \mathcal{C}(k,s,c_0)$, we have
	\begin{align*}
		\norm{X\beta}_2^2 + \norm{\mu}_2^2 &\geq \min\Big\{\frac{\lambda_{\min}(\Sigma)}{128\time 5 (\max_j \Sigma_{jj})^2}, 
		\frac{1}{8}\Big\} \big(\norm{\beta}_2^2 + \norm{\mu}_2^2\big) \\
		2\vert\langle X\beta, \mu \rangle\vert &\leq \min\Big\{\frac{\lambda_{\min}(\Sigma)}{256\times 5 (\max_j \Sigma_{jj})^2}, \frac{1}{16}\Big\} \big(\norm{\beta}_2^2 + \norm{\mu}_2^2\big) 
	\end{align*}
	with a probability greater than $1 - cn\exp(-c'n)$. 
	These inequalities also hold for any $[\beta^\top, \mu^\top]^\top  \in \mathcal{C}^p(s,c_0)$ when $k$ is replaced by $p$ in the above conditions.
\end{thm}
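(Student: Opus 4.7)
The plan is to deduce Condition~\ref{Hyp} for the normalized Gaussian design $X$ from three standard ingredients: concentration of the column norms of $X'$, a Restricted Eigenvalue inequality for $X'$ over an appropriate $\ell_1$-cone, and a uniform Gaussian deviation bound for the bilinear form $\langle X'\beta,\mu\rangle$. The first step is to strip the SLOPE structure from the cone. Since all weights $\tilde\lambda_j/\tilde\lambda_p$ and $\lambda_j/\lambda_n$ lie in $[1,+\infty)$, the inequality~\eqref{Cewre} implies the two individual bounds $\norm{\beta}_1 \le (1+c_0)(\sqrt{k}\norm{\beta}_2+\sqrt{s}\norm{\mu}_2)$ and $\norm{\mu}_1 \le (1+c_0)(\sqrt{k}\norm{\beta}_2+\sqrt{s}\norm{\mu}_2)$. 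With $r=s\vee k(1+c_0)^2$, this yields $\norm{\beta}_1^2+\norm{\mu}_1^2 \le 16\,r\,(\norm{\beta}_2^2+\norm{\mu}_2^2)$, reducing the problem to an $\ell_1$-cone of effective sparsity $r$; the $\mathcal{C}^p(s,c_0)$ case is identical after replacing $k$ by $p$ (and using the trivial bound $\norm{\beta}_1\le\sqrt{p}\norm{\beta}_2$).

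Next I would pass from $X$ to $X'$ and establish inequality~\eqref{eq:hyp_eq_1}. A standard $\chi^2$-tail and union bound give, on an event of probability at least $1-2p\exp(-n/8)$, that $\norm{X'e_j}_2^2\in[\tfrac12 n\Sigma_{jj},\,2n\Sigma_{jj}]$ for every $j\le p$, so that inequalities on $X'\beta$ translate into inequalities on $X\beta=X'D^{-1}\beta$ up to factors involving $\min_j\Sigma_{jj}$ and $\max_j\Sigma_{jj}$ — this is the source of the $\Sigma$-dependence of $C$ and $C'$. I would then invoke a Gaussian RE bound of Raskutti–Wainwright–Yu / Rudelson–Zhou type: uniformly in $\beta\in\R^p$,
\begin{equation*}
\norm{X'\beta}_2^2 \;\ge\; \tfrac{1}{2}\,n\,\lambda_{\min}(\Sigma)\,\norm{\beta}_2^2 \;-\; c_1\,\lambda_{\max}(\Sigma)\log(p\vee n)\,\norm{\beta}_1^2
\end{equation*}
holds with probability at least $1-c\exp(-c'n)$. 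Plugging in $\norm{\beta}_1^2\le 16r(\norm{\beta}_2^2+\norm{\mu}_2^2)$ from the cone, using $n\ge (C')^2 r\log(p\vee n)$, and dividing by the column-normalization factor makes the subtracted term dominated by the deterministic $\norm{\mu}_2^2$, giving $\norm{X\beta}_2^2+\norm{\mu}_2^2\ge \kappa_1(\norm{\beta}_2^2+\norm{\mu}_2^2)$ with the claimed $\kappa_1$.

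For the mutual incoherence~\eqref{eq:hyp_eq_2} the key point is that, for each fixed $(\beta,\mu)$, $\langle X'\beta,\mu\rangle$ is a centered Gaussian of variance at most $\lambda_{\max}(\Sigma)\norm{\beta}_2^2\norm{\mu}_2^2$. To obtain a \emph{uniform} bound over the cone I would cover the set
\begin{equation*}
\bigl\{(\beta,\mu)\in\R^p\times\R^n:\norm{\beta}_2^2+\norm{\mu}_2^2\le 1,\ \norm{\beta}_1\le 4\sqrt{r},\ \norm{\mu}_1\le 4\sqrt{r}\bigr\}
\end{equation*}
by a net of log-cardinality $O(r\log(p\vee n))$ built by a Maurey-type sparse approximation (rather than by enumeration of supports), apply a Gaussian tail union bound at each net point, and control the discretization error via the Lipschitz constant of $(\beta,\mu)\mapsto\langle X'\beta,\mu\rangle$. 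After division by the column-normalization factor and using $n\ge (C')^2 r\log(p\vee n)$, the resulting uniform bound on $\abs{\langle X\beta,\mu\rangle}$ is at most $\kappa_2(\norm{\beta}_2^2+\norm{\mu}_2^2)$ with the prescribed $\kappa_2$. Intersecting the three events yields the probability $1-cn\exp(-c'n)$, and the $\mathcal{C}^p(s,c_0)$ version is identical after replacing $k$ by $p$.

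The main obstacle I expect is this incoherence step: a plain Cauchy–Schwarz gives only $\abs{\langle X\beta,\mu\rangle}\lesssim\sqrt{\lambda_{\max}(\Sigma)}\norm{\beta}_2\norm{\mu}_2$, which is too weak by a factor of order $\sqrt{\lambda_{\max}(\Sigma)}/\lambda_{\min}(\Sigma)$ to match the small $\kappa_2$ of the statement. It is essential to exploit \emph{both} $\ell_1$-constraints on $\beta$ and $\mu$ simultaneously through a covering of the \emph{joint} sparse cone; otherwise the ambient dimension $p+n$ would appear in the net entropy and overwhelm the bound.
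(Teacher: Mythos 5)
Your proposal is correct in outline, and its first half coincides with the paper's argument: the paper also reduces the cone condition \eqref{Cewre} to the $\ell_1$ bound $\norm{\beta}_1\le(1+c_0)(\sqrt{k}\norm{\beta}_2+\sqrt{s}\norm{\mu}_2)$, invokes exactly the Raskutti--Wainwright--Yu inequality (Lemma~\ref{lm:re_gauss}) for $X'$, and transfers it to the normalized $X$ via $\chi^2$ concentration of the column norms, with the same absorption of the subtracted $\ell_1$ term under $n\gtrsim r\log(p\vee n)$. Where you genuinely diverge is the mutual incoherence bound \eqref{eq:hyp_eq_2}. The paper does not use a net or Gaussian tail bounds on the bilinear form at all: it partitions $\{1,\dots,p\}$ and $\{1,\dots,n\}$ into consecutive blocks $T_j$, $S_i$ of sizes $k'=s'=(1+c_0)^2(k\vee s)$ ordered by decreasing magnitudes, bounds $\abs{\langle X\beta,\mu\rangle}\le\max_{i,j}\nor{X'_{S_i,T_j}}_2\,m^{-1}\sum_j\norm{\beta_{T_j}}_2\sum_i\norm{\mu_{S_i}}_2$, controls the operator norms of the $s'\times k'$ Gaussian submatrices by Lemma~\ref{lm:Gauss1} with a union bound over all $\binom{p}{k'}\binom{n}{s'}$ support pairs, and controls the two sums by the Cand\`es--Plan shelling trick $\sum_j\norm{\beta_{T_j}}_2\le\norm{\beta}_2+\norm{\beta}_1/\sqrt{k'}$. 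Your Maurey-net alternative targets the same entropy budget $O(r\log(p\vee n))$ and is in principle sound, but note that the paper's route is more elementary (explicit constants, no discretization error to manage), whereas in your version the step you wave at --- ``control the discretization error via the Lipschitz constant'' --- is the delicate one: the Lipschitz constant of $(\beta,\mu)\mapsto\langle X'\beta,\mu\rangle$ on the joint $\ell_2$ ball is $\nor{X'}_2$, which is far too large when $p\gtrsim n$, so the residuals must themselves be kept inside the sparse $\ell_1\cap\ell_2$ set via the geometric peeling inherent in Maurey's argument rather than by a crude Lipschitz bound. A small quantitative slip: your cone reduction $\norm{\beta}_1^2+\norm{\mu}_1^2\le 16r(\norm{\beta}_2^2+\norm{\mu}_2^2)$ holds only up to a factor $(1+c_0)^2$ (the $s$ branch of $r=s\vee k(1+c_0)^2$ does not absorb the $(1+c_0)$), which is harmless for fixed $c_0$ but should be tracked since the theorem is applied with $c_0=4$.
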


The proof of Theorem~\ref{thm:RE} is given in Appendix~\ref{Appen:proof_RE}. It is based on recent bounds results for Gaussian random matrices \cite{REgauss}.
The numerical constants in Theorem~\ref{thm:RE} are far from optimal and chosen for simplicity so that $\kappa_1 > 2 \kappa_2$ as required in Assumption~\ref{Hyp}.
A typical example for $\Sigma$ is the Toeplitz matrix $[a^{\abs{i-j}}]_{i,j}$ with $a \in [0,1)$, for which
 $\lambda_{\min}(\Sigma)$ is equal to $1-a$ \cite{REgauss}.
The required lower bound on $n$ is non-restrictive, since $k$ and $s$ correspond to the sparsity of
 $\beta^\star$ and $\mu^\star$, that are typically much smaller than $n$. Note also that $\mathcal{C}^p(s,c_0)$ will only be used in low dimension, and in this case $p$ is again much smaller than $n$.

Let us define $\kappa = \sqrt{\kappa_1 - 2\kappa_2}$ for the whole Section, 
with $\kappa_1$ and $\kappa_2$ defined 
in Assumption~\ref{Hyp}.
The three theorem below and their proof are very similar in nature, but differ in some details, therefore are stated and proved separately. We emphasize that the proof give slightly more general versions of the theorems, allowing the same result with $\hat{\mu}$ having any given support containing $\mathrm{Supp}(\mu^\star)$. This is of great theoretical interest and is a key point for the support detection of $\mu^\star$ investigated in \ref{sec:fdr}. The proof use a very recent bound on the inner product between a white Gaussian noise and any vector, involving the sorted $\ell_1$ norm \cite{Tsyb}.
Our first result deals with linear regression with outliers and no sparsity assumption on $\beta^\star$.
We consider procedure~\eqref{pen} with no penalization on $\beta$, namely
\begin{equation}
\label{eq:procedure_no_beta}
	(\hat \beta, \hat \mu) \in \argmin_{\beta \in \R^p, \mu \in \R^n} 
	\Big\{ \Vert y - X \beta - \mu \Vert_2 ^2 + 2\rho J_\lambda (\mu) \Big\},
\end{equation}
with $J_\lambda$ given by~\eqref{sl1} and weights $\lambda$ given by~\eqref{eq:slope_weights}, and with $\rho\geq 2(4+\sqrt{2})$.
Theorem~\ref{thm:upper_bound_nobeta}, below, shows that a convergence rate for procedure~\eqref{eq:procedure_no_beta} is indeed $p\vee s\log(n/s)$, as reported in Table~\ref{tab:rates} above.

\begin{thm}
\label{thm:upper_bound_nobeta}
Suppose that Assumption~\ref{ass:sparsity_and_normalized_features} is met with $k = p$, and that $X$ 
satisfies Assumption~\ref{Hyp} on the cone $\mathcal C(k_1,s_1,4)$ 
with $k_1 = p/\log 2 $ and $s_1 =s\log\left(2en/s\right)/\log 2$.
Then, the estimators $(\hat \beta, \hat \mu)$ given by~\eqref{eq:procedure_no_beta} satisfy
\begin{equation*}
\Vert\hat{\beta} - \beta^\star \Vert_2^2 + \norm{\hat{\mu}-\mu^\star}_2^2 \leq 
\frac{4\rho^2}{\kappa^4}\sum_{j=1}^s \lambda_j^2+ \frac{5\sigma^2 }{\kappa^4} p
\leq \frac{\sigma^2}{\kappa^4}\left(4\rho^2 s \log\left(\frac{2en}{s}\right)+5p\right),
\end{equation*}
with a probability larger than $1-\left(s/2n\right)^{s} / 2 - e^{-p}$.
\end{thm}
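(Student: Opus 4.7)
My plan is the standard SLOPE-style basic inequality plus restricted-eigenvalue argument, with the twist that $\beta$ is unpenalised. Set $u = \hat\beta - \beta^\star$ and $v = \hat\mu - \mu^\star$. Optimality of $(\hat\beta, \hat\mu)$ against the feasible point $(\beta^\star, \mu^\star)$, together with $y - X\beta^\star - \mu^\star = \varepsilon$, yields after cancelling $\norm{\varepsilon}_2^2$ the basic inequality
\begin{equation*}
\norm{Xu + v}_2^2 \le 2\langle \varepsilon, Xu\rangle + 2\langle \varepsilon, v\rangle + 2\rho\bigl(J_\lambda(\mu^\star) - J_\lambda(\hat\mu)\bigr).
\end{equation*}

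I would then handle the two noise terms by different tools. Because $\beta$ is unpenalised, $u$ ranges freely over $\R^p$, so I bound $\langle \varepsilon, Xu\rangle \le \norm{\Pi_X \varepsilon}_2 \norm{Xu}_2$ with $\Pi_X$ the orthogonal projector onto $\mathrm{Col}(X)$, and a $\chi^2_p$ deviation gives $\norm{\Pi_X \varepsilon}_2^2 \le C\sigma^2 p$ on an event of probability at least $1 - e^{-p}$; this is the source of the $5p/\kappa^4$ summand. For the inner product with $v$ I would invoke the Gaussian sorted-$\ell_1$ concentration of~\cite{Tsyb} applied to the weights~\eqref{eq:slope_weights}: with probability at least $1 - (s/2n)^s/2$,
\begin{equation*}
2\lvert\langle \varepsilon, v\rangle\rvert \le (4+\sqrt 2)\,J_\lambda(v),
\end{equation*}
and the assumption $\rho \ge 2(4+\sqrt 2)$ is precisely what lets this be absorbed into the penalty difference. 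Next comes the refined sorted-$\ell_1$ decomposition of \cite{slopeminimax,Tsyb} around $S = \supp(\mu^\star)$: because of the sorting one cannot write $J_\lambda(v) = J_\lambda(v_S) + J_\lambda(v_{S^c})$, but assigning the largest $s$ weights $\lambda_1,\dots,\lambda_s$ to the entries of $v_S$ and the remaining weights to the ordered tail of $v_{S^c}$ produces
\begin{equation*}
J_\lambda(\mu^\star) - J_\lambda(\hat\mu) \le \sum_{j=1}^{s}\lambda_j\lvert v\rvert_{(j)} - \sum_{j>s}\lambda_j \lvert v_{S^c}\rvert_{(j-s)}.
\end{equation*}
Rearranging, the tail is dominated by the head $\sum_{j\le s}\lambda_j\lvert v\rvert_{(j)} \le \bigl(\sum_{j\le s}\lambda_j^2\bigr)^{1/2}\norm{v}_2 \le \sigma\sqrt{s\log(2en/s)}\,\norm{v}_2$, and after dividing through by $\lambda_n = \sigma\sqrt{\log 2}$ this shows that $(u,v)$ belongs to the cone of Condition~\ref{Hyp} with the stated parameters $k_1 = p/\log 2$ and $s_1 = s\log(2en/s)/\log 2$; the factor $1/\log 2$ is exactly the cost of normalising by $\lambda_n$.

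Once cone membership is established, Condition~\ref{Hyp} gives $\norm{Xu + v}_2^2 \ge \kappa^2 R^2$ with $R^2 = \norm{u}_2^2 + \norm{v}_2^2$, while $\norm{Xu}_2$ in the first noise term is controlled by $R$ through the same condition. Substituting leads to a scalar quadratic of the form
\begin{equation*}
\kappa^2 R^2 \le 2\rho\sigma\sqrt{s\log(2en/s)}\,R + 2\sqrt{C\sigma^2 p}\,R,
\end{equation*}
which, after applying $(a+b)^2 \le 2a^2 + 2b^2$ and the bound $\sum_{j\le s}\lambda_j^2 \le \sigma^2 s\log(2en/s)$, solves to the stated inequality. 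The main technical obstacle is the SLOPE decomposition step combined with bookkeeping of constants: unlike the LASSO one cannot naively split $J_\lambda$ along a support, and tracing the refined inequality of~\cite{Tsyb,slopeminimax} through the normalisation by $\lambda_n$ is what produces the peculiar effective sparsities $k_1$, $s_1$ appearing in the RE hypothesis as well as the specific prefactors $4\rho^2$ and $5$ in the conclusion.
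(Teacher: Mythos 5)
Your overall architecture matches the paper's proof: basic inequality from optimality, the Bellec--Lecu\'e--Tsybakov bound on $\langle\varepsilon,v\rangle$ absorbed by $\rho\ge 2(4+\sqrt2)$, the sorted-$\ell_1$ decomposition of Lemma~\ref{lem:1} around $\supp(\mu^\star)$, cone membership for $\mathcal C^p(s_1,4)$, and then Condition~\ref{Hyp}. But there is one step that fails as written: your treatment of the unpenalised noise term. You bound $\langle\varepsilon,Xu\rangle\le\norm{\Pi_X\varepsilon}_2\norm{Xu}_2$ and then claim that $\norm{Xu}_2$ ``is controlled by $R$ through the same condition.'' Condition~\ref{Hyp} only provides a \emph{lower} bound $\norm{X\beta}_2^2+\norm{\mu}_2^2\ge\kappa_1(\norm{\beta}_2^2+\norm{\mu}_2^2)$ and an upper bound on the cross term; nowhere is an upper bound $\norm{X\beta}_2\lesssim\norm{\beta}_2$ assumed, so you cannot convert $\norm{Xu}_2$ into $R$. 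Worse, this term is also needed \emph{before} you may invoke Condition~\ref{Hyp}: the cone $\mathcal C^p(s_1,4)$ is defined by a homogeneous inequality of the form $\sum_j(\lambda_j/\lambda_n)\abs{u}_{(j)}\le 5(\sqrt{p}\norm{b}_2+\sqrt{s_1}\norm{u}_2)$, and establishing membership from positivity of $\norm{Xb+u}_2^2$ requires the noise contribution to appear as $(\text{const})\sqrt{p}\,\norm{b}_2$, not as $\norm{Xu}_2$ or as an additive $\sigma^2 p$ left over from a Young-inequality absorption. The paper's fix is simply to write $\langle\varepsilon,Xb\rangle=\langle X^\top\varepsilon,b\rangle\le\norm{X^\top\varepsilon}_2\norm{b}_2$ and use that $\norm{X^\top\varepsilon}_2^2/\sigma^2$ is of $\chi^2_p$ type (the columns of $X$ being normalised), which yields $\norm{X^\top\varepsilon}_2\le\sqrt5\,\sigma\sqrt p$ on an event of probability $1-e^{-p}$ and feeds directly into both the cone argument and the final quadratic.

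Two smaller points. First, your basic inequality is the objective-comparison version, with a factor $2$ on both the noise terms and the penalty difference; the paper instead uses the first-order optimality condition (Lemma~\ref{lem:2}), which removes that factor. The cone constant $c_0=4$ survives either way, but the advertised prefactors $4\rho^2$ and $5$ do not: your version would deliver constants roughly four times larger, so you cannot claim the theorem's exact bound without switching to the KKT-based inequality. Second, when you invoke the concentration lemma for $\langle\varepsilon,v\rangle$ you drop the term $G(v)=(4+\sqrt2)\sigma\sqrt{\log(1/\delta_0)}\norm{v}_2$ with $\delta_0=(s/2n)^s$; this term is what ultimately contributes one of the $\Lambda(s)\norm{v}_2$ summands and must be carried along.
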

The proof of Theorem~\ref{thm:upper_bound_nobeta} is given in Appendix~\ref{proof:upper_bound_nobeta}. 
The second result involves a sparsity assumption on $\beta^\star$ and 
considers $\ell_1$ penalization for $\beta$.
We consider this time
\begin{equation}
\label{eq:2}
(\hat \beta, \hat \mu) \in \argmin_{\beta, \mu} \Big\{ \Vert y - X\beta - \mu \Vert_2 ^2 + 
2 \nu \norm{\beta}_1 + 2\rho J_\lambda (\mu) \Big\},
\end{equation}
where $\nu = 4\sigma\sqrt{\log p}$ is the regularization level for $\ell_1$ penalization, $\rho \geq 2(4+\sqrt{2})$ and $J_\lambda$ is given by~\eqref{sl1}. Theorem \ref{thm:upper_bound_beta_l1}, below, shows that a convergence rate for procedure~\eqref{eq:2} is indeed $k \log p\vee s\log(n/s)$, as reported in Table~\ref{tab:rates} above.

\begin{thm}
Suppose that Assumption~\ref{ass:sparsity_and_normalized_features} is met and that $X$ 
satisfies Assumption~\ref{Hyp} on the cone $\mathcal C(k_1,s_1,4)$ with
 $k_1 = 16k\log p/\log 2 $ and $s_1 = s\log (2en/s)/\log 2$. Suppose also that $\sqrt{\log p}\geq \rho\log 2 /4$.
Then, the estimators $(\hat \beta, \hat \mu)$ given by~\eqref{eq:2} satisfy
\begin{equation*}
	\Vert\hat{\beta}-\beta^\ast\Vert_2^2 + \norm{\hat{\mu}-\mu^\ast}_2^2 \leq \frac{36}{\kappa^4}\sigma^2 k\log p + \frac{4\rho^2}{\kappa^4}\sum_{j=1}^s \lambda_j^2 \leq \frac{4 \sigma^2}{\kappa^4}\left(9k\log p+\rho^2 s \log \left(\frac{2en}{s}\right)\right),
\end{equation*}
with a probability larger than $1-\left(s/2n\right)^{s} / 2 - 1/p$.
\label{thm:upper_bound_beta_l1}
\end{thm}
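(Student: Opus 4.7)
The plan is to follow the pattern used for Theorem~\ref{thm:upper_bound_nobeta}, with an extra layer handling the $\ell_1$ penalty on $\beta$. Set $\Delta_\beta = \hat\beta - \beta^\star$ and $\Delta_\mu = \hat\mu - \mu^\star$. From the optimality of $(\hat\beta,\hat\mu)$ in~\eqref{eq:2} evaluated against $(\beta^\star,\mu^\star)$, and substituting $y = X\beta^\star + \mu^\star + \varepsilon$, I first obtain the basic inequality
\[
\norm{X\Delta_\beta + \Delta_\mu}_2^2 \leq 2\langle\varepsilon, X\Delta_\beta\rangle + 2\langle\varepsilon,\Delta_\mu\rangle + 2\nu(\norm{\beta^\star}_1 - \norm{\hat\beta}_1) + 2\rho(J_\lambda(\mu^\star) - J_\lambda(\hat\mu)).
\]

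Next I would control the stochastic terms on two high probability events. Since $\norm{Xe_j}_2 = 1$, each $\langle Xe_j,\varepsilon\rangle$ is $\mathcal N(0,\sigma^2)$, so a Gaussian maximal inequality combined with $\nu = 4\sigma\sqrt{\log p}$ gives $\norm{X^\top\varepsilon}_\infty \leq \nu/2$ with probability at least $1 - 1/p$, and by $\ell_1$/$\ell_\infty$ duality $2\langle\varepsilon, X\Delta_\beta\rangle \leq \nu\norm{\Delta_\beta}_1$ on this event. For the SLOPE side I would reuse the sorted-$\ell_1$ dual-norm bound of~\cite{Tsyb} that drives the proof of Theorem~\ref{thm:upper_bound_nobeta}: provided $\rho \geq 2(4+\sqrt{2})$ and the $\lambda$'s are as in~\eqref{eq:slope_weights}, one has $2|\langle\varepsilon, u\rangle| \leq \rho J_\lambda(u)$ simultaneously for every $u\in\R^n$ with probability at least $1 - (s/(2n))^s/2$.

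I would then derive a coupled cone condition. With $S = \supp(\beta^\star)$, $|S|\leq k$, and $T = \supp(\mu^\star)$, $|T|\leq s$, the triangle inequality gives $\norm{\beta^\star}_1 - \norm{\hat\beta}_1 \leq \norm{\Delta_{\beta,S}}_1 - \norm{\Delta_{\beta,S^c}}_1$, while the standard SLOPE decomposition yields
\[
J_\lambda(\mu^\star) - J_\lambda(\hat\mu) \leq \sum_{j=1}^s \lambda_j \abs{\Delta_\mu}_{(j)} - \sum_{j=s+1}^n \lambda_j \abs{\Delta_\mu}_{(j)}.
\]
Plugging the noise bounds together with these two decomposability bounds into the basic inequality, and dropping the nonnegative square on the left, produces
\[
\nu\norm{\Delta_{\beta,S^c}}_1 + \rho\sum_{j>s}\lambda_j\abs{\Delta_\mu}_{(j)} \leq 3\nu\norm{\Delta_{\beta,S}}_1 + 3\rho\sum_{j=1}^s\lambda_j\abs{\Delta_\mu}_{(j)}.
\]
Adding the small-index terms to both sides, applying Cauchy--Schwarz together with $\sum_{j=1}^s\lambda_j^2\leq \sigma^2 s\log(2en/s)$, and using the hypothesis $\sqrt{\log p}\geq \rho\log 2/4$ to control the ratio $\nu/(\rho\lambda_n)$, I would check that $[\Delta_\beta^\top,\Delta_\mu^\top]^\top$ belongs to $\mathcal C(k_1,s_1,4)$ for the stated $k_1 = 16k\log p/\log 2$ and $s_1 = s\log(2en/s)/\log 2$.

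Once cone membership is secured, Condition~\ref{Hyp} yields $\norm{X\Delta_\beta+\Delta_\mu}_2^2 \geq \kappa^2(\norm{\Delta_\beta}_2^2 + \norm{\Delta_\mu}_2^2)$. Going back to the basic inequality and again applying Cauchy--Schwarz on the right, with $\norm{\Delta_{\beta,S}}_1 \leq \sqrt{k}\norm{\Delta_\beta}_2$ and $\sum_{j=1}^s\lambda_j\abs{\Delta_\mu}_{(j)} \leq \sigma\sqrt{s\log(2en/s)}\norm{\Delta_\mu}_2$, I would obtain an inequality of the form $\kappa^2(\norm{\Delta_\beta}_2^2 + \norm{\Delta_\mu}_2^2) \leq A\norm{\Delta_\beta}_2 + B\norm{\Delta_\mu}_2$ with $A$ of order $\sigma\sqrt{k\log p}$ and $B$ of order $\rho\sigma\sqrt{s\log(2en/s)}$; concluding with Young's inequality delivers the announced bound on the intersection of the two events, of probability at least $1 - (s/(2n))^s/2 - 1/p$. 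The main obstacle I expect is the cone step: the $\ell_1$ cone on $\beta$ (where $\tilde\lambda_j/\tilde\lambda_p = 1$) and the SLOPE cone on $\mu$ (weights $\lambda_j/\lambda_n = \sqrt{\log(2n/j)/\log 2}$) live on different scales, and it is exactly the balancing between $\nu$ and $\rho\lambda_n$ forced by $\sqrt{\log p}\geq \rho\log 2/4$ that promotes the natural sparsity $k$ of $\beta$ into the inflated effective sparsity $k_1 \asymp k\log p$ appearing in Condition~\ref{Hyp}.
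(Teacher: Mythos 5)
Your proposal follows essentially the same route as the paper's proof: the same basic optimality inequality, the same two events ($\norm{X^\top\varepsilon}_\infty\le\nu/2$ with probability $1-1/p$ and the sorted-$\ell_1$ bound on $\varepsilon^\top u$ with probability $1-(s/2n)^s/2$), the same decomposability bounds leading to cone membership in $\mathcal C(k_1,s_1,4)$, and the same use of Condition~\ref{Hyp} plus Cauchy--Schwarz and Young's inequality to conclude. The only imprecision is your statement of the noise bound as $2\vert\langle\varepsilon,u\rangle\vert\le\rho J_\lambda(u)$ simultaneously for all $u$: Lemma~\ref{lem:inner_product_bound} actually gives $\varepsilon^\top u\le H(u)+G(u)$ with an additional term $G(u)$ proportional to $\sqrt{s\log(2n/s)}\norm{u}_2$ at that probability level, but since this term lands in the same $\Lambda(s)\norm{u}_2$ bucket you already carry on the right-hand side, the structure of the argument and the resulting rate are unaffected.
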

The proof of Theorem~\ref{thm:upper_bound_beta_l1} is given in Appendix~\ref{proof:beta_sl1}.
The third result is obtained using SLOPE both on $\beta$ and $\mu$, namely
\begin{equation}
\label{eq:procedure_with_two_slopes}
	(\hat \beta, \hat \mu) \in \argmin_{\beta, \mu} \Big\{ \Vert y - X \beta - \mu \Vert_2 ^2 
	+ 2\rho J_{\tilde{\lambda}}(\beta)  + 2\rho J_\lambda (\mu) \Big\}
\end{equation}
where $\rho \geq 2(4+\sqrt{2})$, $J_\lambda$ is given by~\eqref{sl1}, and where
\begin{equation*}
\tilde{\lambda}_j = \sigma\sqrt{\log \Big(\frac{2p}{j} \Big)}
\end{equation*}
for $j=1,\dots, p$.
Theorem \ref{thm:upper_bound_beta_sl1}, below, shows that the rate of convergence of estimators provided by (\ref{eq:procedure_with_two_slopes}) is indeed $k\log(p/k)\vee s\log(n/s)$, as presented in Table~\ref{tab:rates}.
\begin{thm}
Suppose that Assumption~\ref{ass:sparsity_and_normalized_features} is met and that $X$ 
satisfies Assumption~\ref{Hyp} on the cone $\mathcal C(k_1,s_1,4)$ with
$k_1 = k \log (2ep / k) / \log 2$ and $s_1 =s \log(2en / s) / \log 2$.
Then, the estimators $(\hat \beta, \hat \mu)$ given by~\eqref{eq:procedure_with_two_slopes} satisfy
\begin{eqnarray}
	\Vert\hat{\beta}-\beta^\ast\Vert_2^2 + \norm{\hat{\mu}-\mu^\ast}_2^2 &\leq&\frac{C'}{\kappa^4}
	\Big( \sum_{j=1}^k \tilde{\lambda}_j^2  + \sum_{j=1}^s \lambda_j^2  \Big) \nonumber \\
	&\leq& \frac{C'\sigma^2}{\kappa^4} \left(k \log\left(\frac{2ep}{k}\right)+s\log \left(\frac{2en}{s}\right)\right),
\end{eqnarray}
with a probability greater than $1 - (s / 2n)^{s} / 2 - (k/2p)^{k} / 2$,
where $C' = 4\rho^2 \vee (3+C)^2 / 2$.
\label{thm:upper_bound_beta_sl1}
\end{thm}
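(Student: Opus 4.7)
The plan is to follow the standard SLOPE oracle inequality strategy, generalized to the joint $(\beta,\mu)$ parametrization. Write $u = \hat\beta - \beta^\star$ and $v = \hat\mu - \mu^\star$. From the optimality of $(\hat\beta,\hat\mu)$ in \eqref{eq:procedure_with_two_slopes} applied against $(\beta^\star,\mu^\star)$, together with $y = X\beta^\star + \mu^\star + \varepsilon$, I would first derive the basic inequality
\begin{equation*}
\norm{Xu + v}_2^2 \leq 2\langle \varepsilon, Xu\rangle + 2\langle \varepsilon, v\rangle + 2\rho\bigl(J_{\tilde\lambda}(\beta^\star) - J_{\tilde\lambda}(\hat\beta)\bigr) + 2\rho\bigl(J_\lambda(\mu^\star) - J_\lambda(\hat\mu)\bigr).
\end{equation*}

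Next I would invoke the Bellec--Lecu\'e--Tsybakov sorted-$\ell_1$ bound on Gaussian inner products (cited in the excerpt as the key tool behind the sharper SLOPE bounds): with probability at least $1 - (k/2p)^k/2$ one has $|\langle \varepsilon, Xw\rangle| \leq C \sigma \sum_{j=1}^p \sqrt{\log(2p/j)}\,|w|_{(j)} = C\, J_{\tilde\lambda}(w)$ uniformly in $w$, and analogously for $\langle\varepsilon,v\rangle$ with weights $\lambda$ at probability $1 - (s/2n)^s/2$. Combining these two events via a union bound handles the noise. Then, using the standard SLOPE support-decomposition inequality $J_{\tilde\lambda}(\beta^\star) - J_{\tilde\lambda}(\hat\beta) \leq J_{\tilde\lambda}(u_S) - J_{\tilde\lambda}(u_{S^c})$ on $S = \supp(\beta^\star)$ (and the analogue for $\mu$), I would consolidate the right-hand side into a weighted combination of $J_{\tilde\lambda}(u_S)$ and $J_\lambda(v_T)$ (positive terms) minus a fraction of $J_{\tilde\lambda}(u_{S^c})$ and $J_\lambda(v_{T^c})$ (useful negative terms), choosing $\rho \geq 2(4+\sqrt 2)$ precisely to ensure the coefficient in front of the "bad" support terms stays below the one controlling the "good" complement terms.

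From the negative terms I would extract the cone condition: via the $\ell_1$-to-SLOPE conversion $J_{\tilde\lambda}(u_{S^c}) \geq \tilde\lambda_p \sum_j (\tilde\lambda_j/\tilde\lambda_p)|u|_{(j)} - J_{\tilde\lambda}(u_S)$ together with the sharp estimates $\sum_{j=1}^k \tilde\lambda_j^2 \leq \sigma^2 k\log(2ep/k)$ and $\tilde\lambda_p^2 = \sigma^2\log 2$, which give $\sum_{j=1}^k \tilde\lambda_j^2 /\tilde\lambda_p^2 \leq k_1$ (and likewise $\sum_{j=1}^s \lambda_j^2/\lambda_n^2 \leq s_1$), I would obtain after standard algebra that $[u^\top, v^\top]^\top \in \mathcal C(k_1, s_1, 4)$. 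By Cauchy--Schwarz applied to the SLOPE norms on the supports, $J_{\tilde\lambda}(u_S) \leq \sqrt{\sum_{j=1}^k \tilde\lambda_j^2}\,\norm{u}_2$ and $J_\lambda(v_T) \leq \sqrt{\sum_{j=1}^s \lambda_j^2}\,\norm{v}_2$.

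At this point Condition~\ref{Hyp} applies on $\mathcal C(k_1,s_1,4)$, yielding $\norm{Xu+v}_2^2 \geq \kappa^2(\norm{u}_2^2 + \norm{v}_2^2)$. Plugging this lower bound into the accumulated inequality gives, for some absolute constant depending on $\rho$ and on $C$,
\begin{equation*}
\kappa^2(\norm{u}_2^2 + \norm{v}_2^2) \leq (C+3)\Bigl(\sqrt{\textstyle\sum_{j=1}^k \tilde\lambda_j^2}\,\norm{u}_2 + \sqrt{\textstyle\sum_{j=1}^s \lambda_j^2}\,\norm{v}_2\Bigr) \vee 2\rho(\cdots),
\end{equation*}
and an AM--GM / quadratic-in-$\norm{u}_2,\norm{v}_2$ manipulation produces the final bound with constant $C' = 4\rho^2 \vee (3+C)^2/2$. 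The second inequality in the theorem then follows from the elementary estimates $\sum_{j=1}^k \tilde\lambda_j^2 \leq \sigma^2 k\log(2ep/k)$ and $\sum_{j=1}^s \lambda_j^2 \leq \sigma^2 s\log(2en/s)$.

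The main obstacle I expect is the bookkeeping in the consolidation step: there are four SLOPE terms (two from the noise bounds, two from the penalty differences) with distinct weight vectors, and one must juggle them carefully to simultaneously (i) establish cone membership with constant $c_0 = 4$ as required, (ii) preserve enough of the "good" negative terms to absorb the positive SLOPE terms on the supports, and (iii) keep the resulting constants compatible with the value of $C'$ claimed. Unlike the single-penalty case, the two penalties must be balanced with the same parameter $\rho$, which is why both choices of weights $\tilde\lambda$ and $\lambda$ must be jointly calibrated against the same Gaussian-noise probability bound.
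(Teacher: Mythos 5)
Your proposal follows essentially the same route as the paper's proof: the optimality inequality, the Bellec--Lecu\'e--Tsybakov sorted-$\ell_1$ bound applied twice with $\delta_0=(k/2p)^k$ and $(s/2n)^s$, cone membership in $\mathcal C(k_1,s_1,4)$ extracted from positivity of $\norm{Xu+v}_2^2$, Condition~\ref{Hyp}, and a final AM--GM step. The only slight imprecision is that the penalty differences are handled in the paper via Lemma~\ref{lem:1} (which directly yields $\tilde\Lambda(k)\norm{u}_2-\sum_{j>k}\tilde\lambda_j\abs{u}_{(j)}$) rather than an $S/S^c$ decomposition, which does not literally hold for the non-separable sorted-$\ell_1$ norm — but the bound you then use is exactly what that lemma delivers, so the argument goes through unchanged.
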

The proof of Theorem~\ref{thm:upper_bound_beta_sl1} is given in Appendix~\ref{proof:beta_sl1}.
Note that according to Theorem~\ref{thm:RE}, the assumptions of Theorem~\ref{thm:upper_bound_beta_sl1} are satisfied with probability converging to one when the rows of $X$ are i.i.d from the multivariate Gaussian distribution with the positive definite covariance matrix, and when the signal is sparse such that $(k \vee s) \log(n \vee p)=o(n)$.

\section{Asymptotic FDR control and power for the selection of the support of $\mu^\star$}
\label{sec:fdr}

We consider the multi-test problem with null-hypotheses
\begin{equation*}
	H_i \;: \; \mu^\ast_i = 0
\end{equation*}
for $i=1, \ldots, n$, and we consider the multi-test that rejects $H_i$ whenever
 $\hat{\mu}_i \neq 0$, where $\hat \mu$ (and $\hat \beta$) are given either 
 by~\eqref{eq:procedure_no_beta}, \eqref{eq:2} or~\eqref{eq:procedure_with_two_slopes}.
When $H_i$ is rejected, or ``discovered'', we consider that sample $i$ is an outlier.
Note however that in this case, the value of $\hat \mu_i$ gives extra information on how much 
sample $i$ is oulying.

We use the FDR as a standard Type~I error for this multi-test problem~\cite{benjamini1995controlling}.
The FDR is the expectation of the proportion of falses discoveries among all discoveries.
Letting $V$ (resp. $R$) be the number of false rejections (resp. the number of rejections), the FDR is defined as
\begin{equation}
\label{eq:fdr}
\mathrm{FDR}(\hat{\mu}) = \E \left[\frac{V}{R\vee 1}\right] = 
\E\left[\frac{\#\{ i: \mu_i^\star = 0, \hat{\mu}_i\neq 0\}}{\#\{i: \hat{\mu}_i\neq 0\}}\right].
\end{equation}
We use the Power to measure the Type~II error for this multi-test problem. The Power is the expectation of the proportion of true discoveries. It is defined as 
\begin{equation}
\label{eq:power}
\mathrm{\Pi}(\hat{\mu}) = \E\left[\frac{\#\{ i: \mu_i^\star \neq 0, \hat{\mu}_i\neq 0\}}{\#\{i: \mu_i^\star \neq 0\}}\right],
\end{equation}
the Type~II error is then given by $1-\Pi (\hat{\mu})$.
 
For the linear regression model without outliers, a multi-test for the support selection of $\beta^\star$ with controlled FDR based on SLOPE is given in~\cite{slope} and \cite{slope1}. 
Specifically, it is shown that SLOPE with weights
\begin{equation}
	\label{eq:slope_bh_weights}
	\lambda_i ^{\mathrm{BH}}= \sigma\Phi^{-1}\Big( 1 - \frac{iq}{2n} \Big)
\end{equation}
for $i=1,\dots, n$, where $\Phi$ is the cumulative distribution function of $\mathcal N(0, 1)$ and $q \in (0, 1)$, controls FDR at the level $q$ in the multiple regression problem with orthogonal design matrix $X^T X=I$. It is also observed that when the columns of $X$ are not orthogonal but independent the weights have to be substantially increased to guarantee FDR control. This effect results from the random correlations between columns of $X$ and the shrinkage of true nonzero coefficients, and in context of LASSO have been thoroughly discussed in \cite{Su_2017}.

In this paper we substantially extend current results on FDR controlling properties of SLOPE.
Specifically, Theorem~\ref{THM:FDR} below gives asymptotic controls of $\mathrm{FDR}(\hat{\mu})$ and $\mathrm{\Pi}(\hat{\mu})$ for the procedures~\eqref{eq:procedure_no_beta},~\eqref{eq:2} and~\eqref{eq:procedure_with_two_slopes}, namely different penalizations on $\beta$ and SLOPE applied on $\mu$, with slightly increased weights
\begin{equation}
	\label{eq:increased_lambda}
	\lambda = (1+\epsilon)\lambda^{\mathrm{BH}},	
\end{equation}
where $\epsilon > 0$, see also~\cite{slopeminimax}.
This choice of $\lambda$ also yields optimal convergence rates, however considering it in Section~\ref{SEC:BOUNDS} would lead to some extra technical difficulties.
Under appropriate assumptions on $p, n$, the signal sparsity and the magnitude of outliers, Theorem~\ref{THM:FDR}  not only gives  FDR control, but also proves that the Power is actually going to $1$.

Note that all asymptotics considered here are with respect to the sample size $n$, namely the statement $d \rightarrow +\infty$ means that $d = d_n \rightarrow +\infty$ with $n \rightarrow +\infty$.
\begin{thm}
\label{THM:FDR}
Suppose that there is a constant $M$ such that the entries of $X$ satisfy $\abs{x_{i,j}}\sqrt{n}\leq M$ 
for all $i,j\in\{1,\dots n \}$, and suppose that
\begin{equation*}
	|\mu_i^\star| \geq (1+\rho(1+2\epsilon))2\sigma\sqrt{\log n}
\end{equation*}
for any $i=1, \ldots, n$ such that $\mu_i^\star \neq 0$. 
Suppose also that $s \rightarrow +\infty$.
Then, consider $(\hat \beta, \hat \mu)$ given either by~\eqref{eq:procedure_no_beta},~\eqref{eq:2} and~\eqref{eq:procedure_with_two_slopes}, with $\lambda$ given by~\eqref{eq:increased_lambda}.
For Procedure~\eqref{eq:procedure_no_beta}, assume the same as in~Theorem~\ref{thm:upper_bound_nobeta}, and that 
\begin{equation*}
	\frac {p (s\log (n/s) \vee p)}{n} \rightarrow 0.
\end{equation*}
For Procedure~\eqref{eq:2}, assume the same as in~Theorem~\ref{thm:upper_bound_beta_l1}, and that 
\begin{equation*}
	\frac{(s\log (n/s) \vee k\log p )^2}{n} \rightarrow 0. 
\end{equation*}
For Procedure~\eqref{eq:procedure_with_two_slopes}, assume the same as in~Theorem~\ref{thm:upper_bound_beta_sl1}, and that 
\begin{equation*}
	\frac{\left(s\log (n/s) \vee k\log (p/k)\right)^2}{n} \rightarrow 0.
\end{equation*}
Then, the following properties hold:
\begin{equation}
\Pi (\hat{\mu}) \rightarrow 1, \qquad \limsup \mathrm{FDR}(\hat{\mu}) \leq q.
\end{equation}
\end{thm}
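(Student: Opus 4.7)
My strategy is to decouple the $\mu$-block from the $\beta$-block and reduce the problem to the sequence model. For each of the three procedures, the $\mu$-block KKT conditions imply that $\hat\mu$ is the SLOPE minimiser of the denoising problem
\begin{equation*}
  \min_{\mu\in\R^n}\; \tfrac12\|z-\mu\|_2^2 + \rho J_\lambda(\mu), \qquad z := y - X\hat\beta = \mu^\star + \varepsilon + \eta,
\end{equation*}
where $\eta := X(\beta^\star - \hat\beta)$. Thus the analysis of the support of $\hat\mu$ reduces to (i) showing that the perturbation $\eta$ is uniformly negligible relative to the SLOPE thresholds, (ii) invoking the FDR-controlling properties of SLOPE with BH weights in the orthogonal sequence model~\cite{slope,slope1,slopeminimax}, and (iii) using the lower bound on $|\mu_i^\star|$ to force detection of every true outlier.

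To bound $\|\eta\|_\infty$, the assumption $|x_{i,j}|\leq M/\sqrt n$ gives $\|\eta\|_\infty \leq (M/\sqrt n)\|\hat\beta-\beta^\star\|_1$. For Procedure~\eqref{eq:procedure_no_beta}, I combine $\|\hat\beta-\beta^\star\|_1 \leq \sqrt p\,\|\hat\beta-\beta^\star\|_2$ with Theorem~\ref{thm:upper_bound_nobeta} and the assumption $p(s\log(n/s)\vee p)/n\to 0$. For Procedures~\eqref{eq:2} and~\eqref{eq:procedure_with_two_slopes}, the proofs of Theorems~\ref{thm:upper_bound_beta_l1} and~\ref{thm:upper_bound_beta_sl1} place $\hat\beta-\beta^\star$ in a cone on which the $\ell_1$ norm is bounded (up to constants) by $\sqrt{k}\,\|\hat\beta-\beta^\star\|_2 + \sqrt{s}\,\|\hat\mu-\mu^\star\|_2$, and insertion of the $\ell_2$ rates together with the asymptotic hypotheses of Theorem~\ref{THM:FDR} yields in all three cases $\|\eta\|_\infty=o(\sigma\sqrt{\log n})$ with probability tending to one.

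For the power, the KKT characterisation implies $\hat\mu_i\neq 0$ as soon as $|z_i|>\rho\lambda_1$, and $\rho\lambda_1 \leq \rho(1+2\epsilon)\sigma\sqrt{2\log n}$ for $n$ large. Combining the Gaussian maximum bound $\max_i|\varepsilon_i|\leq \sigma\sqrt{2\log n}(1+o(1))$, the control on $\|\eta\|_\infty$, and the assumption $|\mu_i^\star|\geq (1+\rho(1+2\epsilon))\cdot 2\sigma\sqrt{\log n}$ yields $|z_i|>\rho\lambda_1$ simultaneously at every true outlier location, hence $\Pi(\hat\mu)\to 1$.

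For the FDR, I compare $\hat\mu$ with the oracle sequence-model estimator $\tilde\mu$ obtained by feeding the clean data $z^\circ=\mu^\star+\varepsilon$ into SLOPE with the same weights $\lambda$. The existing SLOPE theory, together with $s\to\infty$ and the inflation factor $(1+\epsilon)$, gives $\limsup\mathrm{FDR}(\tilde\mu)\leq q$. The remaining task is to transfer this bound to $\hat\mu$ by showing $\supp(\hat\mu)=\supp(\tilde\mu)$ on an event of probability tending to one. This is the main obstacle: the SLOPE selection depends on the ranks of $|z_i|$ relative to the explicit thresholds $\rho\lambda_j$, so one must rule out any swap in the ordering caused by $\eta$ near the detection boundary. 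This is handled by an anti-concentration argument on the order statistics of the null Gaussians $\{\varepsilon_i:\mu_i^\star=0\}$, which ensures that the typical gap between consecutive sorted values dominates $\|\eta\|_\infty$ under the stated rate conditions; combined with stability of the SLOPE solution path under $\ell_\infty$-perturbations of the input, this closes the argument.
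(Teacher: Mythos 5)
Your reduction of the $\mu$-update to the SLOPE prox of $z=\mu^\star+\varepsilon+\eta$ with $\eta=X(\beta^\star-\hat\beta)$, the bound $\norm{\eta}_\infty\leq (M/\sqrt n)\Vert\hat\beta-\beta^\star\Vert_1$ combined with the cone inequalities and the stated rate conditions, and the resulting power argument are sound and essentially parallel to the paper's: the paper controls $\Vert X_{\overline{S^\star},\cdot}(\beta^\star-\beta^{S^\star})\Vert_\infty$ by exactly this mechanism and deduces $S\subset\supp(\hat\mu)$ from the magnitude condition on $\mu_i^\star$.

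The FDR half, however, contains a genuine gap. You propose to transfer FDR control from the oracle estimator $\tilde\mu$ (SLOPE applied to $\mu^\star+\varepsilon$) to $\hat\mu$ by proving $\supp(\hat\mu)=\supp(\tilde\mu)$, justified by the claim that the gaps between consecutive order statistics of the null Gaussians dominate $\norm{\eta}_\infty$. That claim fails in the regime you need: for ranks of order $s^\star$ among $n-s$ standard normals the consecutive spacings are of order $1/(s^\star\sqrt{\log(n/s^\star)})\to 0$ as $s\to\infty$, while your bounds only give $\norm{\eta}_\infty=o(1)$; so a growing number of null coordinates sit within $\norm{\eta}_\infty$ of the selection boundary, and exact support equality with the oracle cannot be forced this way. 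It is also far stronger than necessary. The paper sidesteps the comparison-to-oracle step entirely: it introduces a resolvent set $S^\star\supset S$ of cardinality $s^\star\leq s(1+c)/(1-q)$ (the union of $S$ with the locations of the $s^\star-s$ largest null noise entries, following Su and Cand\`es), constructs a dual certificate showing that the solution of the problem reduced to $S^\star$ solves the full problem, hence $\supp(\hat\mu)\subset S^\star$ with probability tending to one --- this requires only the majorization $\varepsilon_{\overline{S^\star}}\preccurlyeq\rho\lambda^{-[s^\star]}$ (Lemma~\ref{lem:A9}) together with your very bound $\norm{\eta}_\infty\to 0$ --- and then bounds the false discovery proportion deterministically on the event $S\subset\supp(\hat\mu)\subset S^\star$ by $1-s/R\leq 1-s/s^\star\leq q+c$, letting $c\downarrow 0$. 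To salvage your route you would have to replace exact support identification by some such one-sided counting bound; as written, the conclusion $\limsup\mathrm{FDR}(\hat\mu)\leq q$ is not established.
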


The proof of Theorem~\ref{THM:FDR} is given in Appendix~\ref{Appen:p2}. 
It relies on a careful look at the KKT conditions, also known as the dual-certificate method~\cite{Wainwright} or resolvent solution~\cite{slopeminimax}.
The assumptions of Theorem~\ref{THM:FDR} are natural. 
The boundedness assumption on the entries of $X$ are typically satisfied with a large probability when $X$ is Gaussian for instance.
When $n \rightarrow +\infty$, it is also natural to assume that $s \rightarrow +\infty$ (let us recall that $s$ stands for the sparsity of the sample outliers $\mu \in \R^n$).
The asymptotic assumptions roughly ask for the rates in Table~\ref{tab:rates} to converge to zero.
Finally, the assumption on the magnitude of the non-zero entries of $\mu^\ast$ is somehow unavoidable, since it allows 
to distinguish outliers from the Gaussian noise.
We emphasize that good numerical performances are actually obtained with lower magnitudes, as illustrated in Section~\ref{subsec:simu}.

\section{Numerical experiments}
\label{sec:simu}

In this section, we illustrate the performance of procedure~\eqref{eq:procedure_no_beta} and  procedure~\eqref{eq:procedure_with_two_slopes} both on simulated and real-word datasets, and compare them to several state-of-the art baselines described below. Experiments are done using the open-source \texttt{tick} library, available at \url{https://x-datainitiative.github.io/tick/}, notebooks allowing to reproduce our experiments are available on demand to the authors.

\subsection{Considered procedures} 
\label{par:baselines_procedures}

We consider the following baselines, featuring the best methods available in literature for the joint problem of outlier detection and estimation of the regression coefficients, together with the methods introduced in this paper.

\paragraph{E-SLOPE} 
\label{par:experiments_e_slope}

It is procedure~\eqref{eq:procedure_with_two_slopes}.
The weights used in both SLOPE penalizations are given by~\eqref{eq:slope_bh_weights}, with $q = 5\%$ (target 
FDR), except in low-dimensional setting where we do not apply any penalization on $\beta$. Similar results for $q=10\%$ and $q=20\%$ are provided in Appendix~\ref{appensim}.

\paragraph{E-LASSO} 
\label{par:experiments_lasso}

This is Extended LASSO from~\cite{elasso}, that uses two dedicated $\ell_1$-penalizations for $\beta$ and $\mu$ with respective tuning parameters $\lambda_{\beta} = 2\sigma \sqrt{\log p}$ and $\lambda_{\mu} = 2\sigma\sqrt{\log n}$.

\paragraph{IPOD} 
\label{par:experiments_IPOD}

This is (soft-)IPOD from~\cite{ipod}. 
It relies on a nice trick, based on a $QR$ decomposition of $X$. 
Indeed, write $X = QR$ and let $P$ be formed by $n-p$ column vectors completing the column 
vectors of $Q$ into an orthonormal basis, and introduce $\tilde{y} = P^\top y \in\R^{n-p}$.
Model~\eqref{model} can be rewritten as $\tilde{y} = P^\top \mu^\ast + \varepsilon$, a new high-dimensional
linear model, where only $\mu^\ast$ is to be estimated.
The IPOD then considers the Lasso procedure applied to this linear model, and a BIC criterion is used to choose the tuning parameter of $\ell_1$-penalization.
Note that this procedure, which involves a QR decomposition of $X$, makes sense only for $p$ significantly smaller than $n$, so that we do not report the performances of IPOD on simulations with a large $p$.

\paragraph{LassoCV} 
\label{par:experiments_LassoCV}

Same as IPOD but with tuning parameter for the penalization of individual intercepts chosen by cross-validation. 
As explained above, cross-validation is doomed to fail in the considered model, but results are shown for the sake of completeness.

\paragraph{SLOPE} 
\label{par:slope}

It is SLOPE applied to the concatenated problem, namely $y = Z \gamma^\star + \varepsilon$, where $Z$ is the concatenation of $X$ and $I_n$ and $\gamma^\star$ is the concatenation of $\beta^\star$ and $\mu^\star$.
We use a single SLOPE penalization on $\gamma$, with weights equal to~\eqref{eq:slope_bh_weights}.
We report the performances of this procedure only in high-dimensional experiments, since it always penalizes $\beta$.
This is considered mostly to illustrate the fact that working on the concatenated problem is indeed a bad idea, and that two distinct penalizations must be used on $\beta$ and $\mu$.

\medskip
Note that the difference between IPOD and E-LASSO is that, as explained in~\cite{elasso}, the weights used for 
E-LASSO to penalize $\mu$ (and $\beta$ in high-dimension) are fixed, while the weights in IPOD are data-dependent. 
Another difference is that IPOD do not extend well to a high-dimensional setting, since its natural extension
(considered in \cite{ipod}) is a thresholding rule on the the concatenated problem, which is, as explained before, and as illustrated in our numerical experiments, poorly performing.
Another problem is that there is no clear extension of the modified BIC criterion proposed in~\cite{ipod} for high-dimensional problems.

The tuning of the SLOPE or $\ell_1$ penalizations in the procedure described above require the knowledge of the noise level.
We overcome this simply by plugging in~\eqref{eq:slope_bh_weights} or wherever it is necessary a robust estimation of the variance: we first fit a Huber regression model, and apply a robust estimation of the variance of its residuals.
All procedures considered in our experiments use this same variance estimate.

\begin{rem}
The noise level can be estimated directly by the Huber regression since in our simulations $p<n$. When $p$ is comparable or larger than $n$ and the signal 
\textup(both $\beta^{\star}$ and $\mu^{\star}$\textup) is sufficiently sparse one can jointly estimate the noise level and other model parameters in the spirit of {\it scaled} LASSO \cite{sun2012scaled}. The corresponding iterative procedure for SLOPE was proposed and investigated  
in \cite{slope} in the context of high-dimensional regression with independent regressors. However, we feel that the problem of selection of the optimal estimator of $\sigma$ in ultra-high dimensional settings still requires a separate investigation and we postpone it for a further research.	
\end{rem}

\subsection{Simulation settings}
\label{subsec:simu}

The matrix $X$ is simulated as a matrix with i.i.d row distributed as $\mathcal N(0, \Sigma)$, with Toeplitz 
covariance $\Sigma_{i, j} = \rho^{|i-j|}$ for $1 \leq i, j \leq p$, with moderate correlation $\rho = 0.4$.
Some results with higher correlation $\rho = 0.8$ are given in Appendix~\ref{appensim}.
The columns of $X$ are normalized to $1$.
We simulate $n=5000$ observations according to model~\eqref{model} with $\sigma=1$ 
and $\beta^\ast_i = \sqrt{2\log p}$. Two levels of magnitude are considered for $\mu^\star$: \emph{low-magnitude}, where $\mu^\star_i = \sqrt{2\log n}$ and \emph{large-magnitude}, where $\mu^\star_i = 5\sqrt{2\log n}$.
In all reported results based on simulated datasets, the sparsity of $\mu^\star$ varies between $1\%$
to $50\%$, and we display the averages of FDR, MSE and power over 100 replications.

\paragraph{Setting~1 (low-dimension)}

This is the setting described above with $p=20$. Here $\beta^{\star}_1=\ldots=\beta^{\star}_{20}=\sqrt{2\log 20}$.  

\paragraph{Setting~2 (high-dimension)}

This is the setting described above with $p=1000$ and a sparse $\beta^\star$ with sparsity $k=50$, with non-zero entries chosen uniformly at random.

\subsection{Metrics} 
\label{sub:metrics}

In our experiments, we report the ``MSE coefficients'', namely  $\| \hat \beta - \beta^\star \|_2^2$ and the ``MSE intercepts'', namely $\| \hat \mu - \mu^\star \|_2^2$.
We report also the FDR~\eqref{eq:fdr} and the Power~\eqref{eq:power} to assess the procedures for the problem of outliers detection, where the expectations are approximated by averages over 100 simulations.

\subsection{Results and conclusions on simulated datasets} 
\label{sub:results_and_conclusions_on_simulated_datasets_}

We comment the displays provided in Figures~\ref{fig:lowdim1},~\ref{fig:lowdim2} and~\ref{fig:highdim1} below.
On Simulation Setting~2 we only display results for the low magnitude case, since it is the most challenging one. 
\begin{itemize}
\item In Figures~\ref{fig:lowdim1} and~\ref{fig:lowdim2}, LassoCV is very unstable, which is expected since cross-validation cannot work in the considered setting: data splits are highly non-stationary, because of the significant amount of outliers.
\item In the low dimensional setting, our procedure $E-SLOPE$ allows for almost perfect FDR control and its MSE is the smallest among all considered methods. Note that in this setting the MSE is plotted after debiasing the estimators, performing ordinary least squares on the selected support.
\item In the sparse (on $\beta$)  high dimensional setting with correlated regressors, E-SLOPE allows to keep FDR below the nominal level even when the outliers consist 50\% of the total data points. It also allows to maintain a small MSE and high power.
The only procedure that improves E-SLOPE in terms of MSE for $\mu$ is SLOPE in Figure~\ref{fig:highdim1}, at the cost of a worse FDR control.
\item E-SLOPE provides a massive gain of power compared to previous state-of-the-art procedures (power is increased by more than $30\%$) in settings where outliers are difficult to detect.
\end{itemize}
\begin{figure}
\centering
\includegraphics[width=\linewidth]{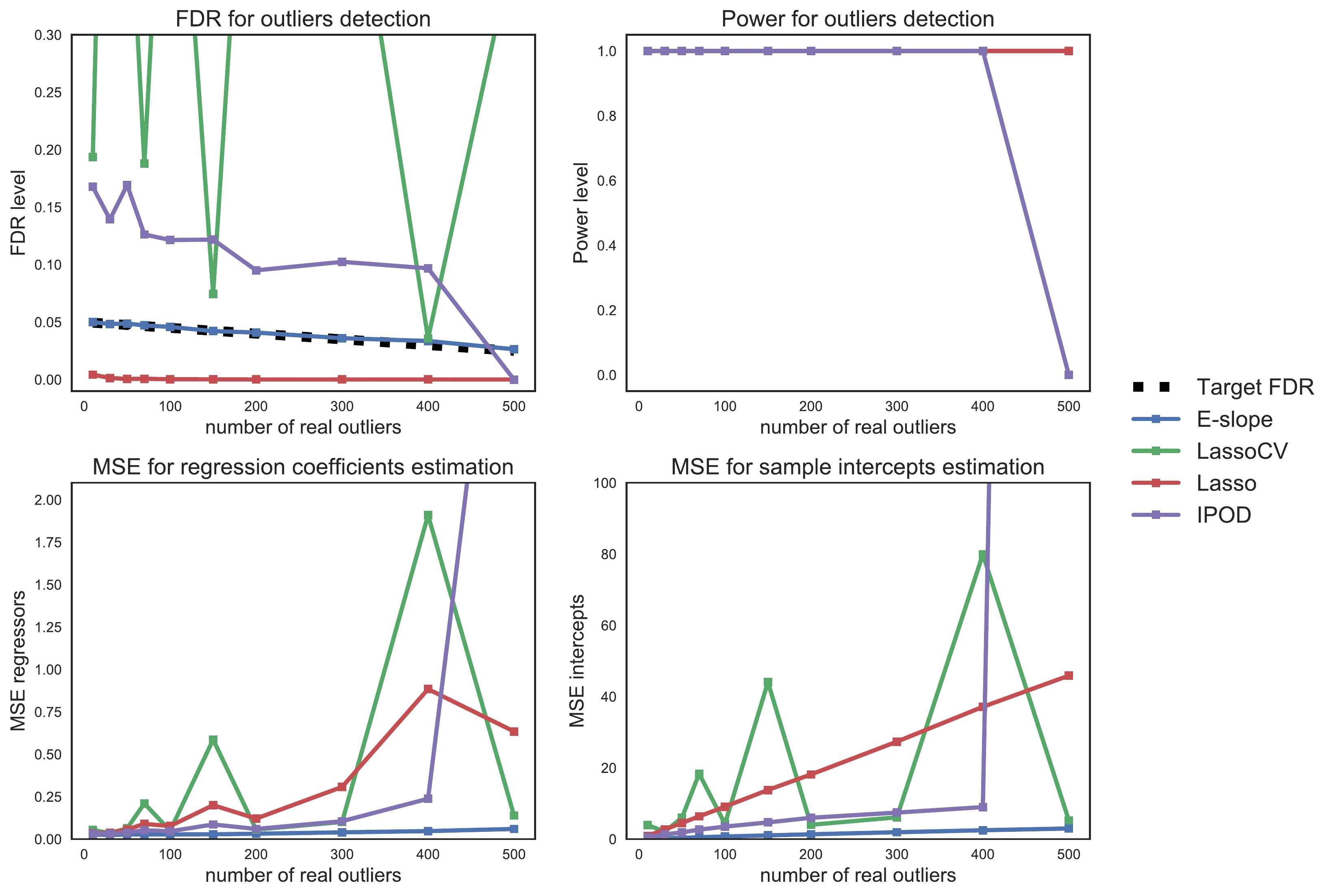}
\caption{Results for Simulation Setting~1 with high-magnitude outliers. First row gives the FDR (left) and power (right) of each considered procedure for outliers discoveries. 
Second row gives the MSE for regressors (left) and intercepts (right). 
E-SLOPE gives perfect power, is the only one to respect the required FDR, and provides the best MSEs.}
\label{fig:lowdim1}
\end{figure}
\begin{figure}
\centering
\includegraphics[width=\linewidth]{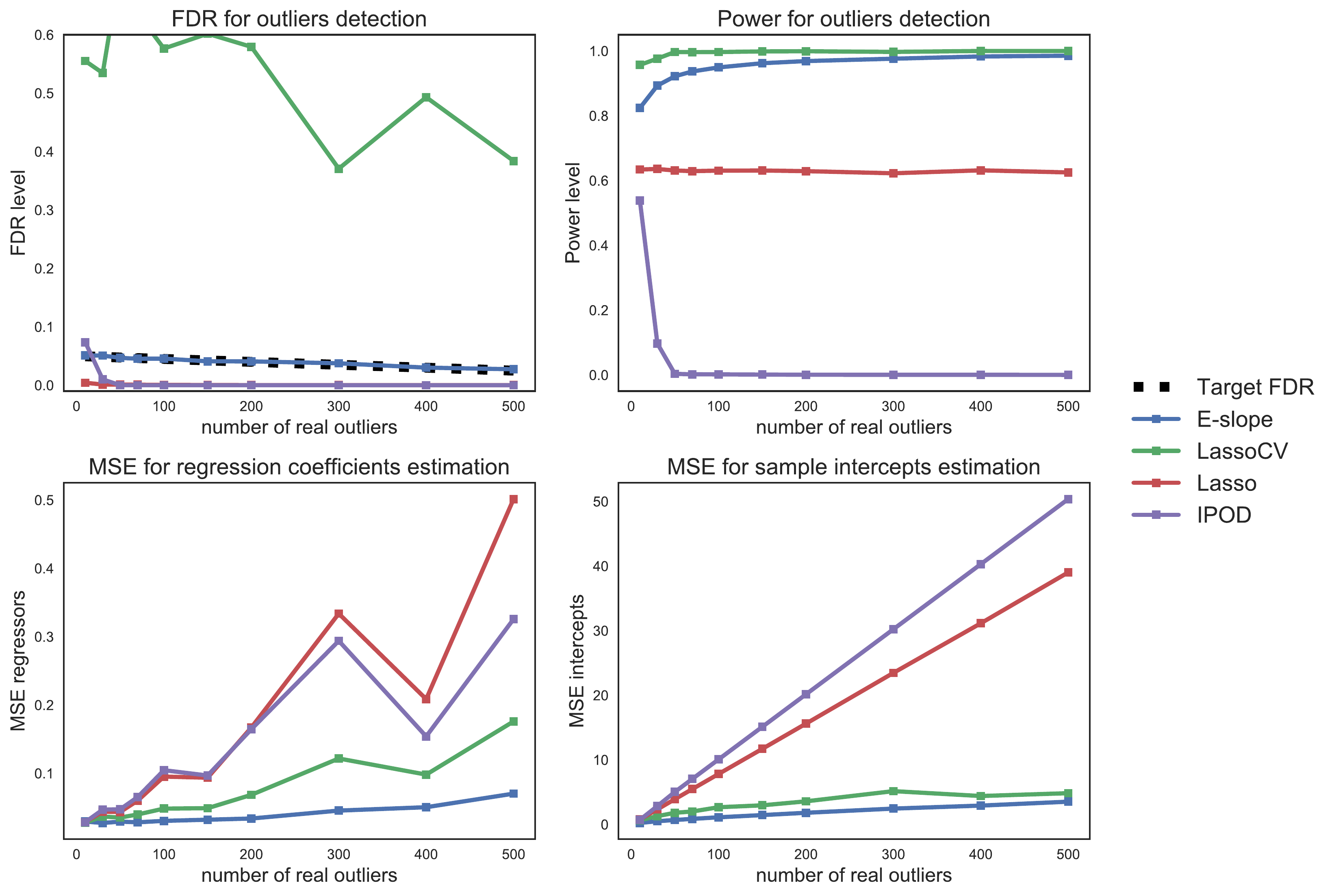}
\caption{Results for Simulation Setting~1 with low-magnitude outliers. First row gives the FDR (left) and power (right) of each considered procedure for outliers discoveries. 
Second row gives the MSE for regressors (left) and intercepts (right). 
Once again E-SLOPE nearly gives the best power, but is the only one to respect the required FDR, and provides the best MSEs.}
\label{fig:lowdim2}
\end{figure}
\begin{figure}
\centering
\includegraphics[width=\linewidth]{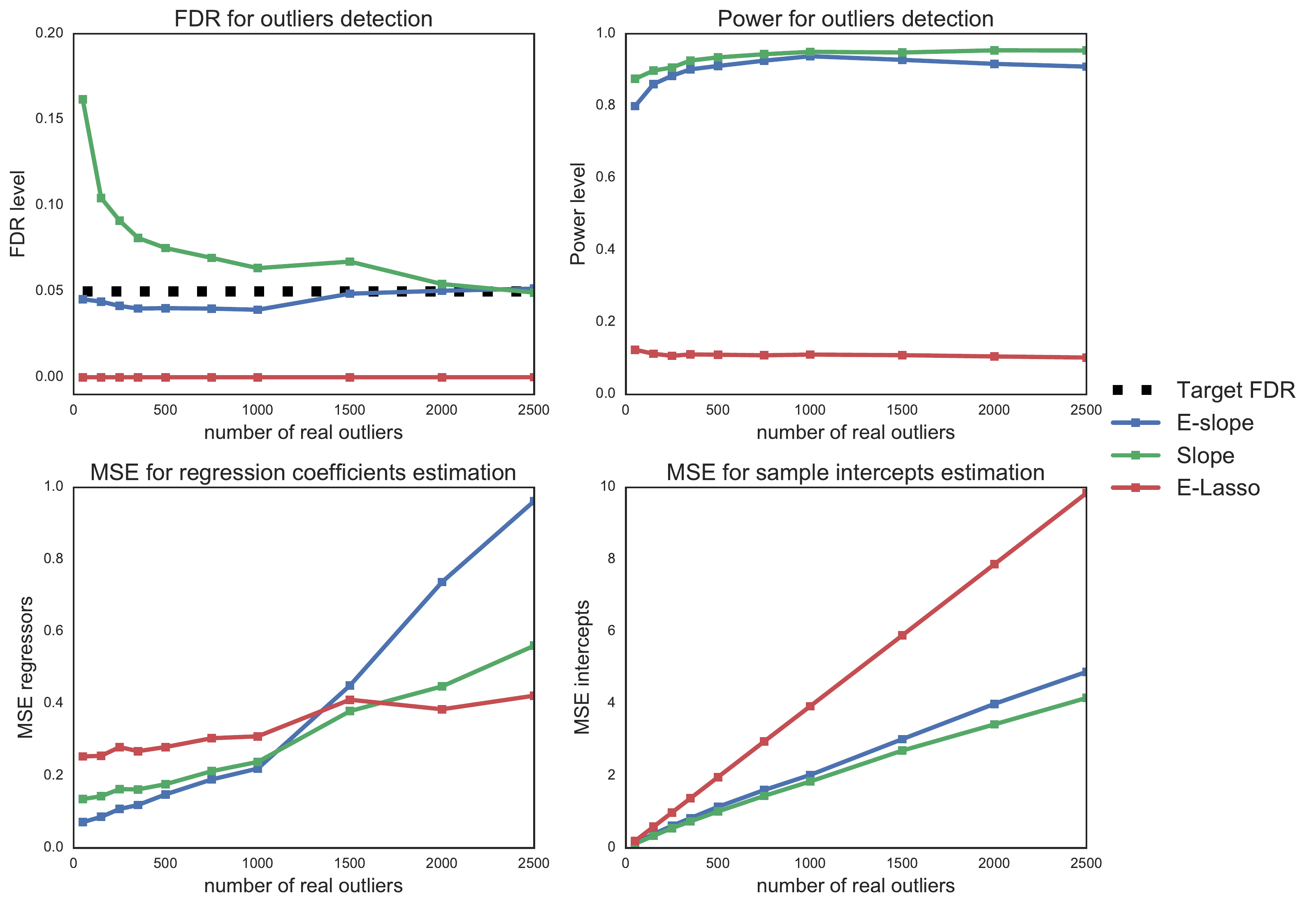}
\caption{Results for Simulation Setting~2 with low-magnitude outliers. First row gives the 
FDR (left) and power (right) of each considered procedure for outliers discoveries. 
Second row gives the MSE for regressors (left) and intercepts (right). 
Once again E-SLOPE nearly gives the best power, but is the only one to respect the required FDR.
It gives the best MSE for outliers estimation, and is competitive for regressors estimation.
All procedures have a poor MSE when the number of outliers is large, since the simulation setting 
considered in this experiment is hard: low-magnitude outliers and high-dimension.}
\label{fig:highdim1}
\end{figure}






\subsection{PGA/LPGA dataset}

This dataset contains Distance and Accuracy of shots, for PGA and LPGA players in 2008. This will allow us to visually compare the performance of IPOD, E-LASSO and E-SLOPE. Our data contain 197 points corresponding to PGA (men) players, to which we add~8 points corresponding to LPGA (women) players, injecting outliers. 
We apply SLOPE and LASSO on $\mu$ with several levels of penalization.
This leads to the ``regularization paths'' given in the top plots of Figure~\ref{fig:golf}, that shows the value of the 205 sample intercepts $\hat \mu$ as a function of the penalization level used in SLOPE and LASSO. Vertical lines indicate the choice of the parameter according the corresponding method (E-SLOPE, E-LASSO, IPOD).
We observe that E-SLOPE correctly discover the confirmed outliers (women data), together with two men observations that can be considered as outliers in view of the scatter plot. IPOD procedure does quite good, with no false discovery, but misses some real outliers (women data) and the suspicious point detected by E-SLOPE. E-LASSO does not make any false discovery but clearly reveals a lack of power, with only one discovery.

\begin{figure}[H]
\centering
\includegraphics[width=\linewidth]{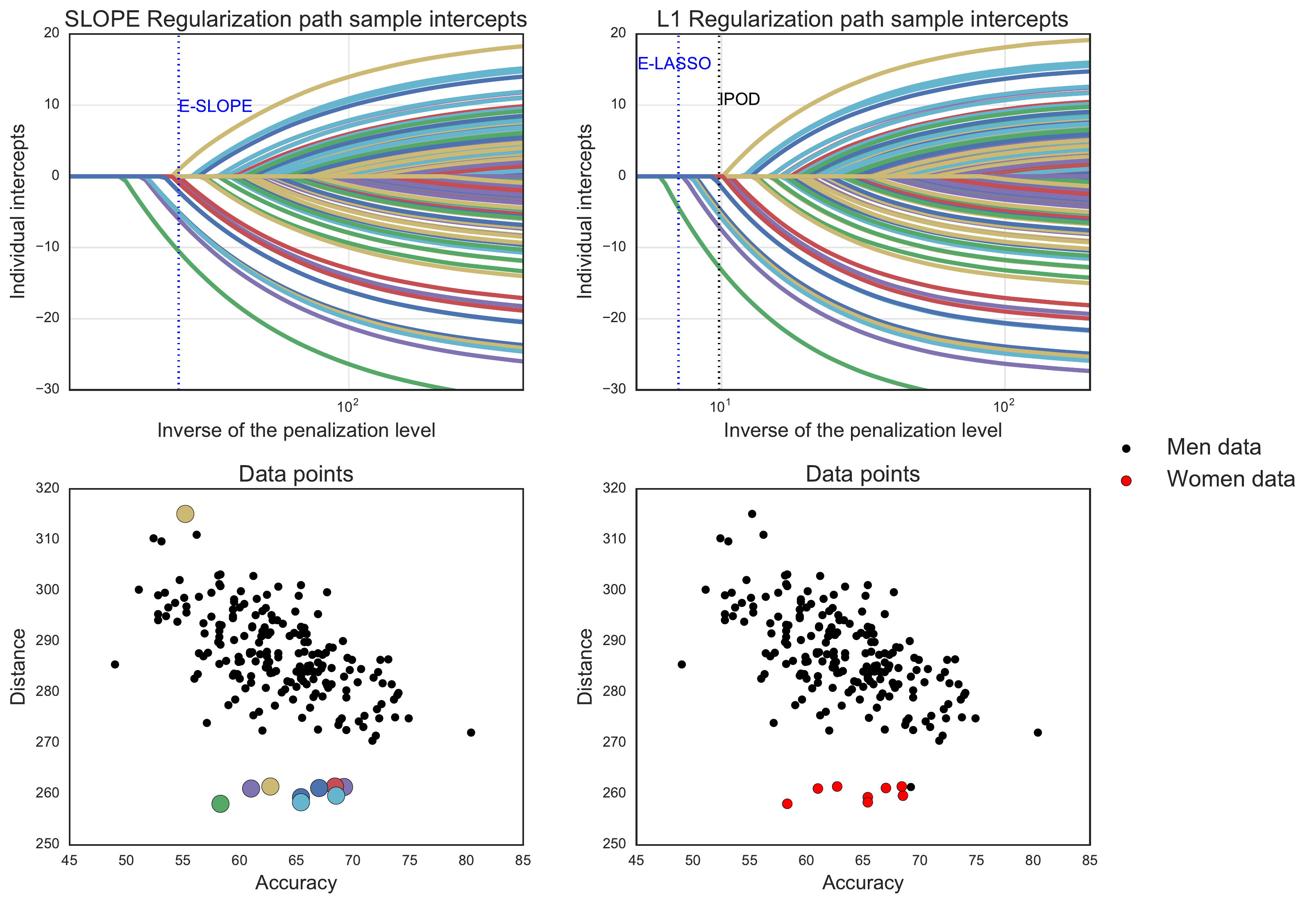}
\caption{PGA/LPGA dataset: top plots show the regularization paths for both types of penalization, bottom-left plot is a scatter plot of the data, with colored points corresponding to the discoveries made by E-SLOPE, bottom-right plot show the original data and the true outliers.}
\label{fig:golf}
\end{figure}

\subsection{Retail Sales Data}

This dataset is from the U.S. census Bureau, for year 1992.
The informations contained in it are the per capita retail sales of 845 US counties (in \$1000s). It also contains five covariates: the per capita retail establishments, the per capita income (in \$1000s), per capita federal expenditures (in \$1000s), and the number of males per 100 females.
No outliers are known, so we artificially create outliers by adding a small amount (magnitude $8$, random sign) to the retail sales of counties chosen uniformly at random. We consider various scenarii (from $1\%$ to $20\%$ of outliers) and compute the false discovery proportion and the power. Figure~\ref{fig:sales} below summarizes the results for the three procedures.

The results are in line with the fact that E-SLOPE is able to discover more outliers than its competitors. E-SLOPE has the highest power, and the FDP remains under the target level.

\begin{figure}[H]
\centering
\includegraphics[width=\linewidth]{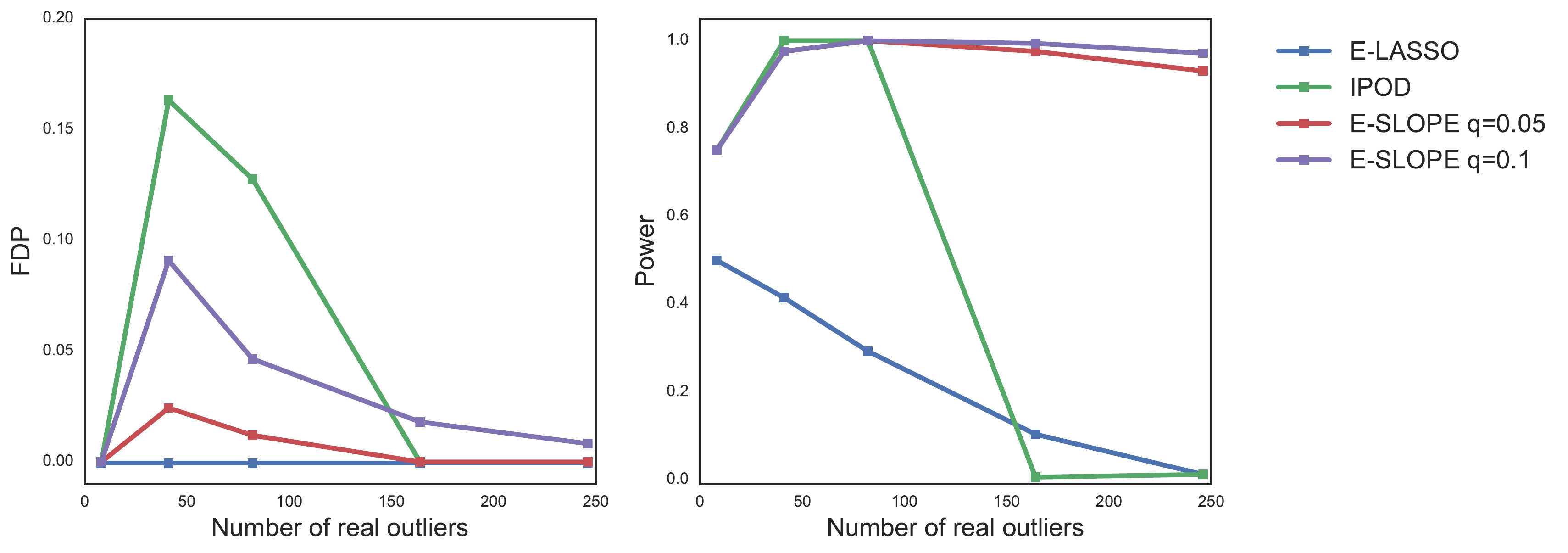}
\caption{\emph{Left}: False Discovery proportion, E-SLOPE remains under the target level; \emph{Right}:  power, E-SLOPE performs better than the competitors.}
\label{fig:sales}
\end{figure}

\subsection{Colorectal Cancer Data}

We consider whole exome sequencing data for 47 primary colorectal cancer tumors, characterized by a global genomic instability affecting repetitive DNA sequences (also known as microsatellite instable tumors, see \cite{bio1}).
In what follows, we restrict ourselves to repetitive sequences whose base motif is the single nucleotide A, and which are in regulatory regions (following the coding regions) that influence gene expression (UTR3). Same analysis could be run with different base motifs and different regions (exonic, intronic).
It has been shown in recent publications (see \cite{bio2}), that the probability of mutation of a sequence is dependent of the length of the repeat. 
So we fit, after a rescaled probit transformation, our mean-shift model with an intercept and the length of the repeat as covariates.  
The aim of the analysis is to find two categories of sequences: survivors (multi-satellites that mutated less than expected) and transformators (multi-satellites that mutated more than expected), with the idea that those sequences must play a key role in the cancer development.

We fix the FDR level $\alpha=5\%$, results are shown in Figure~\ref{fig:data_res}: blue dots are the observed mutation frequencies of each gene among the 47 tumors, plotted as a function of the repeat length of the corresponding gene, our discoveries are hightlighted in red.

We made 37 discoveries, and it is of particular interest to see that our procedure select both "obvious" outliers and more "challenging" observations that are discutable with the unaided eye. We also emphasize that with the IPOD procedure and the LASSO procedure described in the previous paragraph, respectively 32 and 22 discoveries were made, meaning that even with this stringent FDR level, our procedure allow us to make about $16\%$ more discoveries than IPOD.

\begin{figure}[H]
\centering
\includegraphics[width=0.8\textwidth]{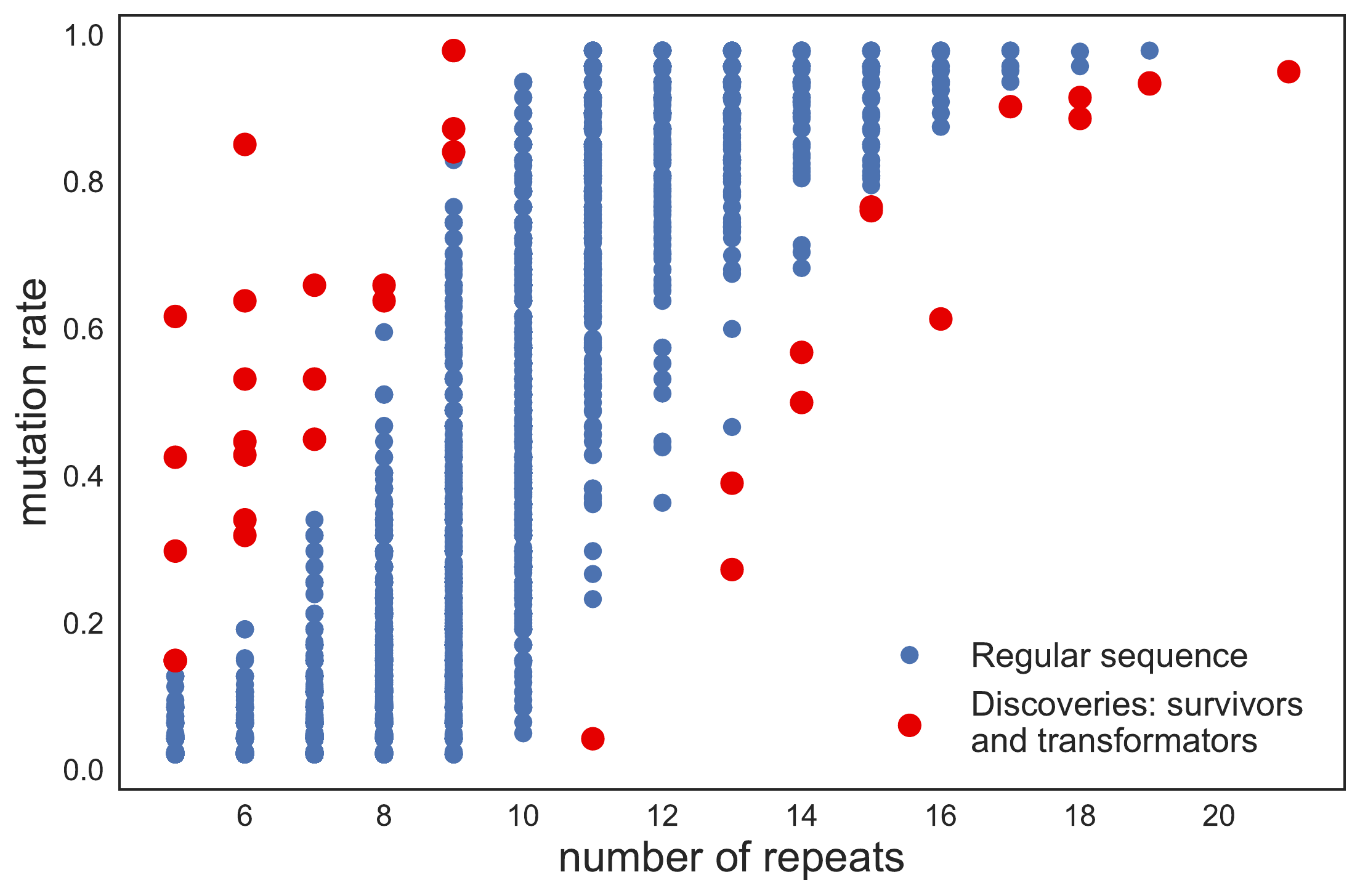}
\caption{Colorectal Cancer data: 37 discoveries made by E-SLOPE, namely sequence considered by our procedure as transformators or mutators (see main text for details). The procedure selects both ``obvious'' and more ``challenging'' observations for the eye. IPOD and Lasso procedures led only to 32 and 22 discoveries, despite the fact that we restricted E-SLOPE to the stringent FDR level of 5\%}
\label{fig:data_res}
\end{figure}

\section{Conclusion}

This paper introduces a novel approach for simultaneous robust estimation and outliers detection in the linear regression model. 
Three main results are provided: optimal bounds for the estimation problem in Section~\ref{SEC:BOUNDS}, that improve in particular previous results obtained with LASSO penalization~\cite{elasso}, and asymptotic FDR control and power consistency for the outlier detection problem in Section~\ref{sec:fdr}. 
To the best of our knowledge, this is the first result involving FDR control in this context. 

Our theoretical foundings are confirmed on intensive experiments both on real and synthetic datasets, showing that our procedure outperforms existing procedure in terms of power, while maintaining a control on the FDR, even in challenging situations such as low-magnitude outliers, a high-dimensional setting and highly correlated features.

Finally, this work extends the understanding of the deep connection between the SLOPE penalty and FDR control, previously studied in linear regression with orthogonal~\cite{slope} or i.i.d gaussian \cite{slopeminimax} features, which distinguishes SLOPE from other popular convex penalization methods.

\begin{appendix}

\section{Technical inequalities}
\label{appen:prel1}

The following technical inequalities are borrowed from \cite{Tsyb}, where proofs can be found.
Let $m,n,p$ be positive integers.
In the following lemma, an inequality for the sorted $\ell_1$-norm $J_\lambda$ (defined in equation \ref{sl1}) is stated.

\begin{lem}
For any two $x,y\in\R^m$, and any $s\in{1,\dots,m}$ such that $\abs{x}_0\leq s$ we have
\begin{equation*}
J_\lambda(x) - J_\lambda(y) \leq \Lambda (s) \norm{x-y}_2 - \sum_{j=s+1}^m \lambda_j \abs{x-y}_{(j)},
\end{equation*}
where
\begin{equation*}
\Lambda (s) = \sqrt{\sum_{j=1}^s \lambda_j^2}.
\end{equation*}
\label{lem:1}
\end{lem}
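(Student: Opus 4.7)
The plan is to exploit the sparsity of $x$ through a carefully chosen sub-optimal permutation. The naive triangle inequality $J_\lambda(x) - J_\lambda(y) \leq J_\lambda(x-y)$ is too weak: expanding $J_\lambda(x-y) = \sum_{j=1}^{s}\lambda_j|x-y|_{(j)} + \sum_{j=s+1}^{m}\lambda_j|x-y|_{(j)}$ and applying Cauchy--Schwarz to the head yields $\Lambda(s)\|x-y\|_2 + \sum_{j=s+1}^m\lambda_j|x-y|_{(j)}$, which has the wrong sign on the tail. The remedy is to lower-bound $J_\lambda(y)$ directly, using that a large part of the ``weight budget'' should be spent on the large coordinates of $y-x$ that live outside $\mathrm{supp}(x)$.

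I would set $v := x-y$ and $S := \mathrm{supp}(x)$, so $s_0 := |S|\leq s$. The key tool is the variational identity $J_\lambda(z) = \max_\pi \sum_{j=1}^{m}\lambda_j|z_{\pi(j)}|$, where the maximum is over permutations of $\{1,\dots,m\}$ (this is the rearrangement inequality applied to the sorted weights). Any specific permutation therefore yields a lower bound. I would choose $\pi$ that sends positions $1,\ldots,s_0$ bijectively to $S$ via the map $\tau$ which sorts $|x|$ decreasingly (so $|x_{\tau(j)}|=|x|_{(j)}$), and sends positions $s_0+1,\ldots,m$ bijectively to $S^c$ in the order sorting $|v|_{S^c}$ decreasingly. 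Since $y=-v$ on $S^c$ and $y=x-v$ on $S$, combined with $|x_{\tau(j)}-v_{\tau(j)}|\geq |x|_{(j)}-|v_{\tau(j)}|$, this gives
\begin{equation*}
J_\lambda(y) \;\geq\; \sum_{j=1}^{s_0}\lambda_j|x|_{(j)} - \sum_{j=1}^{s_0}\lambda_j |v_{\tau(j)}| + \sum_{k=1}^{m-s_0}\lambda_{s_0+k}|v|_{(k),S^c},
\end{equation*}
where $|v|_{(k),S^c}$ denotes the $k$-th largest among $\{|v_i|:i\in S^c\}$. The first sum equals $J_\lambda(x)$ because $|x|$ has at most $s_0$ nonzero entries.

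The middle sum is handled by Cauchy--Schwarz: $\sum_{j=1}^{s_0}\lambda_j |v_{\tau(j)}| \leq \bigl(\sum_{j=1}^{s_0}\lambda_j^2\bigr)^{1/2}\|v_S\|_2 \leq \Lambda(s)\|v\|_2$, using $s_0\leq s$ and $\|v_S\|_2\leq\|v\|_2$. For the last sum I would use the elementary observation that removing $s_0$ entries from the globally sorted list of $|v|$ only shifts indices: the $k$-th largest among $S^c$ satisfies $|v|_{(k),S^c}\geq|v|_{(k+s_0)}$. Combined with the non-increasing property of $\lambda$, this yields
\begin{equation*}
\sum_{k=1}^{m-s_0}\lambda_{s_0+k}|v|_{(k),S^c} \;\geq\; \sum_{j=s_0+1}^{m}\lambda_j|v|_{(j)} \;\geq\; \sum_{j=s+1}^{m}\lambda_j|v|_{(j)},
\end{equation*}
the last inequality following from $s_0\leq s$ and non-negativity of the dropped terms. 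Rearranging the lower bound on $J_\lambda(y)$ gives the claim.

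The main conceptual obstacle is Step 1: seeing that one should forgo the optimal (sorting) permutation in favor of one that mirrors the support structure of $x$. Once this permutation is fixed, the remaining work reduces to the reverse triangle inequality, a single Cauchy--Schwarz, and the index-shift argument for sorted sequences—each routine.
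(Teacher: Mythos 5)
Your proof is correct. Note that the paper itself does not prove Lemma~\ref{lem:1}; it is quoted from the reference \cite{Tsyb}, where the argument is essentially the one you give: lower-bound $J_\lambda(y)$ by evaluating the sorted-$\ell_1$ sum along a sub-optimal permutation that sorts $|x|$ on $\mathrm{supp}(x)$ and sorts $|x-y|$ on the complement, then apply the reverse triangle inequality, Cauchy--Schwarz on the first $s$ terms, and the index-shift bound $|v|_{(k),S^c}\geq |v|_{(k+s_0)}$. All three of your auxiliary steps (the variational characterization of $J_\lambda$ via the rearrangement inequality, the identification $\sum_{j\le s_0}\lambda_j|x|_{(j)}=J_\lambda(x)$, and the index-shift inequality) are sound, so nothing is missing.
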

The following lemma gives a preliminary bound for the prediction error in our context, that are the starting point of our proof.
\begin{lem}
Let $h:\R^p \rightarrow \R$ be a convex function. Consider a $n\times p$ design matrix $X$, a vector  $\varepsilon\in\R^n$ and define $y=X\beta^\star+\varepsilon$ where $\beta^\star\in\R^p$. If $\hat{\beta}$ is a solution of the minimization problem $\min_{\beta\in\R^p} (\norm{y - X\beta}_2^2 + 2h(\beta))$, then $\hat{\beta}$ satisfies:
\begin{equation*}
\norm{X\hat{\beta} - X\beta^\star}_2^2 \leq \varepsilon^\top X(\hat{\beta} - \beta^\star) + h(\beta^\star) - h(\hat{\beta}).
\end{equation*}
\label{lem:2}
\end{lem}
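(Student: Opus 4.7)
The plan is to exploit the first-order optimality condition satisfied by $\hat\beta$, combined with the convexity of $h$, and then to plug in the data-generating equation $y = X\beta^\star + \varepsilon$.

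More concretely, I would first observe that the objective $F(\beta) = \norm{y - X\beta}_2^2 + 2h(\beta)$ is convex (it is the sum of a convex quadratic and twice a convex function), so any minimizer $\hat\beta$ satisfies $0 \in \partial F(\hat\beta)$. Computing the subdifferential gives the existence of some $g \in \partial h(\hat\beta)$ with
\begin{equation*}
X^\top (y - X\hat\beta) = g.
\end{equation*}
The subgradient inequality applied to the convex function $h$ at the pair $(\hat\beta, \beta^\star)$ then yields
\begin{equation*}
g^\top (\beta^\star - \hat\beta) \leq h(\beta^\star) - h(\hat\beta).
\end{equation*}

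The second step is to combine these two relations. Substituting $g = X^\top(y - X\hat\beta)$ into the subgradient inequality gives
\begin{equation*}
(y - X\hat\beta)^\top X(\beta^\star - \hat\beta) \leq h(\beta^\star) - h(\hat\beta).
\end{equation*}
Using $y = X\beta^\star + \varepsilon$, the left-hand side expands as
\begin{equation*}
(X(\beta^\star - \hat\beta) + \varepsilon)^\top X(\beta^\star - \hat\beta) = \norm{X(\hat\beta - \beta^\star)}_2^2 + \varepsilon^\top X(\beta^\star - \hat\beta),
\end{equation*}
and rearranging gives the stated inequality
\begin{equation*}
\norm{X\hat\beta - X\beta^\star}_2^2 \leq \varepsilon^\top X(\hat\beta - \beta^\star) + h(\beta^\star) - h(\hat\beta).
\end{equation*}

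The only mild subtlety is that $h$ is not assumed differentiable (and in the applications of the lemma it is typically the sorted $\ell_1$ norm or a sum of such norms), so one really needs the subdifferential formulation rather than a gradient equation. If one prefers to avoid subgradients entirely, the same conclusion can be obtained by testing optimality against the convex combination $\beta_t = (1-t)\hat\beta + t\beta^\star$, using convexity of $h$ to bound $h(\beta_t) \leq (1-t) h(\hat\beta) + t h(\beta^\star)$, dividing by $t$, and letting $t \downarrow 0$. No step is expected to be a serious obstacle; the argument is essentially a one-line convex-analysis identity, and the main care lies in getting the signs right when substituting the noise model.
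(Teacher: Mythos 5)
Your proposal is correct and follows essentially the same route as the paper's proof: both use the first-order optimality condition $X^\top(y - X\hat\beta) \in \partial h(\hat\beta)$ together with the subgradient inequality $\langle g, \beta^\star - \hat\beta\rangle \le h(\beta^\star) - h(\hat\beta)$, differing only in the order in which the two ingredients are combined. The signs in your substitution of $y = X\beta^\star + \varepsilon$ check out, so nothing is missing.
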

\begin{proof}
Because the proof in \cite{Tsyb} is more general, we give a proof adapted to our context.
Optimality of $\hat{\beta}$ allows to choose $v$ in the subdifferential of $h$ sucht that
\begin{equation*}
0 = X^\top (X\hat{\beta} - y) + v = X^\top (X\hat{\beta} - X\beta^\star - \varepsilon) + v.
\end{equation*}
Therefore,
\begin{align*}
\norm{X\hat{\beta} - X\beta^\star}_2^2 &= (\hat{\beta} - \beta^\star)^\top X^\top X(\hat{\beta} - \beta^\star)  \\
&= (\hat{\beta} - \beta^\star)^\top (X^\top\varepsilon - v)\\
&= \varepsilon^\top X(\hat{\beta} - \beta^\star) + \langle v, \beta^\star - \hat{\beta} \rangle.
\end{align*}
Now, by definition of subdifferential, $h(\beta^\star) \geq h(\hat{\beta}) + \langle v, \beta^\star - \hat{\beta}  \rangle$. Combining this inequality with the previous equality leads to the conclusion.
\end{proof}

The following lemma allows to bound the inner product between a white Gaussian noise and any vector. The resulting bound involved the sorted $\ell_1$ norm.

\begin{lem}
Let $\delta_0\in (0,1)$ and let $X\in\R^{n\times p}$ with columns normed to~$1$. If $\varepsilon$ is $\mathcal{N}(0,I_n)$ distributed, then the event
\begin{equation*}
\left\{\forall u \in\R^p,\varepsilon^\top Xu \leq \max\left(H(u), G(u)\right) \right\}
\end{equation*}
is of probability at least $1-\delta_0/2$, where 
\begin{equation*}
H(u) = (4+ \sqrt{2})\sum_{j=1}^p \abs{u}_{(j)}\sigma \sqrt{\log(2p/j)}
\end{equation*}
 and 
\begin{equation*}
 G(u) = (4+\sqrt{2})\sigma \sqrt{\log(1/\delta_0)}\norm{u}_2.
 \end{equation*}
\label{lem:inner_product_bound}
\end{lem}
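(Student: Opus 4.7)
Set $Z := X^\top \varepsilon \in \R^p$. Since the columns of $X$ are unit-normed, each coordinate $Z_j = \langle Xe_j, \varepsilon\rangle$ is marginally $\mathcal N(0, \sigma^2)$, although the $Z_j$ may be correlated through $X^\top X$. Because $\varepsilon^\top Xu = \langle Z, u\rangle$, the target inequality is purely a statement about the Gaussian vector $Z$. A first reduction via the Hardy--Littlewood rearrangement inequality gives
\begin{equation*}
\langle Z, u\rangle \leq \sum_{j=1}^p \abs{Z}_{(j)} \abs{u}_{(j)},
\end{equation*}
so it suffices to bound this sorted sum uniformly in $u$ on a high-probability event.

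The key quantitative input is a Markov-type tail estimate for the order statistics of $Z$: letting $N(t) = \#\{i : \abs{Z_i} > t\}$, one has $\E[N(t)] \leq 2p\,\bar{\Phi}(t/\sigma) \leq p\, e^{-t^2/(2\sigma^2)}$ by linearity of expectation (no independence needed), and Markov's inequality yields
\begin{equation*}
\Proba\!\left(\abs{Z}_{(j)} > t\right) = \Proba(N(t) \geq j) \leq \frac{p}{j}\, e^{-t^2/(2\sigma^2)}.
\end{equation*}
Choosing $t_j^2/(2\sigma^2) = \log(2p/j) + \log(p/\delta_0)$ and union-bounding over $j\in\{1,\dots,p\}$ gives a single event of probability at least $1-\delta_0/2$ on which every order statistic is simultaneously controlled.

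To obtain the $\max$-form of the lemma, I would introduce the transition index $j^\star = \lceil 2p\delta_0\rceil$, which is precisely where $\log(2p/j)$ crosses $\log(1/\delta_0)$, and split the sorted sum accordingly. For the head $j \leq j^\star$ the contribution is bounded by an expression of order $\sum_j \sigma\sqrt{\log(2p/j)}\abs{u}_{(j)}$, feeding the $H(u)$ side. For the tail $j > j^\star$ one uses Cauchy--Schwarz,
\begin{equation*}
\sum_{j>j^\star} \abs{Z}_{(j)}\abs{u}_{(j)} \leq \Big(\sum_{j>j^\star} \abs{Z}_{(j)}^2\Big)^{1/2} \norm{u}_2,
\end{equation*}
and a sharp concentration bound on the truncated Euclidean norm (exploiting the Lipschitzness of $\varepsilon \mapsto \norm{X^\top\varepsilon}_2$ restricted to the complement of the ``head coordinates'' of $Z$) to dominate the first factor by a quantity of order $\sigma\sqrt{\log(1/\delta_0)}$. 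Combining the two regimes and absorbing the numerical constants from the Gaussian tail and from the peeling produces the prefactor $4 + \sqrt 2$ in the final bound.

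\textbf{Main obstacle.} The delicate step is the tail regime $j > j^\star$: a naive bound on $(\sum_{j>j^\star}\abs{Z}_{(j)}^2)^{1/2}$ produces an unwanted factor of $\sqrt{p-j^\star}$ rather than $\sqrt{\log(1/\delta_0)}$. Replacing this by a sharp bound requires the peeling/Gaussian-concentration step to be executed jointly with the union bound in the head, so that the probability budget $\delta_0/2$ is split consistently between the two contributions. This careful bookkeeping, together with the verification that the constant $4+\sqrt 2$ absorbs the resulting slack, is the technical content of the corresponding inequality in \cite{Tsyb}, from which the lemma is imported; my plan would be to invoke that result rather than re-derive it from scratch.
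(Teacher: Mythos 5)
The paper does not prove this lemma at all: it is imported from \cite{Tsyb} (the technical inequalities at the start of the appendix are stated there explicitly as ``borrowed from \cite{Tsyb}, where proofs can be found''), and your final move --- invoking that reference rather than re-deriving the bound --- is exactly what the paper does, so at that level the two agree. Be aware, however, that the sketch you give before the citation would not close on its own. The union bound over order statistics with thresholds $t_j^2 = 2\sigma^2\big(\log(2p/j)+\log(p/\delta_0)\big)$ yields a remainder of the form $\sigma\sqrt{\log(p/\delta_0)}\,\norm{u}_1$, i.e.\ an $\ell_1$ term with a $\log p$ inflation, rather than the $\ell_2$ term $G(u)=(4+\sqrt2)\sigma\sqrt{\log(1/\delta_0)}\norm{u}_2$; you correctly identify this as the obstacle, but the ``concentration of the truncated Euclidean norm of the tail order statistics'' you propose as a repair is not a standard step and is not how the cited result is obtained. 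The argument in \cite{Tsyb} instead bounds $\sup\{\varepsilon^\top Xu : H(u)\le 1,\ G(u)\le 1\}$ directly, using Gaussian concentration for Lipschitz functions together with a bound on the expectation of that supremum; the $H$ part arises from the mean and the $G$ part from the deviation term, which is what produces the $\max(H,G)$ structure and the constant $4+\sqrt2$. Since you explicitly defer to \cite{Tsyb} for the actual proof, your proposal is acceptable in the same sense the paper's treatment is, but the heuristic portion should not be presented as a self-contained derivation.
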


\section{Results related to Gaussian matrices}
\label{appen:prel2}

Inequalities for Gaussian random matrices are needed in what follows. They are stated here for the sake of clarity and we refer the reader to \cite{giraud} for proof (except bounds \ref{chi1} and \ref{chi2} that are taken from Lemma 1 in \cite{Massart}).
Again, $n$ and $p$ denote positive integers.

\begin{lem}
Let $X\in\R^{n\times p}$ with i.i.d $\mathcal{N}(0,I_p)$ rows. Denote by $\sigma_{max}$ the largest singular value of $X$. Then, for all $\tau \geq 0$,
\begin{equation}
\Proba \left( \frac{\sigma_{max}}{\sqrt{n}} \geq 1+ \sqrt{\frac{p}{n}} + \tau \right) \leq \exp\left(-\frac{n\tau^2}{2}\right).
\label{eq:sing}
\end{equation}
\label{lm:Gauss1}
\end{lem}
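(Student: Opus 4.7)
The plan is to establish the bound via two classical ingredients: a comparison inequality for Gaussian processes to control the expectation of $\sigma_{\max}$, and Gaussian concentration for Lipschitz functions to control the deviation around that expectation.

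First, I would write $\sigma_{\max}(X) = \sup_{(u,v) \in S^{n-1} \times S^{p-1}} u^\top X v$ and view $G_{u,v} := u^\top X v$ as a centered Gaussian process. I would compare it to the simpler process $H_{u,v} := g^\top u + h^\top v$, where $g \sim \mathcal N(0, I_n)$ and $h \sim \mathcal N(0, I_p)$ are independent. A direct computation gives $\E(G_{u,v} - G_{u',v'})^2 = 2 - 2(u^\top u')(v^\top v')$ and $\E(H_{u,v} - H_{u',v'})^2 = 4 - 2u^\top u' - 2v^\top v'$ on the product of unit spheres. The algebraic identity $(1-u^\top u')(1-v^\top v') \geq 0$, valid because $u^\top u', v^\top v' \leq 1$, yields $\E(G_{u,v} - G_{u',v'})^2 \leq \E(H_{u,v} - H_{u',v'})^2$. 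By Sudakov--Fernique,
\begin{equation*}
\E \sigma_{\max}(X) \;\leq\; \E \sup_{(u,v)} H_{u,v} \;=\; \E \|g\|_2 + \E \|h\|_2 \;\leq\; \sqrt{n} + \sqrt{p},
\end{equation*}
where the last step uses Jensen's inequality.

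Second, I would note that the map $X \mapsto \sigma_{\max}(X)$ is $1$-Lipschitz with respect to the Frobenius norm: by Weyl's inequality $|\sigma_{\max}(A) - \sigma_{\max}(B)| \leq \|A - B\|_{\mathrm{op}} \leq \|A - B\|_F$. Since vectorizing $X$ gives a standard Gaussian vector in $\R^{np}$, the Borell--Tsirelson--Ibragimov--Sudakov concentration inequality for Lipschitz functions of Gaussian vectors yields, for every $s \geq 0$,
\begin{equation*}
\Proba\bigl(\sigma_{\max}(X) \geq \E\sigma_{\max}(X) + s\bigr) \;\leq\; \exp(-s^2/2).
\end{equation*}

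Combining these two estimates with $s = \tau \sqrt{n}$ gives
\begin{equation*}
\Proba\bigl(\sigma_{\max}(X) \geq \sqrt{n} + \sqrt{p} + \tau\sqrt{n}\bigr) \;\leq\; \Proba\bigl(\sigma_{\max}(X) \geq \E\sigma_{\max}(X) + \tau\sqrt{n}\bigr) \;\leq\; \exp(-n\tau^2/2),
\end{equation*}
and dividing by $\sqrt{n}$ yields the stated inequality. The main delicate point is the Gaussian process comparison: verifying the increment inequality cleanly reduces to the simple identity $(1-x)(1-y) \geq 0$ for $x, y \in [-1,1]$, so there is no real obstacle beyond invoking the standard Sudakov--Fernique and Borell--TIS theorems.
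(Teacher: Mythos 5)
Your proof is correct and is the standard argument: Sudakov--Fernique comparison with the decoupled process $g^\top u + h^\top v$ to get $\E\,\sigma_{\max}(X)\le\sqrt{n}+\sqrt{p}$, followed by Borell--TIS concentration for the $1$-Lipschitz map $X\mapsto\sigma_{\max}(X)$. The paper does not prove this lemma itself but cites the textbook of Giraud, where essentially this same comparison-plus-concentration argument is given, so there is nothing to flag.
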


\begin{lem}
Concentration inequalities:
\begin{itemize}
\item Let $Z$ be $\mathcal{N}(0,1)$ distributed. Then for all $q\geq 0$:
\begin{equation}
\Proba\left( \abs{Z} \geq q \right) \leq \exp\left(-\frac{q^2}{2} \right).
\label{eq:gaus}
\end{equation}
\item Let $Z_1,Z_2,\dots,Z_p$ be independent and $ \mathcal{N}(0,\sigma^2)$ distributed. Then for all $L > 0$:
\begin{equation}
\Proba\left( \max_{i=1,\dots,p}\abs{Z_i} > \sigma\sqrt{2\log p +2L}  \right) \leq e^{-L}.
\label{eq:maxgaus}
\end{equation}
\item Let $X$ be $\chi_2(n)$ distributed. Then, for all $x>0$:
\begin{equation}
\Proba\left( X-n \geq 2\sqrt{nx} +2x\right) \leq \exp(-x).
\label{chi1}
\end{equation}
\begin{equation}
\Proba\left( n-X \geq 2\sqrt{nx} \right) \leq \exp(-x).
\label{chi2}
\end{equation}
\end{itemize}
\label{lm:Gauss2}
\end{lem}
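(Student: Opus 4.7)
The plan is to prove each of the four stated inequalities separately, using the Chernoff / Laplace transform method in each case. For a plan of this length it is enough to recall the Gaussian and chi-square moment generating functions and then to optimize the resulting exponent.

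First, for the tail bound on a standard Gaussian $Z$, I would start from the classical identity $\E[e^{\lambda Z}] = e^{\lambda^2/2}$. Applying Markov to $e^{\lambda Z}$ and optimizing in $\lambda$ yields $\Proba(Z \geq q) \leq e^{-q^2/2}$ at the choice $\lambda = q$. Combining this with the symmetric bound $\Proba(-Z \geq q) \leq e^{-q^2/2}$ (and, for the form stated, invoking the sharper Mill's ratio argument when necessary to suppress the factor of $2$) gives \eqref{eq:gaus}. From here, the second inequality \eqref{eq:maxgaus} is an immediate union bound: for i.i.d.\ $Z_i \sim \mathcal N(0,\sigma^2)$, apply \eqref{eq:gaus} to each $Z_i/\sigma$ with threshold $\sqrt{2\log p + 2L}$, then multiply by the union-bound factor $p$; the $p$ cancels the $\log p$ inside the exponent, leaving $e^{-L}$.

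For the chi-square tail bounds \eqref{chi1} and \eqref{chi2} I would again use the Laplace transform: if $X \sim \chi^2(n)$, then $\E[e^{\lambda X}] = (1-2\lambda)^{-n/2}$ for $\lambda < 1/2$. For the upper tail, apply Markov to $e^{\lambda (X - n)}$, yielding, for $\lambda \in (0, 1/2)$,
\begin{equation*}
\log \Proba(X - n \geq t) \leq -\lambda t - \lambda n - \tfrac{n}{2}\log(1 - 2\lambda).
\end{equation*}
Expanding the logarithm via the inequality $-\log(1-u) \leq u + u^2/(1-u)$ and optimizing over $\lambda$ gives a bound of the form $\exp(-x)$ at $t = 2\sqrt{nx}+2x$, which is exactly \eqref{chi1}. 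The lower-tail bound \eqref{chi2} is obtained the same way with a negative $\lambda$, where no singularity arises and one can use $-\log(1-u) \leq u + u^2/2$ for $u \leq 0$; this gives the simpler bound with only the $2\sqrt{nx}$ correction. This is the classical Laurent--Massart derivation, and indeed the statement points the reader to Lemma~1 of~\cite{Massart}.

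There is no genuine obstacle in any of these proofs: they are entirely classical. The only mildly delicate point is obtaining the sharp numerical constants in the chi-square bounds (specifically, getting exactly $2\sqrt{nx}+2x$ rather than a slightly larger quantity), which requires the two different expansions of $-\log(1-2\lambda)$ around $\lambda = 0$ mentioned above, and a careful separation of the two tails rather than a lazy symmetric argument. Since all four inequalities are standard references used only as black boxes in the main body of the paper, I would simply invoke \cite{giraud} and \cite{Massart} rather than reproduce the elementary computations.
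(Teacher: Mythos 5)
Your proposal is correct and matches the paper, which gives no proof of this lemma at all: it simply defers to \cite{giraud} for the Gaussian bounds and to Lemma~1 of \cite{Massart} for the chi-square tails, exactly as you suggest doing. Your Chernoff/union-bound sketches (including the remark that the factor of $2$ in the two-sided Gaussian bound must be absorbed by a slightly sharper argument, and the two different expansions of $-\log(1-2\lambda)$ for the upper and lower chi-square tails) are the standard derivations behind those cited results and contain no gaps.
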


The following recent result (\cite{REgauss}, Theorem~1) will also be useful.
\begin{lem}
Let $X\in\R^{n\times p}$ with i.i.d $\mathcal{N}(0,\Sigma)$ rows. There exists positive constants $c$ and $c'$ such that with probability greater than $1-c'\exp(-cn)$, we have for all $z\in\R^p$:
\begin{equation}
\frac{\norm{Xz}_2}{\sqrt{n}} \geq \frac{1}{4}\sqrt{\lambda_{min}(\Sigma)}\norm{z}_2 - 9\sqrt{\max_j \Sigma_{jj} \frac{\log p}{n}}\norm{z}_1,
\label{eq:re_gauss}
\end{equation}
where $\lambda_{min}(\Sigma)$ is the lowest eigenvalue of $\Sigma$.
\label{lm:re_gauss}
\end{lem}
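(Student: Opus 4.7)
The plan is to combine chi-squared concentration for fixed vectors with a covering argument over sparse supports and a peeling argument across sparsity levels, upgrading pointwise control into a uniform bound. This follows a standard strategy for Gaussian restricted eigenvalue inequalities.

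First, I would write $X = W\Sigma^{1/2}$ with $W\in\R^{n\times p}$ having i.i.d.\ $\mathcal{N}(0,1)$ entries. For fixed $z$ the vector $Xz$ is Gaussian with covariance $(z^\top\Sigma z)\,I_n$, so $\norm{Xz}_2^2/(z^\top\Sigma z)$ is $\chi^2_n$-distributed. Applying inequality~\eqref{chi2} from Lemma~\ref{lm:Gauss2} with $x=cn$ yields the pointwise bound $\norm{Xz}_2 \geq \tfrac{1}{2}\sqrt{n}\sqrt{z^\top\Sigma z} \geq \tfrac{1}{2}\sqrt{n\,\lambda_{\min}(\Sigma)}\,\norm{z}_2$ with probability at least $1-e^{-c'n}$, which is the right order for a suitable absolute constant.

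Second, I would upgrade to a uniform bound on $s$-sparse unit vectors. For any fixed support $S$ of size $s$, the unit sphere of $\R^S$ admits an $\varepsilon$-net of cardinality $(3/\varepsilon)^s$, hence a global net of size at most $\binom{p}{s}(3/\varepsilon)^s\leq (3ep/(s\varepsilon))^s$. Combining Gaussian concentration of the $\norm{u}_2$-Lipschitz map $W\mapsto \norm{Wu}_2$ with a union bound over this net, and using Lemma~\ref{lm:Gauss1} to control the discretization error via the largest singular value of $X$ restricted to $S$, one obtains, under the condition $s\log p\lesssim n$, a uniform lower bound $\norm{Xz}_2/\sqrt{n}\geq c_1\sqrt{\lambda_{\min}(\Sigma)} - c_2\sqrt{\max_j\Sigma_{jj}\,s\log p/n}$ over all $s$-sparse unit vectors with probability at least $1-e^{-c''n}$.

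Third, to extend from sparse vectors to arbitrary $z$, I would run a peeling argument. Write $z = z_S + z_{S^c}$ where $S$ indexes the top $s$ entries of $|z|$, so $\norm{z_{S^c}}_2\leq \norm{z}_1/\sqrt{s}$. Apply the sparse bound to $z_S$ and bound $\norm{Xz_{S^c}}_2$ by $\sigma_{\max}(X)\,\norm{z_{S^c}}_2$ using Lemma~\ref{lm:Gauss1}; choosing $s$ dyadically as a function of $\norm{z}_1/\norm{z}_2$ and taking a union bound over the $O(\log n)$ levels produces the claimed inequality $\norm{Xz}_2/\sqrt{n}\geq \tfrac{1}{4}\sqrt{\lambda_{\min}(\Sigma)}\norm{z}_2 - 9\sqrt{\max_j\Sigma_{jj}\log p/n}\,\norm{z}_1$. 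The main obstacle will be tracking the explicit constants $\tfrac{1}{4}$ and $9$: they force the covering, concentration, and peeling steps to be carried out with sharp non-asymptotic estimates, optimizing the trade-off between the size of the net and the exponent in the Gaussian concentration bound rather than relying on qualitative order-of-magnitude arguments.
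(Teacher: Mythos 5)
You should first be aware that the paper offers no proof of Lemma~\ref{lm:re_gauss}: it is imported verbatim as Theorem~1 of \cite{REgauss}, whose proof goes through Gordon's Gaussian min--max comparison inequality (which reduces the problem to bounding $\E\sup\{\langle v,\Sigma^{1/2}h\rangle : \norm{v}_1\le r\}=r\,\E\norm{\Sigma^{1/2}h}_\infty\le 3r\sqrt{\max_j\Sigma_{jj}\log p}$), concentration of Lipschitz functions of Gaussian matrices, and a peeling argument over the radius $r=\norm{v}_1/\norm{\Sigma^{1/2}v}_2$. Your route --- $\chi^2$ tails, covering nets over sparse supports, and a sparse-to-dense transfer --- is the standard restricted-isometry argument and is genuinely different. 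As written, though, its third step has a concrete gap: bounding $\norm{Xz_{S^c}}_2$ by $\sigma_{\max}(X)\norm{z_{S^c}}_2$ with the \emph{global} singular value from Lemma~\ref{lm:Gauss1} gives $\sigma_{\max}(X)/\sqrt n\asymp\sqrt{\lambda_{\max}(\Sigma)}\,(1+\sqrt{p/n})$, which diverges precisely in the regime $p\gg n$ where the lemma is needed, so the error term becomes $\sqrt{p/n}\cdot\sqrt{\log p/n}\,\norm{z}_1$ rather than $\sqrt{\log p/n}\,\norm{z}_1$. The standard repair is to split $z_{S^c}$ into consecutive blocks $S_1,S_2,\dots$ of size $s$ in decreasing order of magnitude, bound each $\norm{Xz_{S_j}}_2$ by the \emph{restricted} operator norm $\sup\{\norm{Xu}_2:\abs{u}_0\le s,\,\norm{u}_2=1\}$ (controlled by the same net and union bound as your second step), and use $\sum_{j\ge1}\norm{z_{S_j}}_2\le\norm{z}_1/\sqrt s$.

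Even with that repair, two quantitative features of \eqref{eq:re_gauss} are unlikely to survive your route. The restricted operator norm over supports of size $s$ scales like $\sqrt{n\,\lambda_{\max}(\Sigma_{SS})}$ maximized over $\abs{S}=s$, and $\lambda_{\max}(\Sigma_{SS})$ can be as large as $s\max_j\Sigma_{jj}$ for correlated designs; with $s\asymp n/\log p$ this loses a factor up to $\sqrt{s}$ relative to the stated $\sqrt{\max_j\Sigma_{jj}}$, which in \cite{REgauss} is produced by the $\ell_1$--$\ell_\infty$ duality inside the Gaussian width computation and has no analogue in the net argument. Likewise the explicit constants $1/4$ and $9$ fall out of Gordon's inequality essentially for free, whereas the covering route would at best recover them after a lossy optimization, as you anticipate. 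So your strategy can plausibly deliver an inequality of the same shape, but with $\max_j\Sigma_{jj}$ replaced by a sparse maximal eigenvalue of $\Sigma$ and with unspecified constants; to obtain the statement in the form the paper actually uses, you should follow the comparison-inequality proof of \cite{REgauss}.
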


\section{Proof of Section \ref{SEC:BOUNDS}}
\label{Appen:p1}

This section is devoted to the proof of our main results, stated in Section~\ref{SEC:BOUNDS}.

\subsection{Proof of Theorem \ref{thm:RE}}
\label{Appen:proof_RE}

Define $D$ the diagonal matrix such that $X=X'D$ ($D$ is the diagonal matrix formed by the inverse of the norm of each column of $X'$). Applying now Lemma~\ref{lm:re_gauss} for $X'$ and $Dz$ we obtain for all $z\in\R^p$
\begin{align*}
\norm{Xz}_2 &\geq  \frac{1}{4}\sqrt{\lambda_{min}(\Sigma)} \norm{\sqrt{n}Dz}_2 - 9\sqrt{\max_j \Sigma_{jj}\frac{\log p}{n}}\norm{\sqrt{n}Dz}_1 \\
&\geq \frac{\sqrt{n}\sqrt{\lambda_{min}(\Sigma)}}{4M}\norm{z}_2 - 9\frac{\sqrt{\max_j \Sigma_{jj}\log p}}{m}\norm{z}_1, 
\end{align*}
with probability greater than $1-c'\exp(-cn)$,
where $M$ and $m$ denote respectively the maximum and minimum of the norms of the columns of $X'$. Note that for all $1\leq i \leq p$, the squared norm of the $i^{th}$ column of $X'$ is $\sigma_i^2\chi_2 (n)$ distributed, so using the bounds \ref{chi1} and \ref{chi2} of Lemma \ref{lm:Gauss2} (respectively with $x=n$ and $x=n/16$), together with a union bound we obtain that with probability greater than $1-ne^{-n} - ne^{-n/16}$
\begin{equation*}
M\leq (\max_j \Sigma_{jj})\sqrt{5n}, \qquad m\geq (\min_j \Sigma_{jj})\sqrt{\frac{n}{2}},
\end{equation*}
and we eventually obtain
\begin{equation}
\norm{Xz}_2 \geq \frac{\sqrt{\lambda_{min}(\Sigma)}}{4\sqrt{5}\max_j \Sigma_{jj}}\norm{z}_2 - \frac{9}{\min_j \Sigma_{jj}}\sqrt{\frac{2\log p\max_j \Sigma_{jj}}{n}}\norm{z}_1.
\label{eq:Xn}
\end{equation}
Let define $v = [\beta^\top, \mu^\top]^\top \in \R^{p + n} \in\mathcal{C}(k,s, c_0)$ (see Definition \ref{Cewre}). Then,
\begin{equation}
\norm{\beta}_1 \leq \sum_{j=1}^p \frac{\tilde{\lambda}_j}{\tilde{\lambda}_p}\abs{\beta}_{(j)}\leq (1+c_0)\left(\sqrt{k}\norm{\beta}_2 + \sqrt{s}\norm{\mu}_2\right).
\end{equation}
Thus we obtain 
\begin{equation}
\norm{\beta}_1 \leq  (1+c_0)\left(\sqrt{k}\norm{\beta}_2 + \sqrt{s}\norm{\mu}_2
\right).
\label{eq:D2}
\end{equation}
Injecting \eqref{eq:D2} in \eqref{eq:Xn} applied to the vector $\beta$ now leads to 
\begin{multline}
\norm{X\beta}_2 + \norm{\mu}_2 \geq  \norm{\beta}_2 \Big( \frac{\sqrt{\lambda_{min}(\Sigma)}}{4\sqrt{5}\max_j \Sigma_{jj}} - \frac{9}{\min_j \Sigma_{jj}}(1+c_0) \sqrt{\frac{2k(\log p)\max_j \Sigma_{jj}}{n}} \Big) \\ 
+ \norm{\mu}_2\Big(1- \frac{9}{\min_j \Sigma_{jj}}(1+c_0) \sqrt{\frac{2s(\log p)\max_j \Sigma_{jj}}{n}} \Big).
\label{eq:D3}
\end{multline}
For $n$ large enough as explicited in the assumption of Theorem~\ref{thm:RE}, Equation~\eqref{eq:D3} turns to
\begin{equation*}
\norm{X\beta}_2 + \norm{\mu}_2 \geq \frac{\sqrt{\lambda_{min}(\Sigma)}}{8\sqrt{5}\max_j \Sigma_{jj}}\norm{\beta}_2 +  \frac{1}{2} \norm{\mu}_2,
\end{equation*}
and thus, using the fact that $2(a^2+b^2)\geq (a+b)^2$,
\begin{equation}
\norm{X\beta}_2^2 + \norm{\mu}_2^2 \geq \min\left\{\frac{\lambda_{min}(\Sigma)}{128\times 5(\max_j \Sigma_{jj})^2}, \frac{1}{8}\right\} \norm{v}_2^2.
\end{equation}
Now if $v = [\beta^\top, \mu^\top]^\top \in \R^{p + n} \in\mathcal{C}^p(s, c_0)$, Equation~\eqref{eq:Xn} together with the inequality $\norm{\beta}_1 \leq \sqrt{p}\norm{\beta}_2 $ lead to 
\begin{equation*}
\norm{X\beta}_2 + \norm{\mu}_2 \geq  \norm{\beta}_2 \Big( \frac{\sqrt{\lambda_{min}(\Sigma)}}{4\sqrt{5}\max_j \Sigma_{jj}} - \frac{9}{\min_j \Sigma_{jj}} \sqrt{\frac{2p\log p \max_j \Sigma_{jj}}{n}} \Big)
+ \norm{\mu}_2,
\end{equation*}
and we conclude as above.
Thus the first part of the theorem is satisfied.

Now, we must lower bound the scalar product $\langle X\beta, \mu \rangle $.\\ Divide $\{1,\dots,p\} = T_1\cup T_2\cup\dots \cup T_t$ with $T_i$ ($1\leq i \leq t-1$) of cardinality~$k'$ containing the support of the $k'$ largest absolute values of $b_{\left(\bigcup_{j=1}^{i-1} T_j\right)^c}$ and~$T_t$ of cardinality $k''\leq k'$ the support of the remaining values. Divide in the same way $\{1,\dots,n\} = S_1\cup S_2\cup\dots \cup S_q$ (of cardinalitys $s', \dots, s', s''\leq s'$) with respect to the largest absolute values of $\mu$ ($k'$ and $s'$ to be chosen later).
We use this to lower bound the scalar product:
\begin{equation*}
\vert\langle X\beta, \mu \rangle\vert =  \vert\langle X'D\beta, \mu \rangle\vert \leq \sum_{i=1}^q\sum_{j=1}^t \vert \langle X'_{S_i,T_j} (D\beta)_{T_j}, \mu_{S_i} \rangle \vert ,
\end{equation*}
so
\begin{equation}
\abs{\langle X\beta, \mu \rangle} \leq \max_{i,j} \nor{X'_{S_i,T_j}}_2 \frac{1}{m}\sum_{j=1}^t \nor{\beta_{T_j}}_2 \sum_{i=1}^q\nor{\mu_{S_i}}_2 ,
\label{eq:sp}
\end{equation}
where we recall that $m$ is the minimal value of the column norms of $X'$.
According to Lemma~\ref{lm:Gauss1}, conditionnally on $S_i$ and $T_j$, we have with probability greater than $1-\exp(-n\tau^2 /2)$,
\begin{equation*}
\nor{X'_{S_i,T_j}}_2 \leq \Vert\Sigma^{1/2}_{T_j,T_j}\Vert\big(\sqrt{k'} + \sqrt{s'} + \sqrt{s'}\tau  \big) \leq \sqrt{\lambda_{max}(\Sigma)}\big(\sqrt{k'} + \sqrt{s'} + \sqrt{s'}\tau  \big).
\end{equation*}
Considering all possibilities for $S_i$ and $T_j$, we have with probability greater than $1-\dbinom{p}{k'}\dbinom{n}{s'}e^{-n\tau ^2/2}$,
\begin{equation}
\max_{i,j}\nor{X'_{S_i, T_j}}_2 \leq \sqrt{\lambda_{max}(\Sigma)}\big(\sqrt{k'} + (1+\tau)\sqrt{s'} \big).
\label{eq:D7}
\end{equation}
Moreover, thanks to the decreasing value along the subset $T_j$ we can use the trick of \cite{CandesPlan}, writing for all $j\in\{1,\dots,t-1\}$ and all $x\in\{1,\dots,\abs{T_{j+1}}\}$: 
\begin{equation*}
\abs{\big(\beta_{T_{j+1}}\big)_x}\leq  \frac{\nor{\beta_{T_j}}_1}{\abs{T_j}}.
\end{equation*}
Squaring this inequality and summing over $x$ gives:
\begin{equation*}
\norm{\beta_{T_{j+1}}}_2^2 \leq  \frac{\norm{b_{T_j}}_1^2}{\abs{T_j}}\frac{\abs{T_{j+1}}}{\abs{T_j}} \leq \frac{\norm{b_{T_j}}_1^2}{\abs{T_j}} = \frac{\norm{\beta_{T_j}}_1^2}{k'}.
\end{equation*}
Then, 
\begin{equation*}
\sum_{j=1}^t \norm{\beta_{T_j}}_2 \leq \norm{\beta}_2 + \sum_{j=2}^t \norm{\beta_{T_j}}_2 \leq \norm{\beta}_2 + \frac{1}{\sqrt{k'}}\sum_{j=1}^{t-1} \norm{\beta_{T_j}}_1 \leq  \norm{\beta}_2  + \frac{1}{\sqrt{k'}}\norm{\beta}_1,
\end{equation*}
and so
\begin{equation}
\sum_{j=1}^t \norm{\beta_{T_j}}_2 \leq  \norm{\beta}_2  + \frac{1}{\sqrt{k'}}\sum_{j=1}^p \frac{\tilde{\lambda}_j}{\tilde{\lambda}_p}\abs{\beta}_{(j)}.
\label{eq:D8}
\end{equation}
In the same way we obtain:
\begin{equation}
\sum_{i=1}^q\norm{\mu_{S_i}}_2 \leq \norm{\mu}_2 + \frac{1}{\sqrt{s'}}\sum_{j=1}^n \frac{\lambda_j}{\lambda_n}\abs{\mu}_{(j)}.
\label{eq:D9}
\end{equation}
Now if $v = [\beta^\top, \mu^\top]^\top \in \R^{p + n} \in\mathcal{C}(k,s, c_0)$,
\begin{align*}
\sum_{j=1}^t \norm{\beta_{T_j}}_2 \sum_{i=1}^q\norm{\mu_{S_i}}_2 &\leq \big( \norm{\beta}_2 +\frac{1}{\sqrt{k'}} (1+c_0)(\sqrt{k}\norm{\beta}_2 + \sqrt{s}\norm{\mu}_2) \big)\\
&\quad \times \big( \norm{\mu}_2 +\frac{1}{\sqrt{s'}} (1+c_0)(\sqrt{k}\norm{\beta}_2 + \sqrt{s}\norm{\mu}_2) \big)\\
&\leq \left(2\norm{\beta}_2 + \norm{\mu}_2\right)\left(2\norm{\mu}_2 + \norm{\beta}_2\right)\\
&\leq 2\norm{v}_2^2 + 5\norm{\mu}_2\norm{\beta}_2\\
&\leq 5\norm{v}_2^2, 
\end{align*}
where we chose $k' = s' = (1+c_0)^2(k \vee s)$.
Combining this last inequality with Equations \eqref{eq:sp} and \eqref{eq:D7}, and using again that $m\geq \min_j \Sigma_{jj}\sqrt{n/2}$ with probability greater than $1-ne^{-n/16}$, lead to
\begin{equation}
\abs{\langle X\beta, \mu \rangle}  \leq \frac{\sqrt{\lambda_{max}(\Sigma)}}{\min_j \Sigma_{jj}}(2+\tau)\sqrt{\frac{2s'}{n}} 5\norm{v}_2^2.
\label{eq:D10}
\end{equation}
Note that with this choice of $s'$ and $k'$, the assumptions on $n$ and the constant~$C'$ defined in the theorem lead to $\dbinom{p}{k'}\leq (ep/k')^{k'} \leq \exp(n/C')$, and~$\dbinom{n}{s'}\leq (en/s')^{s'} \leq \exp(n/C')$ , so we have Equation \eqref{eq:D7} with probability greater than $1-\exp\left(-n\left(\tau^2 /2 - 2C'^{-1}\right) \right)$.
With the specific assumption on $n$ in the statement of the theorem, the term in the right part of Equation \eqref{eq:D10} is small enough to obtain:
\begin{equation}
2\abs{\langle Xb, u \rangle}  \leq \min\left\{\frac{\lambda_{min}(\Sigma)}{256\times 5 \max_j \Sigma_{jj}}, \; \frac{1}{16} \right\} \norm{v}_2^2
\label{eq:D11}
\end{equation}
Eventually, if $v = [\beta^\top, \mu^\top]^\top \in \R^{p + n} \in\mathcal{C}^p(s, c_0)$, Equation~\eqref{eq:D9} still holds, and combining it with Equation~\eqref{eq:D7} and Equation~\eqref{eq:sp} with $t=1$ leads to:
\begin{multline}
\abs{\langle X\beta, \mu \rangle}  \leq \frac{\sqrt{\lambda_{max}(\Sigma)}}{\min_j \Sigma_{jj}}(\sqrt{p} + (1+\tau)\sqrt{s'})\sqrt{\frac{2}{n}} \norm{\beta}_2 \\
\times \big( \norm{\mu}_2 +\frac{1}{\sqrt{s'}} (1+c_0)\left(\sqrt{p}\norm{\beta}_2 + \sqrt{s}\norm{\mu}_2\right) \big).
\label{eq:D12}
\end{multline}
Choosing $s' = (1+c_0)^2(p \vee s)$,
\begin{align*}
\abs{\langle X\beta, \mu \rangle}  &\leq \frac{\sqrt{\lambda_{max}(\Sigma)}}{\min_j \Sigma_{jj}}(2+\tau)\sqrt{\frac{2s'}{n}} \norm{\beta}_2 \left( 2\norm{\mu}_2 + \norm{\beta}_2 \right)\\
& \leq \frac{\sqrt{\lambda_{max}(\Sigma)}}{\min_j \Sigma_{jj}}(2+\tau)\sqrt{\frac{2s'}{n}}2\norm{v}_2^2.
\end{align*}
 We conclude as above, thus leading to the second part of the theorem.
 
\subsection{Proof of Theorem \ref{thm:upper_bound_nobeta}}
\label{proof:upper_bound_nobeta}
 
 We will actually show a slightly more general result. Let $R$ be any subset of cardinality $r$ containing the support of the true parameter $\mu^\star$ and $I_R$ be the matrix obtained by extracting columns with indices in $R$ from the identity matrix. We consider the following minimization:
 $$
 \hat{\beta}, \hat{\mu} = \argmin_{\beta, \mu} \Vert y - X\beta - I_R \mu \Vert_2 ^2 + 2\rho J_{\lambda^{[r]}} (\mu),
 $$
 where $\lambda^{[r]}$ contains the first $r$ terms of the sequence of weights defined in Section \ref{SEC:BOUNDS}. 
 Obviously, the theorem will result from the case $P=\{1,\dots,n\}$. Note that $\hat{\mu}$ belongs to $\R^r$.
 
Defining $b=\hat{\beta}-\beta^\star$ and $u=I_R (\hat{\mu}-\mu_R^\star)$ where $\mu_R^\star$ denotes the vector extracted from $\mu^\star$ selecting coordinates corresponding to indices in $R$ (note that the eliminated coordinates are zeros),  we can apply Lemma \ref{lem:2} to obtain:
\begin{align*}
\norm{Xb+ u}_2^2 &\leq \varepsilon^\top (Xb +  u) + \rho J_{\lambda^{[r]}}(\mu^\star) - \rho J_{\lambda^{[r]}}(\hat{\mu}) \\&= \varepsilon^\top (Xb +  u) + \rho J_\lambda(I_R\mu_R^\star) - \rho J_\lambda(I_R\hat{\mu}).
\end{align*}
Note that it is crucial to have $\supp(\mu^\star)\subset R$ in order to write $\mu^\star = I_R \mu_R^\star$.
Applying now Lemma \ref{lem:1} we obtain:
\begin{equation}
\norm{Xb +  u}_n^2 \leq \varepsilon^\top (Xb + u) + \rho\big(\Lambda(s)\norm{u}_2 - \sum_{j=s+1}^n \lambda_j \abs{u}_{(j)} \big),
\label{eq:C1}
\end{equation}
where $\Lambda(s)$ is defined as $\sqrt{\sum_{j=1}^s \lambda_j^2}$.
Hence, using Cauchy-Scwarz inequality we get:
\begin{equation*}
\norm{Xb+  u}_2^2 \leq \nor{X^\top \varepsilon}_2 \norm{b}_2 + \varepsilon^\top u + \rho\big(\Lambda(s)\norm{u}_2 - \sum_{j=s+1}^n \lambda_j \abs{u}_{(j)} \big).
\end{equation*}
Then, by Lemma~\ref{lem:inner_product_bound}, with probability greater than $1-\delta_0/2$ we have (the last inequality is used for the sake of simplicity):
\begin{equation*}
	\varepsilon^\top u \leq \max (H(u), G(u)) \leq H(u) + G(u), 	
\end{equation*}
with $H(u)$ and $G(u)$ defined in Lemma~\ref{lem:inner_product_bound}. Additionnally, $\frac{1}{\sigma^2}\norm{X^\top \varepsilon}_2^2$ follows a $\chi^2$ law with $p$ degrees of freedom, so by the third point in Lemma~\ref{lm:Gauss2} with $x=L p$ this provides, chosing $\delta_0 = (s/2n)^s$, that with probability greater than $1-\frac{1}{2}(s/2n)^s- \exp(-L p)$:
\begin{align*}
\norm{Xb+ u}_2^2 &\leq c_L\sigma\sqrt{p} \norm{b}_2 + H(u) + G(u) + \rho\big(\Lambda(s)\norm{u}_2 - \sum_{j=s+1}^n \lambda_j \abs{u}_{(j)} \big) \\
&\leq c_L \sigma \sqrt{p}\norm{b}_2 + \frac{\rho}{2}\sum_{j=1}^n \lambda_j \abs{u}_{(j)} \\
&\quad + \frac{\rho}{2}\sqrt{s\log(2n/s)}\norm{u}_2 + \rho\big(\Lambda(s)\norm{u}_2 - \sum_{j=s+1}^n \lambda_j \abs{u}_{(j)} \big) \\
&\leq c_L \sigma \sqrt{p}\norm{b}_2 + \big(2\rho\Lambda(s)\norm{u}_2 - \frac{\rho}{2} \sum_{j=s+1}^n \lambda_j \abs{u}_{(j)} \big),
\end{align*}
where $c_L = \sqrt{1+2L + 2\sqrt{L}}$ and where we used Equation~\eqref{eq:sum_log_inequality} to obtain the last inequality.
The fact that the left part of the last inequality is positive gives:
$$
\sum_{j=1}^n \lambda_j \abs{u}_{(j)} \leq  \sum_{j=s+1}^n \lambda_j \abs{u}_{(j)} + \Lambda(s)\norm{u}_2  \leq \frac{2}{\rho}c_L \sigma\sqrt{p}\norm{b}_2 + 5\Lambda(s)\norm{u}_2, 
$$
where the left part of the inequality is obtained using Cauchy-Schwarz inequality.
Hence,
\begin{equation}
\sum_{j=1}^n \frac{\lambda_j}{\lambda_n} \abs{u}_{(j)} \leq \frac{2c_L}{\rho} \sqrt{\frac{p}{\log 2}}\norm{b}_2 + 5\sqrt{\frac{s\log\left(2en/s \right)}{\log 2}} \norm{u}_2,
\end{equation}
where we used the right part of the inequality~\eqref{eq:sum_log_inequality}.
Choosing $L=1$ lead to $c_L = \sqrt{5}$, and reminding that $\rho \geq 2(4+\sqrt{2})$ we conclude that $ [b^\top, u^\top]^\top \in \mathcal{C}^p(s_1,4)$ (see Definition~\ref{Cewre}) with $s_1 = \frac{s\log\left(2en/s\right)}{\log 2}$.
Therefore, by Condition~\ref{Hyp} and the definition of $\kappa$ therein :
\begin{align*}
2\norm{Xb+ u}_2^2 &\leq 2\sqrt{5}\sigma\sqrt{p}\norm{b}_2 + 4\rho 
\Lambda(s)\norm{u}_2 \\
&\leq  \frac{5\sigma^2}{\kappa^2}p + \kappa^2\norm{b}_2^2 + 
\frac{4\rho^2 \Lambda(s)^2}{\kappa^2} + \kappa^2\norm{u}_2^2\\
&\leq \frac{4\rho^2 }{\kappa^2}\Lambda(s)^2 + \frac{5\sigma^2}{\kappa^2}p 
+ \kappa^2 \norm{v}_2^2\\
& \leq \frac{4\rho^2 }{\kappa^2}\Lambda(s)^2 + \frac{5\sigma^2}{\kappa^2}p 
+ \norm{Xb+ u}_2^2 .
\end{align*}
Thus,
\begin{equation*}
\norm{Xb+u}_2^2 \leq \frac{4\rho^2 }{\kappa^2}\Lambda(s)^2 
+ \frac{5\sigma^2}{\kappa^2}p,
\end{equation*}
and 
\begin{equation*}
\norm{b}_2^2 + \norm{u}_2^2 \leq \frac{4\rho^2}{\kappa^4}\Lambda(s)^2 + \frac{5\sigma^2 }{\kappa^4}p.
\end{equation*}
The proof of Theorem \ref{thm:upper_bound_nobeta} concludes by the classical inequalities~\cite{Tsyb}
\begin{equation}
	\label{eq:sum_log_inequality}
	s \log\Big(\frac{2n}{s} \Big) \leq \sum_{j=1}^s \log \Big(\frac{2n}{j} \Big) 
	= s\log(2n) - \log(s!) \leq s\log\Big(\frac{2en}{s}\Big).
\end{equation}

\subsection{Proof of Theorem \ref{thm:upper_bound_beta_l1}}

As in the previous proof, the more general version still holds and in the same way we obtained \eqref{eq:C1}, with the same definition of $b$ and $u$, we now have:
\begin{equation*}
\norm{Xb+ u}_2^2 \leq \varepsilon^\top (Xb + u) + \nu (\norm{\beta^\star}_1 - \Vert\hat{\beta}\Vert_1) + \rho (\Lambda(s)\norm{u}_2 - \sum_{j=s+1}^n \lambda_j \abs{u}_{(j)}).	
\end{equation*}
With $T$ being the support of the true regression vector $\beta^\star$ we have, using the triangle inequality:
$$
\norm{\beta^\star}_1 - \Vert\hat{\beta}\Vert_1 = \norm{\beta^\star_T}_1 - \Vert b+\beta^\star\Vert_1 = \norm{\beta^\star_T}_1 - \norm{b_T + \beta^\star_T}_1 - \norm{b_{T^c}}_1 \leq \norm{b_T}_1 - \norm{b_{T^c}}_1.
$$
Hence we can write:
\begin{align*}
\norm{Xb+ u}_2^2 & \leq \nor{X^\top \varepsilon}_\infty \norm{b}_1 + \nu \left(\norm{b_T}_1 - \norm{b_{T^c}}_1 \right) + \varepsilon^\top u \\
&\quad + \rho\Lambda(s)\norm{u}_2 - \rho\sum_{j=s+1}^n \lambda_j \abs{u}_{(j)}\\
& \leq  \norm{b_T}_1(\nu + \nor{X^\top \varepsilon}_\infty) - \norm{b_{T^c}}_1( \nu - \nor{X^\top \varepsilon}_\infty ) + \varepsilon^\top u \\
&\quad + \rho\Lambda(s)\norm{u}_2  - \rho\sum_{j=s+1}^n \lambda_j \abs{u}_{(j)}.
\end{align*}
With the choice  $\nu=4\sigma\sqrt{\log p}$ we have $\nor{X^\top \varepsilon}_\infty \leq \nu/2$ according to Lemma~\ref{lm:Gauss2}, with probability greater than $1-\frac{1}{p}$. Using again Lemma \ref{lem:inner_product_bound} to bound $\varepsilon^\top u $, we obtain that with probability greater than $1-\frac{1}{2}\left(\frac{s}{2n}\right)^s - \frac{1}{p}$:
\begin{equation}
\norm{Xb+ u}_2^2 \leq \norm{b_T}_1 (6\sigma\sqrt{\log p}) - \norm{b_{T^c}}_1 (2\sigma\sqrt{\log p} ) +  2\rho\Lambda(s)\norm{u}_2 - \frac{\rho}{2}\sum_{j=s+1}^n \lambda_j \abs{u}_{(j)}.
\label{eq:C3}
\end{equation}
The fact that the left part of the inequality is positive gives:
$$
\frac{4}{\rho}\sigma\sqrt{\log p}\norm{b_{T^c}}_1 + \sum_{j=s+1}^n \lambda_j \abs{u}_{(j)} \leq  \frac{12}{\rho}\sigma\sqrt{\log p}\norm{b_T}_1 + 4\Lambda(s)\norm{u}_2,
$$
and using Cauchy-Schwarz inequality, this leads to:
$$
\frac{4}{\rho}\sigma\sqrt{\log p}\norm{b}_1 + \sum_{j=1}^n \lambda_j \abs{u}_{(j)} \leq \frac{16}{\rho}\sigma\sqrt{k\log p}\norm{b}_2 + 5\Lambda(s)\norm{u}_2
$$
Eventually we obtain, because $\lambda_n = \sigma\sqrt{\log 2}$ and $\sqrt{\log p} \geq \frac{\rho \log 2}{4}$:
\begin{equation}
\label{eq:cone_lasso_slope}
\norm{b}_1 +  \sum_{j=1}^n \frac{\lambda_j}{\lambda_n} \abs{u}_{(j)} \leq \frac{4\sigma\sqrt{\log p}}{\rho\lambda_n}\norm{b}_1 + \sum_{j=1}^n \frac{\lambda_j}{\lambda_n} \abs{u}_{(j)}\leq \frac{16\sigma\sqrt{k\log p}}{\rho\lambda_n}\norm{b}_2 + \frac{5\Lambda(s)}{\lambda_n}\norm{u}_2
\end{equation}
and the concatenated vector of $b$ and $u$ is therefore in the cone $\mathcal{C}(k_1,s_1,4)$ with $k_1 = 16k\log p/\log 2$ and $s_1 = s\log (2en/s)/\log 2$.
Starting from \eqref{eq:C3}, we obtain, using again $\kappa$ as the capacity constant in Assumption \ref{Hyp}:
\begin{align*}
2\norm{Xb+u}_2^2 &\leq \norm{b_T}_1 12\sigma\sqrt{\log p} + 4\rho\Lambda(s)\norm{u}_2\\
& \leq 12\sigma\sqrt{k\log p}\norm{b}_2 + 4\rho\Lambda(s)\norm{u}_2\\
& \leq \frac{36}{\kappa^2}\sigma^2 k\log p + \kappa^2\norm{b}_2^2 + \frac{4\rho^2}{\kappa^2}\Lambda(s)^2 + \kappa^2 \norm{u}_2^2 \\
&\leq \frac{36}{\kappa^2}\sigma^2 k\log p + \frac{4\rho^2}{\kappa^2}\Lambda(s)^2 + \kappa^2 \norm{v}_2^2\\
& \leq \frac{36}{\kappa^2}\sigma^2 k\log p + \frac{4\rho^2}{\kappa^2}\Lambda(s)^2 + \norm{Xb+ u}_2^2. 
\end{align*}
Thus,
$$
\norm{Xb+ u}_2^2 \leq \frac{36}{\kappa^2}\sigma^2 k\log p + \frac{4\rho^2}{\kappa^2}\Lambda(s)^2
$$
and using again Assumption \ref{Hyp} and the remark after:
$$
\norm{b}_2^2 + \norm{u}_2^2 \leq \frac{36}{\kappa^4}\sigma^2 k\log p + \frac{4}{\kappa^4}\Lambda(s)^2
$$

\subsection{Proof of Theorem \ref{thm:upper_bound_beta_sl1}}
\label{proof:beta_sl1}

In the same way we obtained \eqref{eq:C1}, we now have:
$$
\norm{Xb+ u}_2^2 \leq \varepsilon^\top (Xb +  u) + \rho \big(\tilde{\Lambda}(k) \norm{b}_2 - \sum_{j=k+1}^p \tilde{\lambda}_j \abs{b}_{(j)} \big) + \rho \big(\Lambda(s)\norm{u}_2 - \sum_{j=s+1}^n \lambda_j \abs{u}_{(j)} \big)
$$
We use twice Lemma \ref{lem:inner_product_bound} to bound $\varepsilon^\top Xb$ and $\varepsilon^\top u$ with $(k/2p)^k$ and~$(s/2n)^s$ as respective choices of $\delta_0$, so that with probability $1- \frac{1}{2}\left(\frac{s}{2n}\right)^{s} - \frac{1}{2}\left(\frac{k}{2p}\right)^{k} $:
\begin{align*}
\norm{Xb+ u}_2^2 &\leq H(b) + G(b) + H(u) + G(u)  \\
&\quad + \rho\big(\tilde{\Lambda}(k) \norm{b}_2 - \sum_{j=k+1}^p \tilde{\lambda}_j \abs{b}_{(j)}\big) + \rho\big(\Lambda(s)\norm{u}_2 - \sum_{j=s+1}^n \lambda_j \abs{u}_{(j)}\big)\\
&\leq \frac{\rho}{2}\sum_{j=1}^p \tilde{\lambda}_j \abs{b}_{(j)} + \frac{\rho}{2}\sqrt{k\log(2p/k)}\norm{b}_2 + \rho\big(\tilde{\Lambda}(k)\norm{b}_2 - \sum_{j=k+1}^p \tilde{\lambda}_j \abs{b}_{(j)} \big) \\
& \quad +2\rho\Lambda(s)\norm{u}_2 - \frac{\rho}{2} \sum_{j=s+1}^n \lambda_j \abs{u}_{(j)}\\
&\leq \frac{\rho}{2}4\tilde{\Lambda}(k)\norm{b}_2 - \frac{\rho}{2}\sum_{j=k+1}^p \tilde{\lambda}_j \abs{b}_{(j)}  + 2\rho\Lambda(s)\norm{u}_2 - \frac{\rho}{2} \sum_{j=s+1}^n \lambda_j \abs{u}_{(j)},
\end{align*}
where we use Equation~\eqref{eq:sum_log_inequality} to obtain the last inequality.
The left part of the inequality is positive so
\begin{equation}
\sum_{j=k+1}^p \tilde{\lambda}_j \abs{b}_{(j)} + \sum_{j=s+1}^n \lambda_j \abs{u}_{(j)} \leq 4\tilde{\Lambda}(k) \norm{b}_2 + 4\Lambda(s)\norm{u}_2,
\label{eq:C4}
\end{equation}
and
\begin{equation}
2\norm{Xb+ u}_2^2 \leq 4\rho\tilde{\Lambda}(k) \norm{b}_2 + 4\rho\Lambda(s)\norm{u}_2.
\label{eq:C5}
\end{equation}
Equation~\eqref{eq:C4} together with the Cauchy-Schwarz inequality lead to
\begin{equation}
\label{eq:cone_with_two_slope}
\sum_{j=1}^p \tilde{\lambda}_j \abs{b}_{(j)} + \sum_{j=1}^n \lambda_j \abs{u}_{(j)} \leq 5\tilde{\Lambda}(k) \norm{b}_2 + 5\Lambda(s)\norm{u}_2.
\end{equation}
 Combining the equation above with Equation~\eqref{eq:sum_log_inequality} show that the concatenated estimator is in $\mathcal{C}(k_1,s_1,4)$ with $s_1$ and $k_1$ as in the statement of the theorem (note that~$\tilde{\lambda}_n =~\lambda_n =~\sigma\sqrt{\log 2}$) and so, noting $\kappa$ the capacity constant of Assumption~\ref{Hyp}, Equation~\eqref{eq:C5} leads to:

\begin{align*}
2\norm{Xb+ u}_2^2 &\leq (3+C)^2\frac{\tilde{\Lambda}(k)^2}{2\kappa^2} + \kappa^2 \norm{b}_2^2 + 4\rho^2\frac{\Lambda(s)^2}{\kappa^2} + \kappa^2 \norm{u}_2^2\\
&\leq \frac{C'}{\kappa^2}\left( \tilde{\Lambda}(k)^2 + \Lambda(s)^2 \right) +  \norm{Xb+ u}_2^2,
\end{align*}
where $C' = 4\rho^2 \vee (3+C)^2 / 2$.
 Finally:
 $$
 \norm{Xb+ u}_2^2 \leq \frac{C'}{\kappa^2}\left( \tilde{\Lambda}(k)^2 + \Lambda(s)^2 \right),
 $$
 and
 $$
 \norm{b}_2^2 + \norm{u}_2^2 \leq \frac{C'}{\kappa^4}\left( \tilde{\Lambda}(k)^2 + \Lambda(s)^2 \right).
 $$
 
 \section{Proof of Theorem \ref{THM:FDR}}
 \label{Appen:p2}

In this section, we give the proof of the asymptotic FDR control presented in Theorem \ref{THM:FDR}.
In the following, for a given matrix $A$ and a given subset $T$, $A_T$ denotes the extracted matrix formed by the columns of $A$ with indices in $T$, whereas~$A_{T,\cdot}$ denotes the extracted matrix formed by the rows  of $A$ with indices in~$T$. For vectors, there is no ambiguity. Moreover, $S$ (of cardinal $s$) denotes the support of the true parameter $\mu^\star$.

We first recall some properties on the dual of the sorted $\ell_1$ norm, and also a lemma taken from \cite{slopeminimax} and stated here without proof:

\begin{defn}[\cite{slopeminimax}]
A vector $a\in\R^n$ is said to majorize $b\in\R^n$ (denoted $b \preccurlyeq a$) if they satisfy for all $i\in\{1,\dots,n\}$:
$$
\abs{a}_{(1)} + \cdots + \abs{a}_{(i)} \geq \abs{b}_{(1)} + \cdots + \abs{b}_{(i)}.
$$
\end{defn}

\begin{prop}[\cite{slope}]
Let $J_\lambda$ be the sorted $\ell_1$ norm for a certain non-increasing sequence $\lambda$ of length $n$. The unit ball of the dual norm is:
$$\mathcal{C_\lambda} = \{v\in\R^n: v \preccurlyeq \lambda \}.$$
\end{prop}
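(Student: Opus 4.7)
The plan is to compute the dual norm $J_\lambda^*(v)=\sup\{\langle v,x\rangle : J_\lambda(x)\le 1\}$ explicitly and then read off the unit-ball condition. The key symmetry is that $J_\lambda$ is invariant under coordinate sign flips and under permutations that preserve the sorted order in the sense that $J_\lambda(x)=\sum_j \lambda_j |x|_{(j)}$ depends only on the decreasing rearrangement of $|x|$. This same invariance lets one assume, without loss of generality, that both $v$ and the maximizing $x$ have non-negative, non-increasing entries: for any candidate $x$, replacing $x_i$ by $\mathrm{sign}(v_i)|x|_{(\pi(i))}$ for the permutation $\pi$ that aligns the ranks of $|x|$ with the ranks of $|v|$ can only increase $\langle v,x\rangle$ (rearrangement inequality) while leaving $J_\lambda(x)$ unchanged.

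After this reduction the problem becomes the linear program
\begin{equation*}
\sup\Bigl\{\sum_{i=1}^n |v|_{(i)}\,y_i \;:\; \sum_{i=1}^n \lambda_i y_i \le 1,\ y_1\ge y_2\ge\cdots\ge y_n\ge 0\Bigr\},
\end{equation*}
where $y_i=|x|_{(i)}$. The next step is Abel summation: writing $d_k=y_k-y_{k+1}\ge 0$ with the convention $y_{n+1}=0$, one gets $y_i=\sum_{k\ge i} d_k$ and hence
\begin{equation*}
\sum_{i=1}^n |v|_{(i)} y_i = \sum_{k=1}^n d_k V_k, \qquad \sum_{i=1}^n \lambda_i y_i=\sum_{k=1}^n d_k \Lambda_k,
\end{equation*}
with $V_k=\sum_{i=1}^k |v|_{(i)}$ and $\Lambda_k=\sum_{i=1}^k \lambda_i$. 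The monotonicity constraint on $y$ becomes $d_k\ge 0$, so the LP reduces to maximizing $\sum_k d_k V_k$ subject to $\sum_k d_k \Lambda_k\le 1$ and $d_k\ge 0$. This is solved by concentrating mass on a single index, giving the closed form $J_\lambda^*(v)=\max_{1\le k\le n} V_k/\Lambda_k$.

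From this formula the conclusion is immediate: $v$ lies in the dual unit ball iff $V_k\le \Lambda_k$ for every $k$, i.e. iff $v\preccurlyeq\lambda$ in the sense of the definition. The main technical care lies not in any single step but in justifying the reduction to non-negative, non-increasing $v$ and $y$ cleanly (so that the rearrangement bound is saturated by an admissible $x$), and in checking that the LP optimum is indeed attained at a single ``tooth'' $d_k=1/\Lambda_k$ rather than only approached; both are routine once the Abel-summation reformulation is in hand. No extra ingredients beyond the rearrangement inequality, summation by parts, and the definition of $\preccurlyeq$ are needed.
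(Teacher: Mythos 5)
Your proposal is correct, but note that the paper itself offers no proof of this proposition: it is imported verbatim from the SLOPE literature (the citation \cite{slope}), so there is no in-paper argument to compare against. Your derivation is essentially the canonical one from that cited work: reduce by sign invariance and the rearrangement inequality to non-negative, non-increasing $v$ and $x$, pass to the differences $d_k=y_k-y_{k+1}$ via Abel summation, and solve the resulting linear program to get the closed form $J_\lambda^*(v)=\max_{1\le k\le n} V_k/\Lambda_k$ with $V_k=\sum_{i\le k}|v|_{(i)}$ and $\Lambda_k=\sum_{i\le k}\lambda_i$, from which the dual-ball description $\{v: v\preccurlyeq\lambda\}$ is immediate since $\lambda$ is already sorted. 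All the steps check out: the extremal $d$ concentrated at the maximizing index $k$ corresponds to the admissible vector $y_i=\Lambda_k^{-1}\mathds{1}\{i\le k\}$ with $J_\lambda(y)=1$, so the supremum is attained, and the rearrangement bound is saturated by an $x$ whose magnitudes are ordered consistently with $|v|$ and whose signs match those of $v$. Two small points worth making explicit if you write this up fully: you need $\lambda_1>0$ so that every $\Lambda_k>0$ (this holds for the weights used in the paper, and is exactly the condition for $J_\lambda$ to be a norm rather than only a seminorm), and the degenerate case $v=0$ should be dispatched separately since the ratio formula is then vacuous. Neither affects the substance; the argument is sound and is the natural self-contained replacement for the external citation.
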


\begin{lem}[\cite{slopeminimax}, Lemma A.9]
\label{lem:A9}
Given any constant $\alpha > 1/(1-q)$, suppose $\max\{\alpha s, s + d\} \leq s^\star < n $ for any (deterministic) sequence $d$ that diverges to $\infty$. Let $\zeta_1, \dots, \zeta_{n-s}$ be i.i.d $\mathcal{N}(0,1)$. Then 
$$
(\abs{\zeta}_{(s^\star - s +1)}, \abs{\zeta}_{(s^\star - s +2)}, \dots , \abs{\zeta}_{(n - s)}) \preccurlyeq (\lambda_{s^\star+1}^{\mathrm{BH}}, \lambda_{s^\star+2}^{\mathrm{BH}}, \dots, \lambda_{n}^{\mathrm{BH}})
$$
with probability approaching one.
\end{lem}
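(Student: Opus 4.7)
The plan is to attack this via the KKT conditions for the joint minimization, following the dual-certificate / resolvent strategy of \cite{slopeminimax}. Writing the optimality condition for $\hat{\mu}$, any of the three procedures yields
\begin{equation*}
  y - X\hat{\beta} - \hat{\mu} = \rho v, \qquad v \in \partial J_{\lambda}(\hat{\mu}),
\end{equation*}
so that, substituting $y = X\beta^\star + \mu^\star + \varepsilon$,
\begin{equation*}
  \rho v \;=\; \varepsilon \;+\; X(\beta^\star - \hat{\beta}) \;+\; (\mu^\star - \hat{\mu}).
\end{equation*}
Since the dual ball of $J_\lambda$ is $\mathcal{C}_\lambda = \{ v : v \preccurlyeq \lambda\}$, the subdifferential condition forces $v \preccurlyeq \lambda = (1+\epsilon)\lambda^{\mathrm{BH}}$ coordinatewise after sorting, and on the inactive set $\hat{S}^c = \{ i : \hat{\mu}_i = 0\}$ the left-hand side is exactly $\varepsilon_i + [X(\beta^\star - \hat{\beta})]_i + \mu_i^\star$.

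\emph{Power.} I would first argue that $S \subseteq \hat{S}$ with probability tending to one. If some $i \in S$ had $\hat{\mu}_i = 0$, the KKT identity at $i$ forces $|\mu_i^\star| \leq \rho|v_i| + |\varepsilon_i| + |[X(\beta^\star - \hat{\beta})]_i|$. The first term is at most $\rho(1+\epsilon)\lambda_1^{\mathrm{BH}} \sim \rho(1+\epsilon)\sigma\sqrt{2\log n}$; a union bound on $n$ Gaussians controls $\max_i |\varepsilon_i| \leq \sigma\sqrt{2\log n}(1 + o(1))$; and the boundedness $|x_{i,j}| \leq M/\sqrt{n}$ combined with Cauchy--Schwarz yields $|[X(\beta^\star - \hat{\beta})]_i| \leq M \sqrt{p/n}\,\|\hat{\beta} - \beta^\star\|_2$, which is $o(\sqrt{\log n})$ once the appropriate estimation bound from Theorem~\ref{thm:upper_bound_nobeta}, \ref{thm:upper_bound_beta_l1} or \ref{thm:upper_bound_beta_sl1} is plugged in together with the assumed rate condition on $(s\log(n/s) \vee k\log(p/k))^2/n \to 0$ (and the analogous conditions for the other two procedures). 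The margin built into the magnitude hypothesis $|\mu_i^\star|\geq (1+\rho(1+2\epsilon))2\sigma\sqrt{\log n}$ then contradicts $\hat{\mu}_i = 0$, giving $S \subseteq \hat{S}$ and hence $\Pi(\hat{\mu}) \to 1$.

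\emph{FDR control.} For the Type~I side, I would compare $\hat{\mu}$ with a resolvent $\tilde{\mu}$ constructed as the solution of the SLOPE program on a hypothetical support $S^\star$ of size $s^\star$ chosen just above $s$ (with $s^\star - s \to \infty$ but $s^\star/s \to 1$ if possible, or at least $s^\star \leq \alpha s$ for $\alpha$ close to $1/(1-q)$). Using the KKT identity on the inactive set and replacing $\hat{\beta}$ by the resolvent solution's $\beta$-component, the non-active entries reduce to an i.i.d Gaussian sample of size $n - s$ plus negligible error, so that Lemma~\ref{lem:A9} certifies that the order statistics $(|\zeta|_{(s^\star - s + 1)}, \dots, |\zeta|_{(n-s)})$ are majorized by $(\lambda^{\mathrm{BH}}_{s^\star + 1}, \dots, \lambda^{\mathrm{BH}}_{n})$ with probability approaching one, leaving enough slack in the factor $(1+\epsilon)$ to validate the dual feasibility of the resolvent. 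Strict convexity in the residual then identifies the resolvent with $\hat{\mu}$, so $|\hat{S}| \leq s^\star$ and the fraction of false discoveries satisfies $|\hat{S}\setminus S|/|\hat{S}| \leq (s^\star - s)/s^\star$; taking $s^\star$ as the smallest integer exceeding $\alpha s$ and letting $\alpha \downarrow 1/(1-q)$ gives $\limsup \mathrm{FDR}(\hat{\mu}) \leq q$.

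The principal obstacle is the coupling introduced by $\hat{\beta}$: unlike the orthogonal-design SLOPE analysis, the inactive coordinates of the KKT residual contain the term $[X(\beta^\star - \hat{\beta})]_i$, which is not independent of $\varepsilon$. Controlling this term uniformly by $o(\sigma\sqrt{\log n})$ via the row-boundedness hypothesis on $X$ and the $\ell_2$ estimation bounds from Section~\ref{SEC:BOUNDS} is what forces the strong rate assumptions in the three bullets of the theorem, and reconciling this perturbation with the exact majorization needed to invoke Lemma~\ref{lem:A9} is the delicate step; the remaining arguments (union bound on $\varepsilon$, use of the subgradient ball $\mathcal{C}_\lambda$, and the monotonicity of the $\lambda^{\mathrm{BH}}$ sequence) are comparatively routine.
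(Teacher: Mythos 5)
The statement you were asked to prove is Lemma~\ref{lem:A9} itself: a purely probabilistic claim that the upper order statistics of $n-s$ i.i.d.\ standard Gaussians are majorized by the tail $(\lambda^{\mathrm{BH}}_{s^\star+1},\dots,\lambda^{\mathrm{BH}}_n)$ of the Benjamini--Hochberg sequence, with probability tending to one. Your proposal does not address this claim at all; instead it sketches the proof of Theorem~\ref{THM:FDR} (the KKT/dual-certificate/resolvent argument of Appendix~\ref{Appen:p2}), and at the crucial point it \emph{invokes} Lemma~\ref{lem:A9} as a black box (``so that Lemma~\ref{lem:A9} certifies that the order statistics \dots are majorized\dots''). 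As a proof of the lemma this is circular: the one place where the order statistics are compared to the BH sequence is exactly where you cite the statement to be proved. Nothing in your text engages with the actual content of the lemma --- there is no analysis of Gaussian order statistics, no use of the slack hypotheses $\alpha>1/(1-q)$ and $s^\star\geq s+d$ with $d\to\infty$, and no argument establishing the partial-sum inequalities $\sum_{i=1}^{j}\abs{\zeta}_{(s^\star-s+i)}\leq\sum_{i=1}^{j}\lambda^{\mathrm{BH}}_{s^\star+i}$ for all $j$ that define the majorization $\preccurlyeq$.

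A genuine proof (the one in \cite{slopeminimax}, Lemma A.9; the present paper imports the lemma without proof) runs along entirely different lines: write $\abs{\zeta}_{(i)}\overset{d}{=}\Phi^{-1}(1-V_{(i)}/2)$ where $V_{(1)}\leq\dots\leq V_{(n-s)}$ are uniform order statistics, recall $\lambda^{\mathrm{BH}}_i=\sigma\Phi^{-1}(1-iq/(2n))$, and show that uniformly over the relevant range $V_{(s^\star-s+i)}\geq q(s^\star+i)/n$ holds with probability tending to one, using ratio-type concentration for uniform order statistics. The hypothesis $\alpha>1/(1-q)$ is what creates the multiplicative slack $(s^\star-s+i)/(n-s)>q(s^\star+i)/n$ in expectation, and $s^\star-s\geq d\to\infty$ is what makes the concentration effective at the left edge; summing the resulting coordinatewise (or averaged) bounds yields the majorization. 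None of these ingredients appears in your proposal, so it should be assessed as a proof of the wrong statement rather than a flawed proof of the right one.
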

We adapt from \cite{slopeminimax} the definition of a resolvent set below, useful to determine the true support of the mean-shift parameters.

\begin{defn}
\label{def:res}
Let $s^\star$ be an integer obeying $s<s^\star <n$. The set~$S^\star (S,s^\star)$ is said to be a resolvent set if it is the union of $S$ and\linebreak the $s^\star - s$ indices corresponding to the largest entries of the error term~$\varepsilon$ restricted to $\bar{S}$.
\end{defn}
Let $c$ be any positive constant and fix $s^\star \geq s(1+c) / (1-q)$ ($q$ being the target FDR level), so that assumptions of Lemma \ref{lem:A9} are satisfied. For clarity, we denote $S^\star = S^\star(S, s^\star)$. 
For a resolvent set $S^\star$ of cardinality $s^\star$, define the reduced minimization as:
\begin{equation}
\label{reduced}
\beta^{S^\star}, \mu^{S^\star} = \argmin_{\beta\in\R^p, \mu\in\R^{s^\star}} \left\{\norm{y - X\beta - I_{S^\star}\mu}_2^2 + 2\rho J_{\tilde{\lambda}}(\beta) + 2\rho J_{\lambda^{[s^\star]}}(\mu) \right\},
 \end{equation}
where $\lambda^{[s^\star]}$ is the beginning (the first $s^\star $ terms) of the sequence of weights in the global problem. 
Note that a resolvent set contains the support of the true parameter $\mu^\star$, so the generalized versions of the main results in Section~\ref{SEC:BOUNDS}, considered in the proof in Appendix~\ref{Appen:p1}, hold.

We want to show that the estimator of the unreduced problem $\hat{\mu}$ has null values for coordinates which indices are not in $S^\star$. Precisely, we will show that $\hat{\mu}= I_{S^\star}\mu^{S^\star}$.
The first order conditions for global and reduced minimisation problem above are respectively:

\begin{empheq}[left=\empheqlbrace]{align}
 X^\top (y-X\hat{\beta}-\hat{\mu}) &\in \rho\partial J_{\tilde{\lambda}} (\hat{\beta})\label{eq:glob1} \\
 y-X\hat{\beta}-\hat{\mu} &\in \rho\partial J_\lambda (\hat{\mu}) \label{eq:glob2}
\end{empheq}
and 
\begin{empheq}[left=\empheqlbrace]{align}
X^\top (y-X\beta^{S^\star}-I_{S^\star}\mu^{S^\star}) & \in \rho\partial J_{\tilde{\lambda}} (\beta^{S^\star}) \label{eq:red1}\\
I_{S^\star}^\top (y-X\beta^{S^\star}-I_{S^\star}\mu^{S^\star}) &\in \rho\partial J_{\lambda^{[s^\star]}} (\mu^{S^\star}) \label{eq:red2}
\end{empheq}
Clearly, Equation~\eqref{eq:red1} leads to Equation~\eqref{eq:glob1} taking $\hat{\beta} = \beta^{S^\star}$ and $\hat{\mu} =~I_{S^\star}\mu^{S^\star}.$ We must now show that this choice of $\hat{\beta}$ and $\hat{\mu}$ satisfies Equation~\eqref{eq:glob2}.

First, $y-X\hat{\beta}-\hat{\mu}$ must be in the unit ball of the dual norm, that is~${y-X\hat{\beta}-\hat{\mu} \preccurlyeq \rho\lambda }$. Because $y-X\hat{\beta}-\hat{\mu}\in\R^n$ is the concatenation of~${I_{S^\star}^\top (y-X\hat{\beta}-\hat{\mu})}$ and $I_{\bar{S^\star}}^\top (y-X\hat{\beta}-\hat{\mu})$, we must check that $S^\star$ satisfy:
$$
I_{\overline{S^\star}}^\top (y-X\hat{\beta}^{S^\star}-I_{S^\star}\hat{\mu}^{S^\star}) \preccurlyeq \rho\lambda^{-[s^\star]},
$$
where $\lambda^{-[s^\star]}$ is the end of the sequence in the global problem (omitting the first $s^\star $ terms).
If so, noting that if $a_1 \preccurlyeq b_1$ and $a_2 \preccurlyeq b_2$ then $a\preccurlyeq b$ (with $a$ and $b$ being the respective concatenation of $a_1,a_2$ and $b_1,b_2$) and combining it with Equation~\eqref{eq:red2} will lead to the belonging at the unit ball of the dual norm.

Equivalently, we must check that 
$$
y_{\overline{S^\star}} - X_{\overline{S^\star}, \cdot}\beta^{S^\star} \preccurlyeq \rho\lambda^{-[s^\star]},
$$
or also 
\begin{equation}
\label{dual}
X_{\overline{S^\star}, \cdot}(\beta^\star - \beta^{S^\star}) + I_{\overline{S^\star}, \cdot} \varepsilon \preccurlyeq \rho\lambda^{-[s^\star]}
\end{equation}
Lemma \ref{lem:A9}, together with the definition of the resolvent set $S^\star$ given in Definition \ref{def:res}, allows us to handle the second term to obtain, with probability tending to one:
$$
 I_{\overline{S^\star}, \cdot} \varepsilon \preccurlyeq (\lambda^{\mathrm{BH}})^{-[s^\star]} \preccurlyeq \rho (\lambda^{\mathrm{BH}})^{-[s^\star]}.
$$
It remains to control the term $X_{\overline{S^\star}, \cdot}(\beta^\star - \beta^{S^\star})  $. For our purpose, it is sufficient to show that $\Vert X_{\overline{S^\star}, \cdot}(\beta^\star - \beta^{S^\star})\Vert_{\infty}$ tends to zero when $n$ goes to infinity, because in this case we would have $X_{\overline{S^\star}, \cdot}(\beta^\star - \beta^{S^\star}) \preccurlyeq \rho\epsilon (\lambda^{\mathrm{BH}})^{-[s^\star]} $ if $n$ is large enough. Thus, let $i\in\{1,\dots, n\}$ and $x_i$ the $i^{th}$ row of $X$, then we have:
$$
\vert\langle x_i, \beta^\star - \beta^{S^\star} \rangle \vert \leq  \sum_{j=1}^p \vert x_{i,j}\vert \vert \beta^\star - \beta^{S^\star}\vert_{j} \leq \frac{M}{\sqrt{n}} \Vert\beta^\star - \beta^{S^\star}\Vert_1.
$$
Now we distinguish the three cases. For Procedure~\ref{eq:procedure_no_beta}, we do not assume sparsity on $\beta$ so we rely on the Cauchy-Schwarz inequality to obtain
\begin{equation*}
\vert\langle x_i, \beta^\star - \beta^{S^\star} \rangle\vert \leq \frac{M}{\sqrt{n}}\sqrt{p} \Vert\beta^\star - \beta^{S^\star}\Vert_2.
\end{equation*}
For Procedure~\ref{eq:2}, Equation~\eqref{eq:cone_lasso_slope} lead to the bound
\begin{equation*}
\vert\langle x_i, \beta^\star - \beta^{S^\star} \rangle\vert \leq \frac{M}{\sqrt{n}}C\big(k\log p \vee s\log(2en/s)\big) \big(\Vert\beta^\star - \beta^{S^\star}\Vert_2 \vee \Vert\mu^\star - \mu^{S^\star}\Vert_2\big),
\end{equation*}
with $C$ being some positive constant.
For Procedure~\ref{eq:procedure_with_two_slopes}, Equation~\eqref{eq:cone_with_two_slope} lead to the bound
\begin{equation*}
\vert\langle x_i, \beta^\star - \beta^{S^\star} \rangle\vert \leq \frac{M}{\sqrt{n}}C'\big(k\log (2ep/k) \vee s\log(2en/s)\big) \big(\Vert\beta^\star - \beta^{S^\star}\Vert_2 \vee \Vert\mu^\star - \mu^{S^\star}\Vert_2\big),
\end{equation*}
with $C'$ being some positive constant.

Therefore the coordinates are uniformly bounded by a quantity tending to zero in each of the three cases of the theorem, thanks to the upper bounds obtained in Section~\ref{SEC:BOUNDS}. Now it is sufficient to choose $n$ such that $\abs{\langle x_i, \beta^\star - \beta^{S^\star} \rangle} \leq \rho\epsilon\lambda^{\mathrm{BH}}_n$ (it is important to notice that the right term does not depend on $n$ and equals to $\rho\epsilon\Phi^{-1}(1-q/2)$) to finally obtain Equation~\eqref{dual}.
Note that Equation~\eqref{dual} is the necessary condition for $y-X\beta^{S^\star}-I_{S^\star}\mu^{S^\star}$ to be feasible (meaning in the unit ball $\mathcal{C}_\lambda$ of the dual norme of $J_\lambda$) but this is also sufficient for being in the subdifferential because
$$
\partial J_\lambda (x) = \{ \omega\in\mathcal{C}_\lambda \;: \; \langle \omega, x \rangle = J_\lambda (x) \}, 
$$
and as we have, due to Equation\eqref{eq:red2}:
$$
\langle I_{S^\star}^\top (y-X\beta^{S^\star}-I_{S^\star}\mu^{S^\star}), \mu^{S^\star} \rangle = J_{\lambda^{[s^\star]}}(\mu^{S^\star}),
$$
then: 
$$
\langle y-X\beta^{S^\star}-I_{S^\star}\mu^{S^\star}, I_{S^\star}\mu^{S^\star} \rangle = J_{\lambda}(I_{S^\star} \mu^{S^\star}).
$$
Therefore, with probability tending to one, $\hat{\mu} = I_{S^\ast}\mu^{S^\star}$ and in particular 

\begin{equation}
\label{subset_fdr}
\supp(\hat{\mu}) \subset S^\star.
\end{equation}

We now show that the support of $\hat{\mu}$ contains the support of $\mu^\star$.
Considering Equation~\eqref{eq:red2} we have in particular the belonging to the unit ball of the dual norm, that is to say: 
$$
I_{S^\star}^\top (y-X\beta^{S^\star}-I_{S^\star}\mu^{S^\star}) \preccurlyeq \rho\lambda^{[s^\star]}.
$$
In particular we have
$$
\Vert I_{S^\star}^\top (y-X\beta^{S^\star}-I_{S^\star}\mu^{S^\star}) \Vert_\infty \leq \rho\lambda_1.
$$
Having $y = X\beta^\star + \mu^\star + \varepsilon = X\beta^\star + I_{S^\star}(\mu^\star)_{S^\star} + \varepsilon$, the inequality above re-writes as:
$$
\Vert X_{S^\star,\cdot}(\beta^\star - \beta^{S^\star}) + \mu_{S^\star}^\star - \mu^{S^\star} + I_{S^\star}^\top \varepsilon \Vert_\infty \leq \rho\lambda_1.
$$
By the triangle inequality, we obtain:
$$
\Vert \mu_{S^\star}^\star - \mu^{S^\star} \Vert_\infty \leq \rho\lambda_1 + \Vert X_{S^\star,\cdot}(\beta^\star - \beta^{S^\star}) + I_{S^\star}^\top \varepsilon   \Vert_\infty \leq \rho\lambda_1 + \Vert X_{S^\star,\cdot}(\beta^\star - \beta^{S^\star}) \Vert_\infty + \Vert I_{S^\star}^\top \varepsilon   \Vert_\infty 
$$
Now, we already said that we have $\Vert X_{S^\star,\cdot}(\beta^\star - \beta^{S^\star}) \Vert_\infty \leq \rho\epsilon\lambda^{\mathrm{BH}}_n \leq  \rho\epsilon\lambda^{\mathrm{BH}}_1 $, and using the standard bound on the norm of a Gaussian noise (see Lemma~\ref{lm:Gauss2}), we also have, with probability tending to one (precisely with probability $1/n$):
 $$
 \Vert I_{S^\star}^\top \varepsilon   \Vert_\infty \leq \Vert \varepsilon   \Vert_\infty\leq 2\sigma \sqrt{\log n}.
 $$ 
 Combining the previous inequalities leads to:
 $$
 \Vert \mu_{S^\star}^\star - \mu^{S^\star} \Vert_\infty \leq \rho\lambda_1 + \rho\epsilon\lambda^{\mathrm{BH}}_1 + 2\sigma\sqrt{\log n} = \rho(1+2\epsilon)\lambda^{\mathrm{BH}}_1 + 2\sigma\sqrt{\log n}.
 $$
 A standard bound for the Gaussian quantile function gives $\lambda^{\mathrm{BH}}_1 \leq \sigma \sqrt{2\log (2n/q)}$, so with $q\geq 2/n$ (this is quite artificial, $q$ is generally more than $0.01$) we obtain:
 $$
 \Vert (\mu^\star)_{S^\star} - \mu^{S^\star} \Vert_\infty \leq (1+\rho(1+2\epsilon))2\sigma\sqrt{\log n}.
 $$
 Therefore, because the entries of $\mu^\star$ are of absolute values greater than the right bound of the above inequality we obtain:
 $$
 S \subset \supp ((\mu^\star)_{S^\star}) \subset \supp (\mu^{S^\star}) \subset \supp (\hat{\mu}),
 $$
 and so the Power tends to one.
 
 It remains to show the FDR control, using Equation~\eqref{subset_fdr}. 
 Define the False Discovery Proportion (FDP) as $V/(R\vee 1)$, where $R$ and $V$ are defined in Equation~\eqref{eq:fdr}.  Because of the inclusion $S \subset  \supp (\hat{\mu})$, the FDP is~${(R-s)/R = 1-s/R}$ with probability tending to one. According to Equation~\eqref{subset_fdr} and the assumption on $s^\star$,
 $$
 \mathrm{FDP} = 1-\frac{s}{R} \leq 1-\frac{s}{s^\star} \leq 1-\frac{1-q}{1+c} = \frac{q+c}{1+c} \leq q+c,
 $$
 with probability tending to one.
 In expectation, and with $n$ tending to infinity, we obtain:
 $$
 \limsup_{n\rightarrow +\infty}\mathrm{FDR}(\hat{\mu}) \leq q+c,
 $$
and $c$ being arbitrarily close to zero leads to the conclusion.

\section{Supplementary simulations}
\label{appensim}

We gather here some extra-simulations in low dimension to complete the ones from Section~\ref{subsec:simu} with higher FDR level or/and higher correlation level for the design matrix. As it is our particular interest, we focus on experiments with outliers of weak magnitudes.

Figure~\ref{fig:lowdim_appen0} below is the same as in Section~\ref{subsec:simu} for settting~1, excepted that the correlation of the design matrix is now higher ($0.8$). 
Results are similar to those obtained in Section \ref{subsec:simu}, that is even with a higher correlation, E-SLOPE is able to make much more discoveries while keeping the FDR under control.
\begin{figure}[htbp]
\centering
\includegraphics[width=\linewidth]{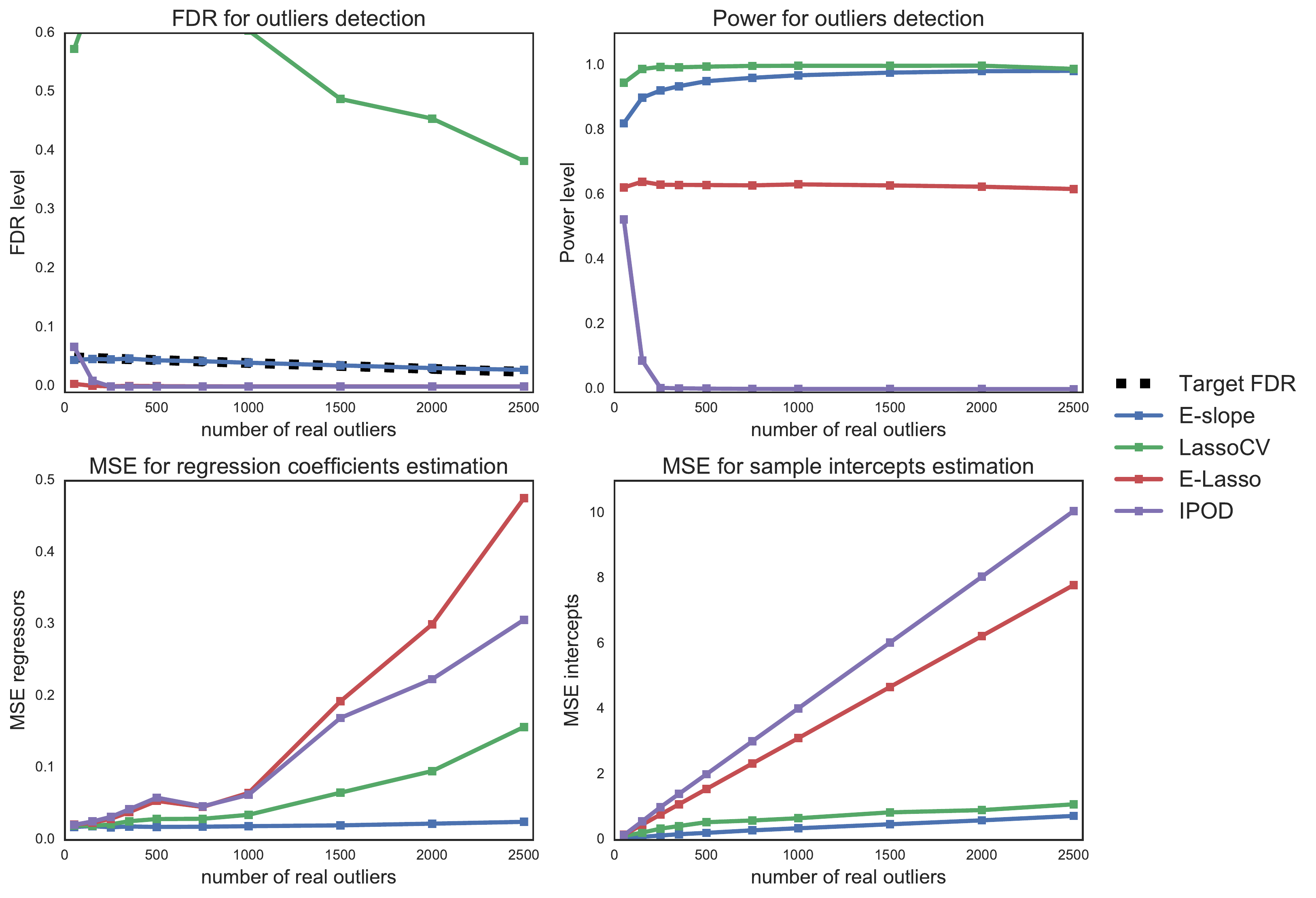}
\caption{Results for simulation Setting~1 with low-magnitude outliers and correlation $\rho = 0.8$. First row gives the FDR (left) and power (right) of each considered procedure for outliers discoveries. 
Second row gives the MSE for regressors (left) and intercepts (right). 
E-SLOPE gives perfect power, is the only one to respect the required FDR, and provides the best MSEs.}
\label{fig:lowdim_appen0}
\end{figure}

Figures~\ref{fig:lowdim_appen1},~\ref{fig:lowdim_appen2} below gather the results of simulations in setting~1 of Section~\ref{sec:simu} with other target FDR (here $10\%$ and $20\%$) and for both moderate and high correlation (respectively $0.4$ and $0.8$). E-LASSO and IPOD do not depend on the target FDR so they are not plotted again. The results confirm the fact that E-SLOPE provides a high power together with a FDR control for a wide range of target FDR level.
The left (resp. right) columns contain the results for $\alpha = 10\%$ (resp. $\alpha = 20\%$) as indicated by the straight lines.

\begin{figure}[H]
\centering
\includegraphics[width=\linewidth]{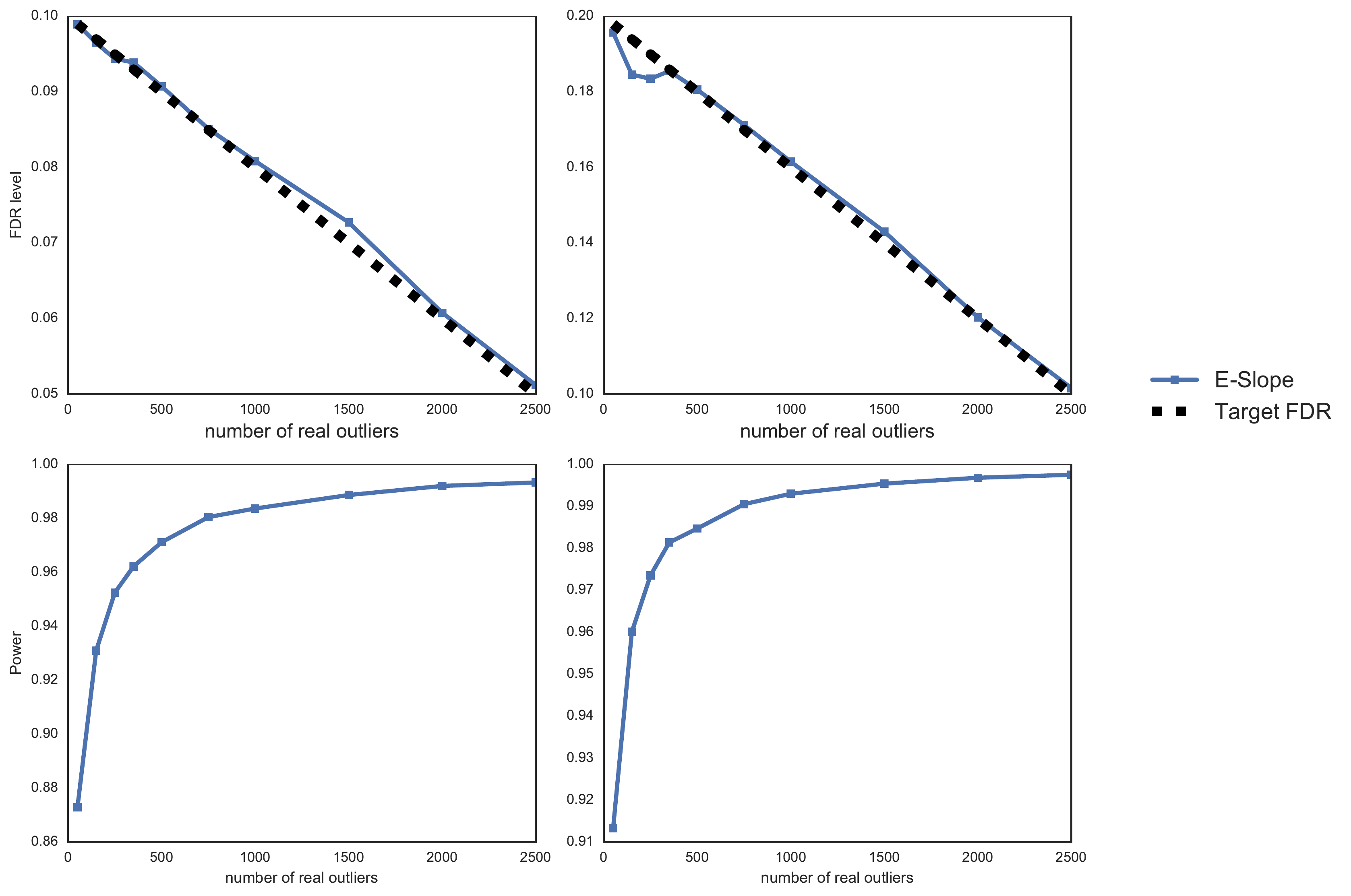}
\caption{Results for simulation Setting~1 with low-magnitude outliers, correlation $\rho = 0.4$. Left column gives the FDR (top) and power (bottom) for E-SLOPE with target FDR $\alpha = 10\%$.
Right column gives the FDR (top) and power (bottom) for E-SLOPE with target FDR $\alpha = 20\%$.}
\label{fig:lowdim_appen1}
\end{figure}

\begin{figure}[H]
\centering
\includegraphics[width=\linewidth]{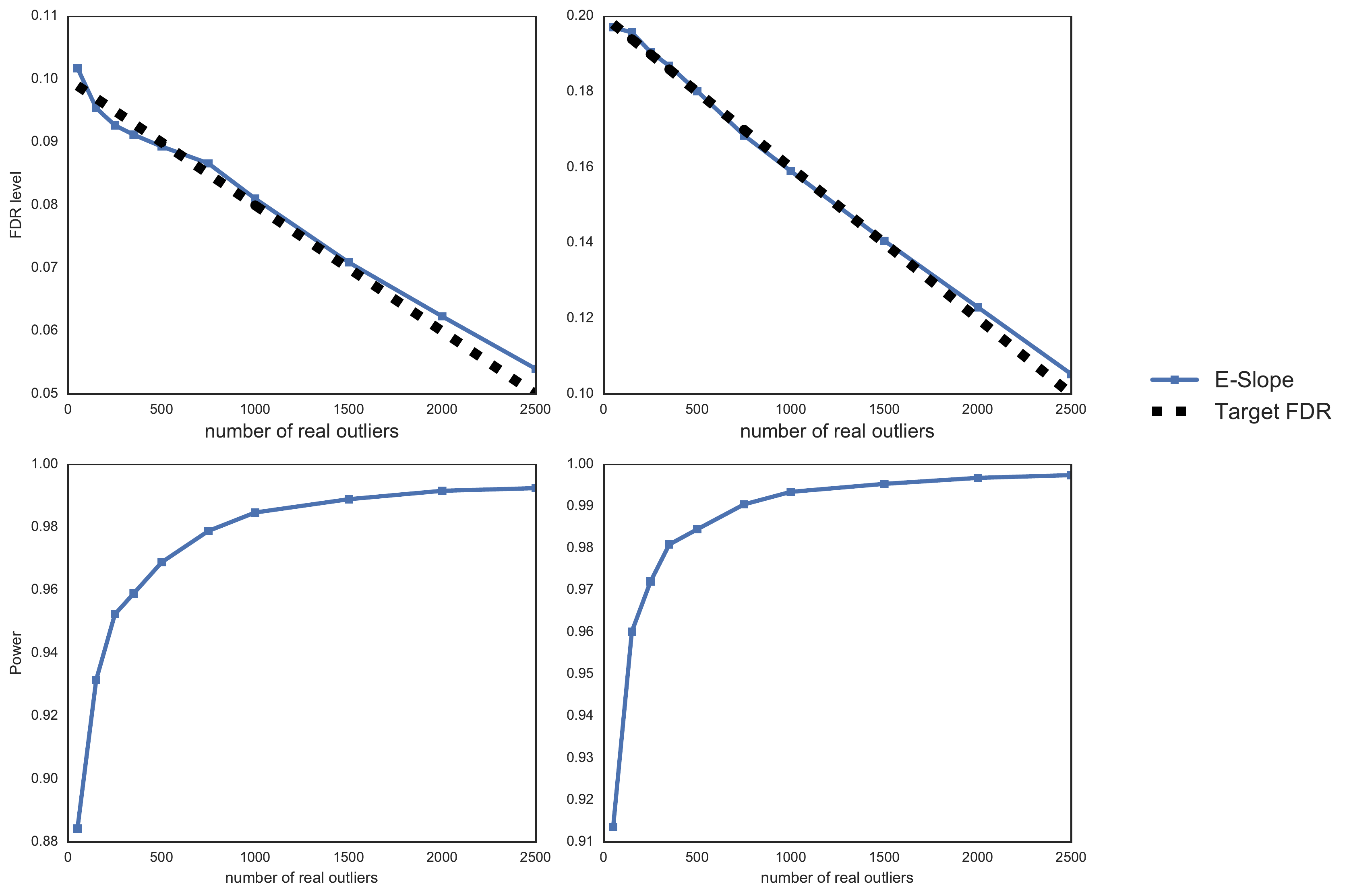}
\caption{Results for simulation Setting~1 with low-magnitude outliers, correlation $\rho = 0.8$. Left column gives the FDR (top) and power (bottom) for E-SLOPE with target FDR $\alpha = 10\%$.
Right column gives the FDR (top) and power (bottom) for E-SLOPE with target FDR $\alpha = 20\%$.}
\label{fig:lowdim_appen2}
\end{figure}

\end{appendix}

\bibliographystyle{abbrv}


\begin{thebibliography}{10}

\bibitem{aggarwal2001outlier}
C.~C. Aggarwal and P.~S. Yu.
\newblock Outlier detection for high dimensional data.
\newblock In {\em ACM Sigmod Record}, volume~30, pages 37--46. ACM, 2001.

\bibitem{bach2012optimization}
F.~Bach, R.~Jenatton, J.~Mairal, G.~Obozinski, et~al.
\newblock Optimization with sparsity-inducing penalties.
\newblock {\em Foundations and Trends{\textregistered} in Machine Learning},
  4(1):1--106, 2012.

\bibitem{bach2012structured}
F.~Bach, R.~Jenatton, J.~Mairal, G.~Obozinski, et~al.
\newblock Structured sparsity through convex optimization.
\newblock {\em Statistical Science}, 27(4):450--468, 2012.

\bibitem{benjamini1995controlling}
Y.~Benjamini and Y.~Hochberg.
\newblock Controlling the false discovery rate: a practical and powerful
  approach to multiple testing.
\newblock {\em Journal of the royal statistical society. Series B
  (Methodological)}, pages 289--300, 1995.

\bibitem{slope}
M.~Bogdan, E.~{van den Berg}, C.~Sabatti, W.~Su, and E.~{J. Cand\`es}.
\newblock Slope - adaptive variable selection via convex optimisation.
\newblock {\em Annals of Applied Statistics}, 9(3):1103--1140, 2015.

\bibitem{slope1}
M.~Bogdan, E.~{van den Berg}, W.~Su, and E.~{J. Cand\`es}.
\newblock Statistical estimation and testing via the ordered $\ell_1$ norm.
\newblock {\em arXiv:1310.1969}, 2013.

\bibitem{buhlmann2011statistics}
P.~B{\"u}hlmann and S.~Van De~Geer.
\newblock {\em Statistics for high-dimensional data: methods, theory and
  applications}.
\newblock Springer Science \& Business Media, 2011.

\bibitem{Tsyb}
P.~{C. Bellec}, G.~Lecu{\'e}, and A.~{B. Tsybakov}.
\newblock Slope meets lasso : improved oracle bounds and optimality.
\newblock {\em preprint ArXiv}, 2016.

\bibitem{candes2006robust}
E.~J. Cand{\`e}s, J.~Romberg, and T.~Tao.
\newblock Robust uncertainty principles: Exact signal reconstruction from
  highly incomplete frequency information.
\newblock {\em IEEE Transactions on information theory}, 52(2):489--509, 2006.

\bibitem{candes2006stable}
E.~J. Cand{\`e}s, J.~K. Romberg, and T.~Tao.
\newblock Stable signal recovery from incomplete and inaccurate measurements.
\newblock {\em Communications on pure and applied mathematics},
  59(8):1207--1223, 2006.

\bibitem{chandrasekaran2012convex}
V.~Chandrasekaran, B.~Recht, P.~A. Parrilo, and A.~S. Willsky.
\newblock The convex geometry of linear inverse problems.
\newblock {\em Foundations of Computational mathematics}, 12(6):805--849, 2012.

\bibitem{chen2001atomic}
S.~S. Chen, D.~L. Donoho, and M.~A. Saunders.
\newblock Atomic decomposition by basis pursuit.
\newblock {\em SIAM review}, 43(1):129--159, 2001.

\bibitem{chretien}
S.~Chr{\'e}tien and S.~Darses.
\newblock Sparse recovery with unknown variance: A lasso-type approach.
\newblock {\em IEEE Transactions on Information Theory}, 60(7):3970--3988,
  2014.

\bibitem{cook}
R.~Cook and S.~Weisberg.
\newblock {\em Residuals and influence in regression}.
\newblock New York:Chapman and Hall, 1982.

\bibitem{dixon1950analysis}
W.~J. Dixon.
\newblock Analysis of extreme values.
\newblock {\em The Annals of Mathematical Statistics}, 21(4):488--506, 1950.

\bibitem{bio1}
A.~Duval, S.~Rolland, A.~Compoint, E.~Tubacher, B.~Iacopetta, G.~Thomas, and
  R.~Hamelin.
\newblock Evolution of instability at coding and non-coding repeat sequences in
  human msi-h colorectal cancers.
\newblock {\em Hum Mol Genet}, 10(5):513--518, 2001.

\bibitem{SCAD}
J.~Fan and R.~Li.
\newblock Variable selection via nonconcave penalized likelihood and its oracle
  properties.
\newblock {\em Journal of the American Statistical Association},
  96(456):1348--1360, 2001.

\bibitem{pwls}
X.~Gao and Y.~Fang.
\newblock Penalized weighted least squares for outlier detection and robust
  regression.
\newblock ArXiv 2016.

\bibitem{gervini2002class}
D.~Gervini and V.~J. Yohai.
\newblock A class of robust and fully efficient regression estimators.
\newblock {\em Annals of Statistics}, pages 583--616, 2002.

\bibitem{giraud}
C.~Giraud.
\newblock {\em Introduction to High-Dimensional Statistics}.
\newblock Chapman and Hall/CRC, 2014.

\bibitem{grubbs1969procedures}
F.~E. Grubbs.
\newblock Procedures for detecting outlying observations in samples.
\newblock {\em Technometrics}, 11(1):1--21, 1969.

\bibitem{Gupta2016}
A.~Gupta and S.~Kohli.
\newblock An mcdm approach towards handling outliers in web data: a case study
  using owa operators.
\newblock {\em Artificial Intelligence Review}, 46(1):59--82, Jun 2016.

\bibitem{elasso}
N.~{H. Nguyen} and T.~{D. Tran}.
\newblock Robust lasso with missing and grossly corrupted observations.
\newblock {\em IEEE transactions on information theory}, 59(4):2036--2058,
  April 2013.

\bibitem{Hadi}
A.~S. Hadi.
\newblock A new measure of overall potential influence in linear regression.
\newblock {\em Computational Statistics and Data Analysis}, 14:1--27, 1992.

\bibitem{masking}
A.~S. Hadi and J.~S. Simonoff.
\newblock Procedures for the identification of multiple outlier in linear
  models.
\newblock {\em Journal of the American Statistical Association},
  88(424):1264--1272, December 1993.

\bibitem{hampel2011robust}
F.~R. Hampel, E.~M. Ronchetti, P.~J. Rousseeuw, and W.~A. Stahel.
\newblock {\em Robust statistics: the approach based on influence functions},
  volume 114.
\newblock John Wiley \& Sons, 2011.

\bibitem{hawkins1980identification}
D.~M. Hawkins.
\newblock {\em Identification of outliers}, volume~11.
\newblock Springer, 1980.

\bibitem{huber19721972}
P.~J. Huber.
\newblock The 1972 wald lecture robust statistics: A review.
\newblock {\em The Annals of Mathematical Statistics}, pages 1041--1067, 1972.

\bibitem{huber1981wiley}
P.~J. Huber.
\newblock Wiley series in probability and mathematics statistics.
\newblock {\em Robust statistics}, pages 309--312, 1981.

\bibitem{BRT}
P.~{J. Bickel}, Y.~Ritov, and A.~{B. Tsybakov}.
\newblock Simultaneous analysis of lasso and dantzig selector.
\newblock {\em The Annals of Statistics}, 37(4):1705--1732, August 2009.

\bibitem{CandesPlan}
E.~{J. Cand{\`e}s} and Y.~Plan.
\newblock Near-ideal model selection by $\ell_1$-minimization.
\newblock {\em The Annals of Statistics}, 37(5A):2145--2177, 2009.

\bibitem{Wainwright}
M.~{J. Wainwright}.
\newblock Sharp thresholds for high-dimensional and noisy sparsity recovery
  using $\ell _{1}$ -constrained quadratic programming (lasso).
\newblock {\em IEEE Transactions on Information Theory}, 55(5):2183--2202, May
  2009.

\bibitem{koltchinskii2008saint}
V.~Koltchinskii.
\newblock Saint flour lectures oracle inequalities in empirical risk
  minimization and sparse recovery problems. 2009.
\newblock 2008.

\bibitem{DonohoHuo}
D.~{L. Donoho} and X.~Huo.
\newblock Uncertainty principles and ideal atomic decomposition.
\newblock {\em IEEE Transactions on Information Theory}, 47(7):2845--2862,
  2001.

\bibitem{Massart}
B.~Laurent and P.~Massart.
\newblock Adaptive estimation of a quadratic functional by model selection.
\newblock {\em Annals of Statistics}, 28(5):1302--1338, 2000.

\bibitem{REgauss}
G.~Raskutti, M.~{J. Wainwright}, and B.~Yu.
\newblock Restricted eigenvalue properties for correlated gaussian design.
\newblock {\em Journal of Machine Learning Research}, 11:2241--2259, Aug 2010.

\bibitem{ro2015outlier}
K.~Ro, C.~Zou, Z.~Wang, G.~Yin, et~al.
\newblock Outlier detection for high-dimensional data.
\newblock {\em Biometrika}, 102(3):589--599, 2015.

\bibitem{rousseeuw1984robust}
P.~Rousseeuw and V.~Yohai.
\newblock Robust regression by means of s-estimators.
\newblock In {\em Robust and nonlinear time series analysis}, pages 256--272.
  Springer, 1984.

\bibitem{ipod}
Y.~She and A.~{B. Owen}.
\newblock Outlier detection using nonconvex penalized regression.
\newblock {\em Journal of the American Statistical Association}, 2010.

\bibitem{siegel1982robust}
A.~F. Siegel.
\newblock Robust regression using repeated medians.
\newblock {\em Biometrika}, 69(1):242--244, 1982.

\bibitem{Su_2017}
W.~Su, M.~Bogdan, and E.~J. Cand\`es.
\newblock False discoveries occur early on the lasso path.
\newblock {\em Annals of Statistics}, 45(5):2133--2150, 2017.

\bibitem{slopeminimax}
W.~Su and E.~{J. Cand\`es}.
\newblock Slope is adaptive to unknown sparsity and asymptotically minimax.
\newblock {\em Annals of Statistics}, 44(3):1038--1068, 2016.

\bibitem{sun2012scaled}
T.~Sun and C.-H. Zhang.
\newblock Scaled sparse linear regression.
\newblock {\em Biometrika}, 99(4):879--898, 2012.

\bibitem{bio2}
N.~Suraweera, B.~Iacopetta, A.~Duval, A.~Compoint, E.~Tubacher, and R.~Hamelin.
\newblock Conservation of mononucleotide repeats within 3' and 5' untranslated
  regions and their instability in msi-h colorectal cancer.
\newblock {\em Oncogene}, 20(51):7472--7477, 2001.

\bibitem{tibshirani1996regression}
R.~Tibshirani.
\newblock Regression shrinkage and selection via the lasso.
\newblock {\em Journal of the Royal Statistical Society. Series B
  (Methodological)}, pages 267--288, 1996.

\bibitem{lasso}
R.~Tibshirani.
\newblock Regression shrinkage and selection via the lasso.
\newblock {\em Journal of the Royal Statistical Society}, 58(1):267--288, 1996.

\bibitem{LAD}
H.~Wang, G.~Li, and G.~Jiang.
\newblock Robust regression shrinkage and consistent variable selection through
  the lad-lasso.
\newblock {\em Journal of Business \& Economic Statistics}, 25(3):347--355,
  Jul. 2007.

\bibitem{loo}
T.~Wang and Z.~Li.
\newblock Outlier detection in high-dimensional regression model.
\newblock {\em Communications in Statistics-Theory and Methods}, 2016.
\newblock Taylor \& Francis.

\bibitem{weisberg2005applied}
S.~Weisberg.
\newblock {\em Applied linear regression}, volume 528.
\newblock John Wiley \& Sons, 2005.

\bibitem{robust}
C.~Yu and W.~Yao.
\newblock Robust linear regression : a review and comparison.
\newblock {\em Communications in Statistics - Simulation and Computation},
  2016.

\bibitem{zhang2014lower}
Y.~Zhang, M.~J. Wainwright, and M.~I. Jordan.
\newblock Lower bounds on the performance of polynomial-time algorithms for
  sparse linear regression.
\newblock In {\em Proceedings of The 27th Conference on Learning Theory, {COLT}
  2014, Barcelona, Spain, June 13-15, 2014}, pages 921--948, 2014.

\bibitem{alasso}
H.~Zou.
\newblock The adaptive lasso and its oracles properties.
\newblock {\em Journal of the American Statistical Association}, 101(476),
  2006.

\end{thebibliography}

\end{document}